\documentclass{article}

\PassOptionsToPackage{numbers,compress}{natbib}
\usepackage[preprint]{neurips_2026}

\usepackage[utf8]{inputenc} % allow utf-8 input
\usepackage[T1]{fontenc}    % use 8-bit T1 fonts
\usepackage{hyperref}       % hyperlinks
\usepackage{url}            % simple URL typesetting
\usepackage{booktabs}       % professional-quality tables
\usepackage{amsfonts}       % blackboard math symbols
\usepackage{nicefrac}       % compact symbols for 1/2, etc.
\usepackage{microtype}      % microtypography
\usepackage{xcolor}         % colors
\usepackage{amsmath}
\usepackage{amssymb}
\usepackage{mathtools}
\usepackage{amsthm}
\usepackage{graphicx}
\usepackage{subcaption}

\theoremstyle{plain}
\newtheorem{theorem}{Theorem}[section]
\newtheorem{proposition}[theorem]{Proposition}

\theoremstyle{definition}

\theoremstyle{remark}

\usepackage[capitalize,noabbrev]{cleveref}
\usepackage[textsize=tiny]{todonotes}
\usepackage[dvipsnames]{xcolor}

\usepackage{enumitem}

\usepackage{tikz-cd}
\usepackage[load-configurations=version-1]{siunitx}
\usepackage{tikz, pgfplots}
\pgfplotsset{compat=1.18}
\usetikzlibrary{arrows.meta, positioning, calc, fit, backgrounds, shadows, patterns}
\usepgfplotslibrary{groupplots, statistics}

\newcommand{\set}[1]{\mathcal{#1}}
\renewcommand*{\vec}[1]{\boldsymbol{#1}}
\newcommand{\ncae}{NcAE}
\newcommand{\cae}{cAE}
\newcommand{\contextcae}{Context-cAE}
\newcommand{\contextAE}{Context-AE}
\newcommand{\filmae}{FiLM+AE}
\newcommand{\naiveAE}{AE}
\newcommand{\softncae}{SoftNcAE}

\newcommand{\id}{\operatorname{id}}

\title{Neuromodulated Constrained Autoencoders\\ for Context-Dependent Manifold Learning }

\author{
  Jérôme Adriaens\\
  University of Liège\\
  Liège, Belgium\\
  \texttt{jadriaens@uliege.be}
  \And
  Gustave Bainier\\
  University of Liège\\
  Liège, Belgium\\
  \texttt{gustave.bainier@uliege.be}
  \And
  Guillaume Drion\\
  University of Liège\\
  Liège, Belgium\\
  \texttt{gdrion@uliege.be}
  \And
  Pierre Sacré\\
  University of Liège\\
  Liège, Belgium\\
  \texttt{p.sacre@uliege.be}
}

\begin{document}
\maketitle
% ============================================================
%  ABSTRACT
%  Covers: 
% (i) limitation of standard cAEs under context shift;
% (ii) NcAE as the only conditioning strategy that preserves idempotent
% projection properties across all contexts, with formal guarantees;
% (iii) empirical validation on pendulum and Lorenz96;
% ============================================================
\begin{abstract} 
Many physical systems exhibit a low-dimensional structure that varies with external parameters: link lengths in a robot, forcing constants in a fluid, or Reynolds numbers in a flow shift the underlying manifold while preserving its intrinsic dimension. Constrained AutoEncoders (cAEs) learn such manifolds through an \emph{idempotent} encoder-decoder projection, a property that unconstrained autoencoders cannot match and that is essential whenever the model is applied iteratively. However, the standard strategies for making a cAE context-dependent, namely concatenating the context to the input or affinely modulating hidden activations, break the encoder-decoder idempotency, sacrificing the projection guarantee precisely in the setting where it would be most valuable. To restore this guarantee under context variation, we developed the Neuromodulated Constrained Autoencoder (NcAE), which modulates the activation slope and bias of a cAE through a context-driven hyper-network. This paper presents the NcAE, its theoretical foundation, and its empirical validation. We prove that for every context, including contexts unseen at training time, the reconstruction map remains an idempotent projection, the topology of the learned manifold is invariant, and context perturbations induce smooth changes in the manifold. We evaluated our approach on a 16-DoF pendulum with context-dependent coupling and the Lorenz96 system across a bifurcation. The NcAE matched or exceeded the best of six baselines on reconstruction, idempotency, and latent-geometry metrics, while being the only architecture that preserves geometric consistency by construction. The NcAE thereby provides a stable, geometry-preserving coordinate system across families of physical regimes.
\end{abstract}

\section{Introduction}
\label{sec:intro}
% context
Dimensionality reduction is foundational to representation learning for high-dimensional data. Among nonlinear methods, AutoEncoders (\naiveAE s)~\citep{wangAutoencoderBasedDimensionality2016} offer the expressive power to capture complex geometries that linear approaches such as PCA cannot. 
Building on this, Constrained AutoEncoders (\cae s)~\citep{otto_learning_2023} enforce biorthogonality between encoder and decoder weights together with mutually inverse activations, making the reconstruction map an \emph{idempotent projection} onto a smooth embedded submanifold of the data space.
Idempotency matters most when the autoencoder is applied iteratively; for instance, when integrating a dynamical system in latent space or maintaining physical consistency over long trajectories.
In this regime, an idempotent map yields one-step stability: once a state lies on the learned manifold, repeated applications leave it there, eliminating the projection drift that unconstrained \naiveAE s accumulate at every step.

% need
Many physical systems do not live on a single, static manifold. Their geometry depends on external parameters (link lengths in a robot, forcing constants in a fluid, Reynolds numbers in a flow), and these parameters can vary across regimes that share an underlying structure. 
The standard remedies---input concatenation, or affine activation modulation as in FiLM \citep{perez_film_2017} and related schemes \citep{karrasStyleBasedGeneratorArchitecture2019,almachotHFMHybridFeature2022}---both break the encoder–decoder inverse relationship that makes a cAE a projection. The geometric guarantee is lost precisely where it would be most valuable: across families of physical regimes whose long-horizon consistency depends on the projection structure.
The geometric guarantee that motivated the \cae{} is therefore lost precisely in the setting where it would be the most valuable, across families of physical regimes whose long-horizon consistency was meant to be ensured by the projection structure.
A more detailed related work is provided in \cref{sec:related-work}.

% task
We characterized the \cae{} parameters that admit context-dependent modulation without additional architectural machinery: the activation slope and the layer bias can be modulated freely while preserving the encoder–decoder idempotency, whereas modulating the biorthogonal weights would require an explicit parameterization of the biorthogonal manifold. 
Modulating the activation slope and bias through a context-driven hyper-network yielded the Neuromodulated Constrained AutoEncoder (\ncae), for which we proved three geometric guarantees that hold at every context, including those unseen at training time: idempotent projection, diffeomorphic invariance of the learned manifold across contexts, and Lipschitz stability of the learned manifold under context perturbation. 
We evaluated the \ncae{} on a 16-DoF pendulum~\citep{friedl_riemannian_2025} with explicit context-dependent coupling and on the Lorenz96 system~\citep{75462} across a bifurcation in its forcing parameter, against six baselines spanning context-free, input-concatenation, FiLM-style, and soft-constraint variants. 
The \ncae{} matched or exceeded the best baseline on every metric while preserving geometric consistency by construction.

% This paper is organized as follows.
% \cref{sec:background} recalls the \cae{} framework and its geometric guarantees. %outlines the mathematical preliminaries for \cae s.
% \cref{sec:ncae} introduces the \ncae{} and establishes its theoretical properties.
% \cref{sec:exp} reports the experimental evaluation against six baselines, including reconstruction, idempotency, latent geometry, and robustness to noise and out-of-distribution context.
% \cref{sec:discussion} discusses implications and limitations, followed by concluding remarks in \cref{sec:conclusion}.

% ============================================================
%  SECTION 2 — BACKGROUND AND PRELIMINARIES
%  Preserved from current draft, with one key addition:
%  - New paragraph in §2.2 motivating WHY idempotency matters
%    for dynamical systems (answers Reviewer EsPU):
%    idempotency ≠ perfect reconstruction; idempotency gives
%    one-step stability preventing state drift under repeated
%    application — the correct geometric substitute for a
%    bottleneck architecture.
% ============================================================
% ============================================================
%  Section 2 — Background and Preliminaries
% ============================================================

\section{Background and preliminaries}
\label{sec:background}

% This section defines the mathematical framework for standard AEs and \cae s, and discusses the geometric constraints required to transform them into formal projection operators.

\subsection{Autoencoders as nonlinear mappings}
\label{sec:classicae}

We assume the data lies on or near a low-dimensional manifold $\mathcal{M} \subset \mathbb{R}^n$ of dimension $m \ll n$.
An AE seeks an approximation $\hat{\mathcal{M}}$ of $\mathcal{M}$ together with explicit encoder and decoder maps relating the ambient space $\mathbb{R}^n$ to a latent space $\mathbb{R}^m$.
The encoder $\rho:\mathbb{R}^n \to \mathbb{R}^m$ maps each input to a latent coordinate $\vec{z} = \rho(\vec{x})$; the decoder $\varphi:\mathbb{R}^m \to \mathbb{R}^n$ maps it back to the ambient space; the composition $P \coloneqq \varphi \circ \rho :\mathbb{R}^n \to \mathbb{R}^n$ is the reconstruction map.
The encoder and the decoder are trained jointly to minimize the reconstruction loss
\begin{equation*} \label{eq:ae-loss}
    L(\rho,\varphi) = E_{\vec{x} \sim \mathcal{D}} \left[ \| \vec{x} - \varphi(\rho(\vec{x})) \|^2 \right],
\end{equation*}
where $\mathcal{D}$ is the data distribution.
%, and which penalises the discrepancy between each input and its reconstruction.
The image $\hat{\mathcal{M}} \coloneqq P(\mathbb{R}^n)$ is the \emph{learned manifold}: ideally, the data lies on or near $\hat{\mathcal{M}}$, and any state can be encoded, manipulated in latent space, and decoded back to the ambient space.

\subsection{From autoencoders to projections} \label{sec:cae}

Standard autoencoders impose no geometric structure on $P$ beyond the reconstruction loss. In particular, $P$ is not generally idempotent: $P \circ P \neq P$. This matters whenever the autoencoder is applied iteratively, as in latent-space integration of dynamical systems, where small per-step deviations from the manifold accumulate over trajectories. In a bottleneck setting ($m \ll n$), exact identity $P(\vec{x}) = \vec{x}$ is not achievable for all $\vec{x}$, since the encoder irreversibly discards information. The natural geometric goal is therefore not perfect reconstruction but \emph{idempotency}: a fixed-point condition $P \circ P = P$ guaranteeing that once a state lies on $\hat{\mathcal{M}}$, repeated application of the projection leaves it unchanged~\citep{shocherIdempotentGenerativeNetwork2023}.
The constrained autoencoder (\cae) of \citet{otto_learning_2023} achieves idempotency by enforcing the encoder as a left inverse of the decoder 
\begin{equation} \label{eq:leftinv}
    \rho \circ \varphi = \id_{\mathbb{R}^m}.
\end{equation} 
Under \eqref{eq:leftinv}, the reconstruction map satisfies
\begin{equation*}
        P \circ P = \varphi \circ (\rho \circ \varphi) \circ \rho = \varphi \circ \mathrm{id}_{\mathbb{R}^m} \circ \rho = P,
\end{equation*}
so $P$ is an idempotent projection and $\hat{\mathcal{M}} = \operatorname{Range}(P)$ is a smooth embedded submanifold of $\mathbb{R}^n$ \citep{michor_topics_2008}.

The \cae{} realizes \eqref{eq:leftinv} at the network level by composing $L$ paired layers,
\begin{equation*}
    \rho = \rho^{(1)}\circ\dots\circ\rho^{(L)}, \qquad  \varphi = \varphi^{(L)}\circ\dots\circ\varphi^{(1)},
\end{equation*}
indexed so that $\rho^{(l)}$ and $\varphi^{l)}$ form the layer pair at depth $l$. Each pair takes the form 
\begin{align*}
    \rho^{(l)}(\vec{x}^{(l)}) &= \sigma_-\! \left( \vec{\Psi}_l^T (\vec{x}^{(l)} - \beta) ;\, \alpha \right), \\
    \varphi^{(l)}(\vec{z}^{(l-1)}) &= \vec{\Phi}_l \sigma_+\! \left(\vec{z}^{(l-1)} ;\, \alpha \right) + \beta.
\end{align*}
Three ingredients together yield $\rho^{(l)} \circ \varphi^{(l)} = \mathrm{id}$ at every layer:
\begin{enumerate}[label=(A\arabic*)]
    \item \emph{Biorthogonal weights}. The encoder and decoder weight matrices satisfy $\vec{\Psi}_l^T \vec{\Phi}_l  = I$, so that the linear part of the layer pair composes to the identity on $\mathbb{R}^{n_l}$. \label{ass:biorth}
    
    \item \emph{Inverse activation pair}. The activations $\sigma_-(\,\cdot\,;\alpha)$ and $\sigma_+(\,\cdot\,;\alpha)$ are smooth and mutually inverse for every $\alpha$ in an admissible range (see \cref{sec:act-func}). Geometrically, $\alpha$ controls the nonlinearity: $\alpha$ near $0$ yields nearly linear behavior, while $\alpha$ approaching $\pi/4$ yields strong nonlinear shaping.
    \label{ass:inverse}
    
    \item \emph{Shared bias}. Both layers in the pair use the same bias $\beta$, so the affine offsets cancel under composition.
    \label{ass:bias}
    
\end{enumerate}

By composition across layers, $\rho \circ \varphi = \mathrm{id}_{\mathbb{R}^m}$, ensuring that $P = \varphi \circ \rho $ is an idempotent projection with $\hat{\mathcal{M}}$ a smooth embedded $m$-submanifold of $\mathbb{R}^n$.

% ============================================================
%  SECTION 3 — NEUROMODULATED CONSTRAINED AUTOENCODER (NcAE)
%  §3.1–3.3: preserved from current draft
%  §3.4 NEW: Theoretical guarantees surfaced into main paper
%    (stated as propositions, proofs in appendix)
%    - Prop 1: Idempotency for all c
%    - Prop 2: Topological invariance across contexts
%    - Prop 3: Hausdorff stability
% ============================================================
% ============================================================
%  Section 3 — Neuromodulated Constrained Autoencoder (NcAE)
% ============================================================

\section{Neuromodulated constrained autoencoder}
\label{sec:ncae}

% This section describes the \ncae{} by first defining the role of neuromodulation in context-dependent adaptation, then explaining the shared embedding used for scalability, and finally detailing the layer-specific parameters that govern the manifold transformations.
% This architecture enables the network to learn a family of manifolds parameterized by an external context vector $\vec{c}$ while strictly maintaining the idempotency property established in \cref{sec:background}.

\subsection{Which parameters can be modulated?}
\label{ssec:neuromod}

A \cae{} acquires its projection structure from three architectural ingredients: biorthogonal weights~\ref{ass:biorth}, an inverse activation pair~\ref{ass:inverse}, and a shared layer bias~\ref{ass:bias}.
Together, these guarantee that $\rho \circ \varphi = \id_{\mathbb{R}^m}$ and hence that $P = \varphi \circ \rho$ is an idempotent projection. 
To make the \cae{} context-dependent, one would like to replace some of these parameters with functions of an external context vector $\vec{c}$. The question is which replacements preserve the inverse relation $\rho_{\vec{c}} \circ \varphi_{\vec{c}} = \id_{\mathbb{R}^m}$ for every~$\vec{c}$.

In the nervous system, \emph{neuromodulation} is the process by which diffuse signals adjust the response properties of neurons (gain, threshold, time constant) without altering the underlying connectivity~\citep{bargmann_connectome_2013}.
This separation between fixed structure and adjustable response has informed several recent ML approaches to context-dependent computation \citep{vecoven_introducing_2020,meiInformingDeepNeural2022}.
For a \cae{}, this suggests modulating \emph{response parameters} (the activation slope $\alpha$ and the bias $\beta$) while leaving the \emph{structural parameters} (the biorthogonal weights) fixed. 
Under this choice, the encoder-decoder idempotency holds for every $\vec{c}$, and the resulting reconstruction map $P_{\vec{c}}$ remains an idempotent projection (\cref{prop:idempotency}). Modulating the biorthogonal weights $\vec{\Psi}_l, \vec{\Phi}_l$ would also preserve idempotency, but only if the hyper-network output remained on the biorthogonal manifold for every $\vec{c}$, requiring an explicit parameterization of that manifold. We do not pursue this extension here.
Concatenating $\vec{c}$ to the input makes the projection a function of $(\vec{x},\vec{c})$ jointly, breaking idempotency in the original state space; affine modulation of hidden activations (FiLM \citep{perez_film_2017}, AdaIN \citep{karrasStyleBasedGeneratorArchitecture2019}) destroys the inverse pair $(\sigma_-,\sigma_+)$. Modulating $(\alpha,\beta)$ bypasses both failure modes while keeping the hyper-network unconstrained.

\begin{figure}
    \centering
    \begin{tikzpicture}[scale=1.0]
    \def\nmdcolor{orange}
    \def\commonnmdcolor{purple}
    \def\enccolor{teal}
    \def\deccolor{violet}
    \tikzset{
        my shadow/.style={#1, copy shadow={opacity=0.3, #1}},
        common_nmd/.style={label={[font=\small, shift=(nmd_box.north west), \commonnmdcolor] above right:Common neuromodulation}},
        spec_nmd/.style={label={[font=\small, shift=(nmd_box_local.north east), \nmdcolor, align=left]above left:Layer specific\\ neuromodulation}}
    }
    % Main nodes
    \node (context) at (-2,0) {$\vec{c}$};
    \node[draw, rectangle, fill=gray!10, right=of context, align=center, minimum width=1cm] (nmd_fc) {FC\\ NN};
    \node[right=of nmd_fc] (nmd_signal) {$\vec{s}$};
    \node[below right=.8cm and 0.0cm of nmd_signal] (layer_nmd) {$\vec{\alpha}^{(l)} = g(\vec{W}^T_{l,\alpha} \vec{s})$};
    % Add a second node similar to layer_nmd, touching it on the right side
    \node[right=8pt of layer_nmd, anchor=west] (layer_nmd2) {$\vec{\beta}^{(l)} = \vec{W}^T_{l,\beta} \vec{s} + \vec{b}_l$};
    % Optionally, draw a connection or highlight the touching sides
    % \draw[thick] (layer_nmd.east) -- (layer_nmd2.west);
    \begin{scope}[on background layer]
        \node[fit=(context)(nmd_fc)(nmd_signal), draw, dashed, thick, inner sep=4pt,\commonnmdcolor,fill = \commonnmdcolor!10,{common_nmd}] (nmd_box) { };
        \node[copy shadow={opacity=0.3}, fit=(layer_nmd)(layer_nmd2), draw, dashed, thick, inner sep=4pt,\nmdcolor,fill=\nmdcolor!10,{spec_nmd}] (nmd_box_local) {};
    \end{scope}
    % Draw a dashed line from top to bottom of nmd_box_local at the midpoint of layer_nmd and layer_nmd2
    \draw[dashed, thick, \nmdcolor]
        ({$(layer_nmd.east)!0.5!(layer_nmd2.west)$} |- nmd_box_local.south) -- 
        ({$(layer_nmd.east)!0.5!(layer_nmd2.west)$} |- nmd_box_local.north);

    \node[below=of {$(layer_nmd.east)!0.5!(layer_nmd2.west)$} |- nmd_box_local.south ] (blank_nmd) {};

    \draw[->, thick] (context) -- (nmd_fc);
    \draw[->, thick] (nmd_fc) -- (nmd_signal);
    \draw[thick, ->] (nmd_signal) -| (nmd_box_local);

    \node[below =1cm of blank_nmd] (blank) {$\vec{x}^{(0)}=\vec{z}^{(0)}$};
    \node[left=of blank] (ll_dots) {$\cdots$};
    \node[right=of blank] (rr_dots) {$\cdots$};
    \node[right=of rr_dots] (r_dots) {$\cdots$};
    \node[right=of r_dots] (output) {$\vec{z}^{(L)} = \vec{z}$};
    \node[left=of ll_dots] (l_dots) {$\cdots$};

    \draw[->, thick] (l_dots) -- node[above] (rho_l) {$\rho^{(l)}$} (ll_dots);
    \draw[->, thick] node[left=of l_dots] (input) {$\vec{x} = \vec{x}^{(L)}$} (input) -- node[above] (rho_L) {$\rho^{(L)}$}(l_dots);
    \draw[->, thick] (ll_dots) -- node[above] (rho_0) {$\rho^{(1)}$} (blank);

    \draw[->, thick] (rr_dots) -- node[above] (varphi_l) {$\varphi^{(l)}$} (r_dots) ;
    \draw[->, thick] (r_dots) -- node[above] (varphi_L) {$\varphi^{(L)}$} (output);
    \draw[->, thick] (blank) -- node[above] (varphi_0) {$\varphi^{(1)}$} (rr_dots);

    \node[below=0.3cm of {$(l_dots)!0.5!(ll_dots)$}, \enccolor] {Encoder}; 
    \node[below=0.3cm of {$(rr_dots)!0.5!(r_dots)$}, \deccolor] {Decoder};

    \node[dashed, above=.7cm of rho_L, minimum height=1.cm, draw=\enccolor, fill=\enccolor!10, my shadow] (enc_nmd) 
    {$\sigma_-\left(\vec{\Psi}^T_l(\vec{x}^{(l)} - {\color{\nmdcolor}\vec{b}^{(l)}}); {\color{\nmdcolor}\vec{\alpha}^{(l)}}\right)$};
    \node[dashed, above=.7cm of varphi_L, minimum height=1.cm, draw=\deccolor, fill=\deccolor!10, my shadow] (dec_nmd) 
    {$\vec{\Phi}_l\sigma_+\left((\vec{z}^{(l-1)}); {\color{\nmdcolor}\vec{\alpha}^{(l)}}\right) + {\color{\nmdcolor}\vec{\beta}^{(l)}}$};
    % Draw an arrow from layer_nmd to a point between enc_nmd.north and enc_nmd.north east
    \draw[->, thick, \nmdcolor] (nmd_box_local) -| ($(enc_nmd.north)!0.0!(enc_nmd.north east)$);
    \draw[->, thick, \nmdcolor] (nmd_box_local) -| ($(dec_nmd.north)!0.0!(dec_nmd.north east)$);
    
    \draw[\enccolor] (rho_l.north) -- (enc_nmd.south west);
    \draw[\enccolor] (rho_l.north) -- (enc_nmd.south east);
    \draw[\deccolor] (varphi_l.north) -- (dec_nmd.south west);
    \draw[\deccolor] (varphi_l.north) -- (dec_nmd.south east);
    \begin{scope}[on background layer]
        \draw[\enccolor, opacity=0.1] (rho_L.north) -- ($(enc_nmd.south west) + (.5ex, .5ex)$);
        \draw[\enccolor, opacity=0.1] (rho_L.north) -- ($(enc_nmd.south east) + (.5ex, .5ex)$);
        \draw[\deccolor, opacity=0.1] (varphi_L.north) -- ($(dec_nmd.south west) + (.5ex, .5ex)$);
        \draw[\deccolor, opacity=0.1] (varphi_L.north) -- ($(dec_nmd.south east) + (.5ex, .5ex)$);
        \draw[\enccolor, opacity=0.1] (rho_0.north) -- ($(enc_nmd.south west) + (.5ex, .5ex)$);
        \draw[\enccolor, opacity=0.1] (rho_0.north) -- ($(enc_nmd.south east) + (.5ex, .5ex)$);
        \draw[\deccolor, opacity=0.1] (varphi_0.north) -- ($(dec_nmd.south west) + (.5ex, .5ex)$);
        \draw[\deccolor, opacity=0.1] (varphi_0.north) -- ($(dec_nmd.south east) + (.5ex, .5ex)$);
        % Draw a dashed line through the equal sign between x^{(0)} and z^{(0)}, connecting to encoder and decoder
        \draw[dashed, opacity=0.5] 
            ($(blank.north) + (0.0,1.0)$) -- 
            ($(blank.south) + (0.0,-0.7)$);
    \end{scope}
\end{tikzpicture}
    \caption{
    Neuromodulation dynamically reconfigures the autoencoder manifold by parameterizing activation functions and biases based on external context.
    This schematic illustrates the integration of the context vector $\vec{c}$, which a fully connected MLP processes to generate the modulation signal $\vec{s}$. This signal is then multiplied by a layer-specific weight matrix $\vec{W}_{l, \alpha}$ and passed through the saturating function $g$ to compute the activation parameters $\vec{\alpha}^{(l)}$. Simultaneously, $\vec{s}$ passes through a linear layer defined by parameters $\vec{W}_{l, \beta}$ and $\vec{b}_l$ to determine the layer-wise bias $\vec{\beta}^{(l)}$.
    }
    \label{fig:neuromodulation}
\end{figure}

\subsection{From context to layer parameters}

To realize the modulation of $(\alpha,\beta)$ with a small number of parameters, we generate $(\vec{\alpha}^{(l)},\vec{\beta}^{(l)})$ from $\vec{c}$ through a two-stage hyper network. A scheme of the full architecture is depicted in \cref{fig:neuromodulation}.
The first stage is shared across all layer pairs: a fully connected network maps $\vec{c}$ to a low-dimensional embedding $\vec{s} = f_\text{nmd}(\vec{c};\theta) \in \mathbb{R}^d$, where $d$ is small relative to the total dimension of $(\vec{\alpha}^{(l)},\vec{\beta}^{(l)})$ over all layers. 
This shared embedding serves as a global neuromodulatory signal that captures the context-dependent information once and distributes it across layers.
Beyond reducing parameter count, this bottleneck regularizes the modulation: it prevents per-layer overfitting to specific context values and encourages a coherent context representation across the network.
The second stage maps $\vec{s}$ to layer-specific parameters.
For each layer pair $l$, we set
\begin{align*}
    \vec{\alpha}^{(l)} &= g(\vec{W}^T_{l,\alpha} \vec{s}),\\
    \vec{\beta}^{(l)} &= \vec{W}^T_{l,\beta} \vec{s} + \vec{b}_l,
\end{align*}
where $\vec{W}_{l,\alpha}$, $\vec{W}_{l,\beta}$, and $\vec{b}_l$ are trainable layer-specific weights and biases.
The saturating sigmoid $g$ in the $\vec{\alpha}^{(l)}$ expression keeps the slope inside the admissible interval $[\alpha_\text{min},\alpha_\text{max}]$ for every $\vec{c}$ such that
\begin{equation*}
    \vec{\alpha}^{(l)} = (\alpha_{\max}-\alpha_{\min})\cdot\textsc{sigmoid}(\vec{W}_{l, \alpha}^T \vec{s}) + \alpha_{\min},
\end{equation*}
which ensures the inverse activation property holds globally. 
The $\vec{\beta}^{(l)}$ expression is unconstrained. 

% The two modulation channels play geometrically distinct roles. 
% Bias modulation through $\vec{\beta}^{(l)}$ translates the learned manifold in the ambient space, shifting the projection center as the context varies; this captures rigid relocations of $\hat{\mathcal{M}}_{\vec{c}}$.
% Slope modulation through $\vec{\alpha}^{(l)}$ deforms the manifold by stretching or compressing its internal coordinate structure; this captures the nonlinear geometric changes that occur across different dynamical regimes. 
% Although each modulation type alone can tune the manifold individually, we show empirically in \cref{ssec:ablation} that both channels are necessary.

\subsection{Theoretical guarantees}
\label{ssec:theory}

The neuromodulation mechanism preserves the \cae{} geometric structure for \emph{all} values of $c$, not just those seen during training. We state three results here; full proofs are in \cref{sec:appendix-theory}.

The propositions rely on the assumptions \ref{ass:biorth}–\ref{ass:bias} inherited from the \cae{} construction (\cref{sec:cae}) together with one assumption specific to the modulated setting:
\begin{enumerate}[label=(A\arabic*)]
    \setcounter{enumi}{3}
    \item \emph{Smoothness of the modulation}. $\sigma_\pm$ is smooth in $z$ and $\alpha$, and the hyper-network $\vec{c} \mapsto ({\vec\alpha}^{(l)}(\vec{c}), \vec{\beta}^{(l)}(\vec{c}))$ is smooth. \label{ass:smooth}
\end{enumerate}

Assumption \ref{ass:smooth} follows from the analytic form of $\sigma_\pm$  and from the smooth activations of the hyper-network. 
Throughout, we write $\hat{\mathcal{M}}_{\vec{c}} \coloneqq \operatorname{Range}(P_{\vec{c}})$ for the manifold learned at context~$\vec{c}$.

\begin{proposition}[Idempotency]
\label[proposition]{prop:idempotency}
Under \ref{ass:biorth}--\ref{ass:bias}, $P_{\vec{c}} = \varphi_{\vec{c}} \circ \rho_{\vec{c}}$ is an idempotent projection for every
context $\vec{c}$, i.e., $P_{\vec{c}} \circ P_{\vec{c}} = P_{\vec{c}}$.
% Combined with \ref{ass:smooth}, $\hat{\mathcal{M}}_{\vec{c}}$ is a smooth embedded $m$-submanifold of $\mathbb{R}^n$. 
\end{proposition}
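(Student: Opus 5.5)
The plan is to fix an arbitrary context $\vec{c}$ and reduce the claim to the unmodulated \cae{} argument of \cref{sec:cae}, applied layer by layer. For fixed $\vec{c}$, the hyper-network outputs are constant vectors $\vec{\alpha}^{(l)}(\vec{c})$ and $\vec{\beta}^{(l)}(\vec{c})$. The sigmoid parameterization of $\vec{\alpha}^{(l)}$ places every component in $[\alpha_{\min},\alpha_{\max}]$, whatever the value of $\vec{s}=f_\text{nmd}(\vec{c};\theta)$. This holds for every $\vec{c}$, including those unseen in training, because the constraint is enforced architecturally rather than learned. Consequently the layer pair at depth $l$ is an ordinary \cae{} layer pair:
\begin{equation*}
\rho^{(l)}_{\vec{c}}(\vec{x}) = \sigma_-\!\left(\vec{\Psi}_l^T(\vec{x}-\vec{\beta}^{(l)}(\vec{c}));\vec{\alpha}^{(l)}(\vec{c})\right),\qquad
\varphi^{(l)}_{\vec{c}}(\vec{z}) = \vec{\Phi}_l\,\sigma_+\!\left(\vec{z};\vec{\alpha}^{(l)}(\vec{c})\right)+\vec{\beta}^{(l)}(\vec{c}).
\end{equation*}
Here the same $\vec{\alpha}^{(l)}(\vec{c})$ and $\vec{\beta}^{(l)}(\vec{c})$ appear in both maps of the pair.

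The key step is to verify that $\rho^{(l)}_{\vec{c}}\circ\varphi^{(l)}_{\vec{c}}=\id$ for every $l$. Substituting, the shared bias cancels by \ref{ass:bias}: the argument of $\sigma_-$ becomes $\vec{\Psi}_l^T\vec{\Phi}_l\,\sigma_+(\vec{z};\vec{\alpha}^{(l)})$. This equals $\sigma_+(\vec{z};\vec{\alpha}^{(l)})$ by biorthogonality \ref{ass:biorth}. Then $\sigma_-(\sigma_+(\vec{z};\vec{\alpha}^{(l)});\vec{\alpha}^{(l)})=\vec{z}$ by \ref{ass:inverse}, since the slope lies in the admissible range. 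When $\vec{\alpha}^{(l)}$ is a vector, I take the activation to act componentwise, each coordinate using its own slope. The inverse property then holds coordinatewise, because both maps read the same slope vector.

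Next I compose across depths using the paper's ordering, $\rho_{\vec{c}}=\rho^{(1)}_{\vec{c}}\circ\cdots\circ\rho^{(L)}_{\vec{c}}$ and $\varphi_{\vec{c}}=\varphi^{(L)}_{\vec{c}}\circ\cdots\circ\varphi^{(1)}_{\vec{c}}$. In $\rho_{\vec{c}}\circ\varphi_{\vec{c}}$ the innermost adjacent pair is $\rho^{(L)}_{\vec{c}}\circ\varphi^{(L)}_{\vec{c}}=\id$. Telescoping outward by induction on the number of remaining pairs gives $\rho_{\vec{c}}\circ\varphi_{\vec{c}}=\id_{\mathbb{R}^m}$. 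Idempotency then follows exactly as in \cref{sec:cae}:
\begin{equation*}
P_{\vec{c}}\circ P_{\vec{c}}=\varphi_{\vec{c}}\circ(\rho_{\vec{c}}\circ\varphi_{\vec{c}})\circ\rho_{\vec{c}}=P_{\vec{c}}.
\end{equation*}

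The argument needs only \ref{ass:biorth}--\ref{ass:bias}, not \ref{ass:smooth}, so the obstacle is bookkeeping rather than analysis. One point to check is that the modulation is genuinely shared within each pair: the encoder figure writes the bias as $\vec{b}^{(l)}$ and the decoder as $\vec{\beta}^{(l)}$, and I would read both as the same $\vec{\beta}^{(l)}(\vec{c})$. The other is that the admissible slope range from \cref{sec:act-func} contains $[\alpha_{\min},\alpha_{\max}]$, so the inverse relation holds at every slope the sigmoid can output.
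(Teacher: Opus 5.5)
Your proposal is correct and follows the paper's proof essentially step for step. Like the paper, you reduce the claim to $\rho_{\vec{c}}\circ\varphi_{\vec{c}}=\id_{\mathbb{R}^m}$, show each layer pair inverts via shared-bias cancellation, biorthogonality and the inverse activation pair (with the sigmoid keeping $\vec{\alpha}^{(l)}$ admissible), then telescope across depths to get $P_{\vec{c}}\circ P_{\vec{c}}=P_{\vec{c}}$; your requirement that the slopes lie in the open interval $(0,\pi/4)$ is slightly more careful than the paper's appeal to the closed interval $[0,\pi/4]$.
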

Idempotency holds at every context, including contexts unseen during training.
Once a state lies on $\hat{\mathcal{M}}_{\vec{c}}$, repeated application of $P_{\vec{c}}$ leaves it there; the operator does not accumulate drift over iterates
This one-step stability underlies the long-horizon idempotency experiments of \cref{ssec:idempotency}.

\begin{proposition}[Diffeomorphic invariance of the learned manifold]
\label[proposition]{prop:topo}
Under \ref{ass:biorth}--\ref{ass:smooth}, for any two contexts $\vec{c}_1$ and $\vec{c}_2$, the map $\varphi_{\vec{c}_2} \circ \rho_{\vec{c}_1}:\mathcal{\hat M}_{\vec{c}_1} \to \mathcal{\hat M}_{\vec{c}_2}$ defines a diffeomorphism from $\mathcal{\hat M}_{\vec{c}_1}$ to $\mathcal{\hat M}_{\vec{c}_2}$, of inverse $(\varphi_{\vec{c}_2} \circ \rho_{\vec{c}_1})^{-1} = \varphi_{\vec{c}_1} \circ \rho_{\vec{c}_2} : \mathcal{\hat M}_{\vec{c}_2} \to \mathcal{\hat M}_{\vec{c}_1}$. In particular, $\hat{\set{M}}_{\vec{c}_1}$ and $\hat{\set{M}}_{\vec{c}_2}$ are topologically equivalent for all context pairs. 
\end{proposition}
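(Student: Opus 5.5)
The plan is to first record the key structural fact and then build everything from it. By \cref{prop:idempotency}, every context satisfies $\rho_{\vec{c}} \circ \varphi_{\vec{c}} = \id_{\mathbb{R}^m}$. Consequently $\varphi_{\vec{c}}:\mathbb{R}^m \to \hat{\set{M}}_{\vec{c}}$ is a bijection onto $\hat{\set{M}}_{\vec{c}} = \operatorname{Range}(P_{\vec{c}}) = \varphi_{\vec{c}}(\mathbb{R}^m)$. The equality of ranges holds because $\rho_{\vec{c}}$ is surjective, being a left inverse. The restriction $\rho_{\vec{c}}|_{\hat{\set{M}}_{\vec{c}}}$ is then its two-sided inverse: for $\vec{y} = \varphi_{\vec{c}}(\vec{z})$ we have $\varphi_{\vec{c}}(\rho_{\vec{c}}(\vec{y})) = \varphi_{\vec{c}}(\vec{z}) = \vec{y}$. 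So each learned manifold is globally parameterized by the same latent space $\mathbb{R}^m$ through a chart $\rho_{\vec{c}}|_{\hat{\set{M}}_{\vec{c}}}$. The whole proposition then reduces to composing two such global charts.

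Next I would verify the composition identities. For $\vec{y} \in \hat{\set{M}}_{\vec{c}_1}$, the map $\varphi_{\vec{c}_2}\circ\rho_{\vec{c}_1}$ sends $\vec{y}$ into $\varphi_{\vec{c}_2}(\mathbb{R}^m) = \hat{\set{M}}_{\vec{c}_2}$, so it is well defined as a map between the manifolds. The claimed inverse is checked by the computation
\begin{equation*}
(\varphi_{\vec{c}_1}\circ\rho_{\vec{c}_2})\circ(\varphi_{\vec{c}_2}\circ\rho_{\vec{c}_1})(\vec{y}) = \varphi_{\vec{c}_1}\circ\rho_{\vec{c}_1}(\vec{y}) = P_{\vec{c}_1}(\vec{y}) = \vec{y}.
\end{equation*}
This uses $\rho_{\vec{c}_2}\circ\varphi_{\vec{c}_2} = \id$ and the fact that $P_{\vec{c}_1}$ fixes its range, which follows from idempotency. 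The symmetric computation gives the other composition.

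For smoothness I would use the chain rule. Under \ref{ass:smooth}, each layer map is smooth in its input once $\vec{\alpha}^{(l)}(\vec{c}), \vec{\beta}^{(l)}(\vec{c})$ are fixed, because the weights are fixed matrices and $\sigma_\pm(\cdot;\alpha)$ is smooth. Hence $\rho_{\vec{c}}$ and $\varphi_{\vec{c}}$ are smooth maps between Euclidean spaces. The restriction of the smooth ambient map $\varphi_{\vec{c}_2}\circ\rho_{\vec{c}_1}$ to the embedded submanifold $\hat{\set{M}}_{\vec{c}_1}$ is smooth into $\mathbb{R}^n$. Its image lies in the embedded submanifold $\hat{\set{M}}_{\vec{c}_2}$, so it is smooth as a map into $\hat{\set{M}}_{\vec{c}_2}$. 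The same argument applies to the inverse, so the map is a diffeomorphism, and homeomorphism (topological equivalence) follows immediately.

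The main obstacle is justifying that each $\hat{\set{M}}_{\vec{c}}$ is genuinely a smooth embedded submanifold, rather than just the image of a smooth injection. This is what makes the restricted maps and the smoothness claims meaningful. My plan is to show $\varphi_{\vec{c}}$ is a smooth embedding. It is an immersion, since differentiating $\rho_{\vec{c}}\circ\varphi_{\vec{c}} = \id$ gives $D\rho_{\vec{c}}\,D\varphi_{\vec{c}} = I_m$, so $D\varphi_{\vec{c}}$ is injective. It is a homeomorphism onto its image, since its inverse is the restriction of the continuous map $\rho_{\vec{c}}$. Alternatively, one could invoke the retract result of \citet{michor_topics_2008} already cited for the unmodulated \cae{}, applied at each fixed context.
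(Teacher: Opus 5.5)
Your proposal is correct and follows the same route as the paper. Both arguments show that the two composite maps are mutually inverse, using $\rho_{\vec{c}_2}\circ\varphi_{\vec{c}_2}=\id_{\mathbb{R}^m}$ and the fact that $P_{\vec{c}_1}$ restricts to the identity on $\hat{\set{M}}_{\vec{c}_1}$. You also supply details the paper takes for granted: that the maps land in the correct manifold, that $\hat{\set{M}}_{\vec{c}}$ is embedded because $\varphi_{\vec{c}}$ is an injective immersion with continuous inverse $\rho_{\vec{c}}|_{\hat{\set{M}}_{\vec{c}}}$, and that the restricted maps are smooth.
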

This proposition states that the learned manifold has the same intrinsic structure across all contexts: its dimension, topology, and smooth structure are preserved as $c$ varies. Whatever context the \ncae{} is presented with (including a context unseen during training), the recovered $\hat{\mathcal{M}}_{\vec{c}}$ is the same manifold up to smooth deformation, never a manifold of a different shape or topology. This is the correct inductive bias for physical systems whose state space evolves with parameters but whose intrinsic dimensionality is fixed.

\begin{proposition}[Lipschitz stability under context perturbation]
\label[proposition]{prop:hausdorff}
Fix a reference context $\vec{c}$, a compact set $K \subset \mathbb{R}^n$, and a maximal perturbation size $p > 0$. Let $\mathcal{P} \triangleq \{\delta \vec{c} : \lVert \delta \vec{c} \rVert \leq p\}$ denote the set of context perturbations.
Under \ref{ass:biorth}--\ref{ass:smooth}, there exists a Lipschitz constant $L = L(K, \vec{c}, p) > 0$ such that for all $\delta\vec{c}\in \mathcal{P}$:
\begin{equation*}
    d_H\!\left(\hat{\set{M}}_{\vec{c}} \cap K,\; \hat{\set{M}}_{\vec{c}+\delta\vec{c}} \cap K\right) \leq L\,\|\delta\vec{c}\|,
\end{equation*}
where $d_H$ denotes the Hausdorff distance in $\mathbb{R}^n$.
\end{proposition}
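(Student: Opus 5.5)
The plan is to compare the two manifolds pointwise, using a map that is jointly smooth in the state and the context. I then take a maximum over a compact set to get a single Lipschitz constant.

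By \cref{prop:idempotency}, every $\vec{x}\in\hat{\set{M}}_{\vec{c}}$ satisfies $\vec{x}=P_{\vec{c}}(\vec{x})$. The natural candidate partner of $\vec{x}$ in $\hat{\set{M}}_{\vec{c}+\delta\vec{c}}$ is $T_{\delta\vec{c}}(\vec{x}) \coloneqq P_{\vec{c}+\delta\vec{c}}(\vec{x})$, which agrees on the manifold with the diffeomorphism $\varphi_{\vec{c}+\delta\vec{c}}\circ\rho_{\vec{c}}$ of \cref{prop:topo}. The symmetric partner of a point $\vec{y}\in\hat{\set{M}}_{\vec{c}+\delta\vec{c}}$ is $P_{\vec{c}}(\vec{y})$.

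\textbf{Step 1: joint smoothness.} Under \ref{ass:biorth}--\ref{ass:smooth}, the layer maps $\rho^{(l)},\varphi^{(l)}$ are compositions of linear maps, additions of $\vec{\beta}^{(l)}(\vec{c})$, and $\sigma_\pm(\cdot;\vec{\alpha}^{(l)}(\vec{c}))$. Hence $F(\vec{x},\vec{c}')\coloneqq P_{\vec{c}'}(\vec{x})$ is $C^1$ on $\mathbb{R}^n\times\mathbb{R}^{\dim\vec{c}}$. On the compact set $K\times \bar B(\vec{c},p)$ I set
\begin{equation*}
L \coloneqq \max_{(\vec{x},\vec{c}')\in K\times\bar B(\vec{c},p)} \lVert \partial_{\vec{c}'}F(\vec{x},\vec{c}')\rVert .
\end{equation*}
This maximum is finite by continuity. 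I replace it by $\max(L,1)$ so that $L>0$.

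\textbf{Step 2: the pointwise bound.} Because $\bar B(\vec{c},p)$ is convex, the mean value inequality applies along the segment from $\vec{c}$ to $\vec{c}+\delta\vec{c}$. For $\vec{x}\in\hat{\set{M}}_{\vec{c}}\cap K$ it gives
\begin{equation*}
\lVert T_{\delta\vec{c}}(\vec{x})-\vec{x}\rVert=\lVert F(\vec{x},\vec{c}+\delta\vec{c})-F(\vec{x},\vec{c})\rVert\le L\lVert\delta\vec{c}\rVert .
\end{equation*}
The same argument with the roles of the two contexts exchanged bounds $\lVert P_{\vec{c}}(\vec{y})-\vec{y}\rVert$ for $\vec{y}\in\hat{\set{M}}_{\vec{c}+\delta\vec{c}}\cap K$. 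This uses the idempotency $\vec{y}=P_{\vec{c}+\delta\vec{c}}(\vec{y})$ from \cref{prop:idempotency} applied at context $\vec{c}+\delta\vec{c}$. Taking the supremum over both directions yields the Hausdorff estimate, provided each partner point lies in $K$.

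\textbf{Step 3 (main obstacle): the partner may leave $K$.} Hausdorff distance between the intersections requires the partner to lie in $\hat{\set{M}}_{\vec{c}+\delta\vec{c}}\cap K$, not merely in $\hat{\set{M}}_{\vec{c}+\delta\vec{c}}$. Near $\partial K$, the point $T_{\delta\vec{c}}(\vec{x})$ can exit $K$. In the worst case one of the intersections could even be empty. This is exactly the gap the one-sided, localized argument leaves open.

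I would first try to close it by a compactness-plus-separation argument. For $\lVert\delta\vec{c}\rVert$ small relative to the distance from the relevant points to $\partial K$, the partner stays in $K$. For the remaining perturbations, where $\lVert\delta\vec{c}\rVert$ is bounded below, both sets lie in $K$. Their Hausdorff distance is then at most $\operatorname{diam}K$ whenever both are nonempty, so it can be absorbed by enlarging $L$ to $\operatorname{diam}K/\lVert\delta\vec{c}\rVert$.

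This fallback still needs two things I have not secured. First, both intersections must be nonempty for every $\delta\vec{c}\in\mathcal{P}$. Second, the small-perturbation regime near $\partial K$ must be handled uniformly; there a boundary point can lose its partner at an arbitrarily small perturbation.

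If these fail, the honest resolution is to add a hypothesis making $K$ compatible with the family. One option is to assume $K$ is invariant under the maps $\varphi_{\vec{c}'}\circ\rho_{\vec{c}}$ for $\lVert\vec{c}'-\vec{c}\rVert\le p$. Another is to take $K=\varphi_{\vec{c}'}(Z)$ with a fixed compact latent set $Z$, parameterizing both manifolds by the same latent points. Under either hypothesis, Step 2 gives the stated bound verbatim.
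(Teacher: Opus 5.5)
\textbf{Step~2 versus the paper.} Your Step~2 is correct, and it proves exactly what the paper's argument proves. The mechanism is the same (mean value inequality in the context variable, with the constant coming from compactness), but the partner points differ.

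The paper pairs $\vec{x}=\varphi_{\vec{c}}(u)$ with $\varphi_{\vec{c}+\delta\vec{c}}(u)$, i.e.\ it uses the diffeomorphism of \cref{prop:topo} and its inverse. It therefore has to bound $\partial_{\vec{c}}\varphi$ over latent points, and for the reverse direction it needs the uniform-properness lemma giving a compact $V\supset\varphi_{\vec{c}+\delta\vec{c}}^{-1}(K)$. That lemma's stated justification (slicewise properness plus compactness of $\mathcal{P}$) is not sufficient in general, though it can be repaired here from uniform bi-Lipschitz bounds on $\sigma_\pm$ for $\vec{\alpha}\in[\alpha_{\min},\alpha_{\max}]$.

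Your partners $P_{\vec{c}+\delta\vec{c}}(\vec{x})$ and $P_{\vec{c}}(\vec{y})$ keep the moving argument inside $K$. Compactness of $K\times\bar B(\vec{c},p)$ then suffices, and that lemma disappears. In exchange, the paper's choice gives a displacement bound for the \cref{prop:topo} diffeomorphism itself.

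Your aside that the two partner maps agree on $\hat{\mathcal{M}}_{\vec{c}}$ is false. It would need $\rho_{\vec{c}+\delta\vec{c}}\circ\varphi_{\vec{c}}=\operatorname{id}$, which already for one layer pair requires $\vec{\alpha}^{(1)}$ and $\vec{\Psi}_1^T\vec{\beta}^{(1)}$ to be unchanged. Step~2 never uses this aside, but your fix (a) implicitly does.

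\textbf{Step~3: the gap is real and cannot be closed.} Your objection is correct, and the paper's proof does not overcome it either. In \eqref{eq:direction-1} and \eqref{eq:direction-2} the infima run over $\hat{\mathcal{M}}_{\vec{c}+\delta\vec{c}}$ and $\hat{\mathcal{M}}_{\vec{c}}$, not over their intersections with $K$. The closing appeal to the definition of $d_H$ silently swaps one for the other. Both arguments therefore establish only
\[
\max\Bigl\{\sup_{\vec{x}\in\hat{\mathcal{M}}_{\vec{c}}\cap K}\operatorname{dist}\bigl(\vec{x},\hat{\mathcal{M}}_{\vec{c}+\delta\vec{c}}\bigr),\ \sup_{\vec{y}\in\hat{\mathcal{M}}_{\vec{c}+\delta\vec{c}}\cap K}\operatorname{dist}\bigl(\vec{y},\hat{\mathcal{M}}_{\vec{c}}\bigr)\Bigr\}\le L\lVert\delta\vec{c}\rVert .
\]
The statement as literally written is false, so your fallback cannot be completed. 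Here is a counterexample:
\begin{itemize}
\item Take one layer pair with $n=2$, $m=1$, $\vec{\Phi}_1=\vec{\Psi}_1=(1,0)^T$, constant $\vec{\alpha}^{(1)}$, a scalar context, and $\vec{\beta}^{(1)}(c)=(0,c)^T$. Assumptions (A1)--(A4) hold, and since $\sigma_+$ is a bijection of $\mathbb{R}$, $\hat{\mathcal{M}}_c=\{x_2=c\}$.
\item Let $K$ be the closed unit disk centred at $(0,-1)$, with reference context $c=0$. Then $\hat{\mathcal{M}}_0\cap K=\{(0,0)\}$.
\item For $\delta c=h>0$, the set $\hat{\mathcal{M}}_h\cap K$ is empty.
\item For $\delta c=-h$, it is the chord $\{(x_1,-h):x_1^2\le 2h-h^2\}$, whose endpoints lie at distance $\sqrt{2h}$ from the origin. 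So $d_H=\sqrt{2\lvert\delta c\rvert}>L\lvert\delta c\rvert$ whenever $\lvert\delta c\rvert<2/L^2$, while the truncated quantity above equals $h$.
\end{itemize}
These are exactly the two failures you anticipated: an empty intersection, and a tangency between $\partial K$ and $\hat{\mathcal{M}}_{\vec{c}}$ that destroys uniform control at small $\lVert\delta\vec{c}\rVert$. The right conclusion is to state the truncated bound as the result, or to add a hypothesis.

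\textbf{Your proposed hypotheses.} Hypothesis (a) as written constrains $\varphi_{\vec{c}'}\circ\rho_{\vec{c}}$, not your partner maps $P_{\vec{c}'}$, so Step~2 does not go through verbatim under it. A clean sufficient condition is that $\varphi_{\vec{c}'}\circ\rho_{\vec{c}}$ maps $\hat{\mathcal{M}}_{\vec{c}}\cap K$ onto $\hat{\mathcal{M}}_{\vec{c}'}\cap K$ for every $\lVert\vec{c}'-\vec{c}\rVert\le p$. You would then use the paper's same-latent partners, whose latent points all lie in the compact set $\rho_{\vec{c}}(K)$.

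Hypothesis (b) is ill-posed as phrased, since $\varphi_{\vec{c}'}(Z)$ moves with $\vec{c}'$ and cannot be a fixed $K$. Its correct form replaces the intersections by latent patches:
\[
d_H\bigl(\varphi_{\vec{c}}(Z),\varphi_{\vec{c}+\delta\vec{c}}(Z)\bigr)\le\sup_{z\in Z}\lVert\varphi_{\vec{c}}(z)-\varphi_{\vec{c}+\delta\vec{c}}(z)\rVert\le L_Z\lVert\delta\vec{c}\rVert ,
\]
with $L_Z$ the maximum of $\lVert\partial_{\vec{c}}\varphi\rVert$ over $Z\times\bar B(\vec{c},p)$. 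This needs no extra assumption.
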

Whereas \cref{prop:topo}  establishes that the family $\hat{\mathcal{M}}_{\vec{c}}$ consists of diffeomorphic manifolds being topologically invariant, \cref{prop:hausdorff} establishes that this family is continuous: small context perturbations (measurement noise on $\vec{c}$, or out-of-distribution context values near the training boundary) induce only proportionally small changes in the \emph{position} of the manifold in ambient space. 
The bound is qualitative: the constant~$L$ is not minimized during training, and tightening it would require regularizing $L$ in the objective. 
This property directly supports the robustness analysis in \cref{ssec:robustness}. 
 
% ============================================================
%  SECTION 4 — SINGLE NcAE EXPERIMENTS
%  Validates the single NcAE architecture before introducing
%  the Paired NcAE extension.
%
%  §4.1 Experimental setup (systems, baselines, metrics)
%  §4.2 Reconstruction and latent analysis
%  §4.3 Ablation: modulation type
%  §4.4 Robustness analysis (OOD + noise)
% ============================================================
% ============================================================
%  Section 4 — Single NcAE Experiments
% ============================================================

\section{Experiments}
\label{sec:exp}

% This section validates the \ncae{} against a comprehensive set of baselines across two dynamical systems, with analyses of reconstruction fidelity, geometric consistency, modulation design, and robustness.

\subsection{Experimental setup}
\label{ssec:setup}
We evaluate the  \ncae{} on two dynamical systems and against six baselines. The two systems probe complementary regimes of context-dependent geometry; the six baselines isolate, one at a time, each design choice the NcAE makes differently.

\paragraph{Systems.}
The 16-DoF pendulum~\citep{friedl_riemannian_2025} provides a controlled setting where context-dependent coupling is imposed by design: the lengths $l_1,\ldots,l_4$ of the first four links are sampled from $[0.35, 0.65]\si{\meter}$ and govern the coupling of the remaining 12 DoFs through nonlinear functions.
The Lorenz96 system~\citep{75462} of dimension $n=36$ features an intrinsic bifurcation at forcing constant $F \approx 3.163$, within the range $F \in [3.133, 3.193]$, where the attractor topology fundamentally changes.
The pendulum tests adaptation to externally imposed manifold deformations; Lorenz96 tests adaptation to manifold deformations that emerge from the dynamics themselves.
More details are provided in \cref{sec:exp-details}.

\paragraph{Baselines.}
Each baseline isolates a single design dimension. 
Without context: cAE (constrained) and AE (unconstrained) are the lower bounds for what each architectural family can achieve without conditioning. 
With input concatenation: \contextcae{} and \contextAE{} apply the standard remedy for context dependence to a constrained and an unconstrained backbone, respectively. 
With activation conditioning: \filmae{} equips an unconstrained backbone with FiLM-style affine modulation of hidden activations~\citep{perez_film_2017}. 
With soft constraints: \softncae{} is identical to \ncae{} but enforces biorthogonality through a Euclidean penalty in the loss rather than through Riemannian optimization~\citep{kochurovGeooptRiemannianOptimization2020}, testing whether the hard geometric constraint is necessary.

\paragraph{Metrics.}
We report four metrics, each tied to a specific question. 
Reconstruction RMSE on position and velocity measures fidelity in the ambient space. 
Idempotency error $\|P(P(\vec{x})) - P(\vec{x})\|$ over $k$ successive projection steps tests \cref{prop:idempotency}. 
Condition number (CN) of the latent covariance matrix and coefficient of variation (CV) of the latent velocity norm probe latent geometry: CN near 1 indicates isotropic axis scaling, low CV indicates homogeneous projected dynamics. 
The OOD context sweep and the noise sweep test \cref{prop:hausdorff}.

\paragraph{Evaluation protocol.}
Every architecture is trained independently with 10 different random seeds to account for model variability due to random initialization and stochastic optimization.
Reported means and standard deviations are computed across these 10 runs.
Hyperparameters and data-generation details are in \cref{sec:appendix-lorenz,sec:appendix-pendulum}.

\subsection{Reconstruction performance}
\label{ssec:reconstruction}
NcAE delivers the best overall reconstruction on both systems, with the largest margin on velocity in the bifurcation regime, while context-aware unconstrained baselines collapse precisely where the manifold deforms most. \cref{tab:reconstruction-results} reports reconstruction RMSE for all architectures on the pendulum and Lorenz96 systems.
\begin{table}
    \caption{%
    \ncae{} achieves the best overall reconstruction across both systems and both metrics. \filmae{} matches \ncae{}  on pendulum position but with five times the variance; unconstrained autoencoders collapse on velocity in the bifurcating regime. 
    RMSE reported as mean $\pm$ std over 10 seeds.
    }
    \label{tab:reconstruction-results}
    \centering\medskip
    \begin{tabular}{lcccc}
        \toprule
        & \multicolumn{2}{c}{\textbf{Pendulum}} & \multicolumn{2}{c}{\textbf{Lorenz96}} \\
        \cmidrule(lr){2-3} \cmidrule(lr){4-5}
        \textbf{Method} & \textbf{Pos. RMSE} $\downarrow$ & \textbf{Vel. RMSE} $\downarrow$ & \textbf{Pos. RMSE} $\downarrow$ & \textbf{Vel. RMSE} $\downarrow$ \\
        \midrule
        \cae            & $0.049{\pm 0.001}$ & $0.220{\pm 0.002}$ & $0.468{\pm 0.25}$ & $0.653{\pm 0.1}$ \\
        \contextcae     & $0.054{\pm 0.001}$ & $0.226{\pm 0.001}$ & $0.319{\pm 0.254}$ & $0.644{\pm 0.204}$ \\
        AE              & $0.050{\pm 0.001}$ & $2.252{\pm 3.506}$ & $0.177{\pm 0.090}$ & $18.634{\pm 8.044}$ \\
        \contextAE       & $0.051{\pm 0.002}$ & $0.883{\pm 0.803}$ & $0.237{\pm 0.046}$ & $11.137{\pm 2.797}$ \\
        \filmae{}         & $\mathbf{0.008{\pm 0.01}}$ & $0.348{\pm 0.754}$ & $0.323{\pm 0.073}$ & $2.825{\pm 2.676}$ \\
        SoftNcAE        & $0.026{\pm 0.005}$ & $0.114{\pm 0.019}$ & $0.748{\pm 0.227}$ & $2.238{\pm 0.543}$ \\
        \midrule
        \ncae{} (ours)  & $0.012{\pm 0.002}$ & $\mathbf{0.059{\pm 0.003}}$ & $\mathbf{0.079{\pm 0.05}}$ & $\mathbf{0.329{\pm 0.186}}$ \\
        \bottomrule
    \end{tabular}
\end{table}

%Across both systems, the \ncae{} achieves the best overall reconstruction performance, with the advantage most pronounced in velocity and in the Lorenz96 bifurcation regime.
On the pendulum, \ncae{} reduces position RMSE by approximately 75\% relative to the \cae{} and \contextcae{} and achieves the lowest velocity RMSE.
\filmae{} produces a marginally lower mean position RMSE ($0.008$), but its variance is five times larger ($\pm 0.010$ versus $\pm 0.002$ for \ncae), indicating more discrepancies in performances across training runs that are absent from the geometrically constrained model.
\contextcae{} fails to improve over the context-free \cae{} on either metric, confirming that input augmentation alone cannot resolve shifting a physical manifold.

On Lorenz96, \ncae{} achieves the lowest position and velocity RMSE, with an order-of-magnitude advantage in velocity over every unconstrained baseline.
SoftNcAE shows notably higher variance on both systems, a first indication that hard biorthogonality enforced through Riemannian optimization is essential for stable training rather than merely convenient.

\subsection{Idempotency analysis}
\label{ssec:idempotency}

Idempotency holds by construction in \ncae{} and breaks in every context-aware baseline, validating \cref{prop:idempotency}. 
\cref{fig:idempotency-bars} reports idempotency error $\|P(P(\vec{x})) - P(\vec{x})\|$ over $k$ successive projection steps.
\cae{} and \ncae{} maintain near-zero error (${\sim}10^{-5}$) on both systems.
\contextcae{} breaks idempotency despite its constrained backbone: concatenating the context to the input modifies the effective input at each application, so the encoder receives $[\vec{x},\vec{c}]$ on the first pass but $[\hat{\vec{x}},\vec{c}]$ on every subsequent pass, and the encoder--decoder inverse relationship no longer holds at the manifold level.
Unconstrained architectures diverge on Lorenz96---\filmae{} catastrophically by step 5---while remaining bounded on the pendulum experiment; yet their absolute error (${\sim}10^{-1}$) is around four orders of magnitude above the constrained models, and the standard deviation of \filmae{} consistently exceeds its mean, revealing per-run instability even where the mean appears controlled.
SoftNcAE is intermediate: drift is reduced relative to \filmae{} but not eliminated, confirming that hard Riemannian enforcement of the biorthogonality constraint is necessary for reliable idempotency.

\begin{figure}
  % idempotency_bars.tex
% \input this file inside \begin{figure}...\end{figure}.
% Preamble requirements:
%   \usepackage{pgfplots,xcolor,tikz}
%   \usetikzlibrary{calc}
%   \usepgfplotslibrary{groupplots}

\pgfplotsset{compat=1.18}
\usepgfplotslibrary{groupplots}

% ColorBrewer Set1 palette (7 hues)
\definecolor{barCAE}{HTML}{E41A1C}
\definecolor{barCtxCAE}{HTML}{377EB8}
\definecolor{barAE}{HTML}{4DAF4A}
\definecolor{barCtxAE}{HTML}{984EA3}
\definecolor{barFilm}{HTML}{FF7F00}
\definecolor{barSoft}{HTML}{A65628}
\definecolor{barNcAE}{HTML}{F781BF}

% Horizontal jitter: change \markspacing (pt) to rescale all offsets.
% Values are precomputed so pgfkeys receives a plain number, not a command.
\def\markspacing{6}
\pgfmathsetmacro{\xshA}{(0-3)*\markspacing}  % NcAE
\pgfmathsetmacro{\xshB}{(1-3)*\markspacing}  % CAE
\pgfmathsetmacro{\xshC}{(2-3)*\markspacing}  % ContextCAE
\pgfmathsetmacro{\xshD}{(3-3)*\markspacing}  % AE
\pgfmathsetmacro{\xshE}{(4-3)*\markspacing}  % ContextAE
\pgfmathsetmacro{\xshF}{(5-3)*\markspacing}  % FiLM+AE
\pgfmathsetmacro{\xshG}{(6-3)*\markspacing}  % SoftNcAE

\centering
\begin{tikzpicture}
  \pgfplotsset{
    /pgfplots/error bars/error mark options={line width=0.5pt, mark size=3pt, rotate=90}, 
    }
\begin{groupplot}[
  group style={
    group size=2 by 1,
    horizontal sep=0.2cm,
    ylabels at=edge left,
    yticklabels at=edge left,
  },
  % --- shared axis settings ---
  ymode=log,
  ymin=1e-6, ymax=1e7,
  width=0.55\linewidth,
  height=5.5cm,
  enlarge x limits={abs=22pt},
  symbolic x coords={1,5,10,15},
  xtick=data,
  ylabel={Idempotency error},
  xlabel={Projection step},
  ymajorgrids=true,
  grid style={dotted,gray!40},
  tick label style={font=\small},
  label style={font=\small},
  % every error bar/.append style={solid,line width=0.7pt},
  every mark/.append style={mark size=4pt},
]

% ---------------------------------------------------------------
% Panel A: Lorenz96
% ---------------------------------------------------------------
\nextgroupplot[
  xticklabels={$k{=}1$,$k{=}5$,$k{=}10^\dagger$,$k{=}15^\dagger$},
  legend to name=idemlegend,
  legend columns=-1,
  legend style={
    font=\small, draw=none, fill=none,
    /tikz/every odd column/.append style={column sep=0.2em},
    /tikz/every even column/.append style={column sep=1.45em},
  },
]

  % 0 – NcAE (ours) — larger outlined mark distinguishes proposed method
  \addplot[only marks, mark=*, color=barNcAE, xshift=\xshA pt,
           mark options={draw=none, line width=0.8pt},
           error bars/.cd, y dir=both, y explicit]
    coordinates {
      (1,  5.28e-6) +- (0, 2.82e-6)
      (5,  5.01e-6) +- (0, 2.77e-6)
      (10, 4.85e-6) +- (0, 2.75e-6)
      (15, 4.73e-6) +- (0, 2.76e-6)
    };
  \addlegendentry{\ncae{} (ours)}

  % 1 – CAE
  \addplot[only marks, mark=square*, color=barCAE, xshift=\xshB pt,
           error bars/.cd, y dir=both, y explicit]
    coordinates {
      (1,  1.46e-5) +- (0, 4.99e-6)
      (5,  1.38e-5) +- (0, 4.87e-6)
      (10, 1.33e-5) +- (0, 4.85e-6)
      (15, 1.28e-5) +- (0, 4.99e-6)
    };
  \addlegendentry{\cae}

  % 2 – ContextCAE
  \addplot[only marks, mark=triangle*, mark size=2.5pt, color=barCtxCAE, xshift=\xshC pt,
           error bars/.cd, y dir=both, y explicit]
    coordinates {
      (1,  1.68)    +- (0, 1.36)
      (5,  8.45)    +- (0, 21.9)
      (10, 171.0)   +- (0, 584.0)
      (15, 4220.0)  +- (0, 15800.0)
    };
  \addlegendentry{\contextcae}

  % 3 – AE
  \addplot[only marks, mark=*, color=barAE, xshift=\xshD pt,
           error bars/.cd, y dir=both, y explicit]
    coordinates {
      (1,  4.99e-1) +- (0, 2.82e-1)
      (5,  5.26e-1) +- (0, 3.02e-1)
      (10, 2.58)    +- (0, 4.22)
      (15, 67.3)    +- (0, 183.0)
    };
  \addlegendentry{\naiveAE}

  % 4 – ContextAE
  \addplot[only marks, mark=diamond*, mark size=2.5pt, color=barCtxAE, xshift=\xshE pt,
           error bars/.cd, y dir=both, y explicit]
    coordinates {
      (1,  8.10e-1) +- (0, 3.79e-1)
      (5,  1.24)    +- (0, 9.71e-1)
      (10, 2.25)    +- (0, 1.66)
      (15, 8.65)    +- (0, 14.5)
    };
  \addlegendentry{\contextAE}

  % 5 – FiLM+AE  (k=10, k=15 omitted: diverged to infinity)
  \addplot[only marks, mark=otimes*, color=barFilm, xshift=\xshF pt,
           error bars/.cd, y dir=both, y explicit]
    coordinates {
      (1, 1.13)   +- (0, 2.91e-1)
      (5, 1.29e6) +- (0, 3.86e6)
    };
  \addlegendentry{\filmae}

  % 6 – SoftNcAE
  \addplot[only marks, mark=oplus*, color=barSoft, xshift=\xshG pt,
           error bars/.cd, y dir=both, y explicit]
    coordinates {
      (1,  9.98e-1) +- (0, 6.87e-1)
      (5,  2.27)    +- (0, 2.72)
      (10, 10.1)    +- (0, 26.2)
      (15, 89.2)    +- (0, 382.0)
    };
  \addlegendentry{\softncae}

  \draw[dashed, gray!50, thin]
    ({$(axis cs:1,1)!0.5!(axis cs:5,1)$}   |- {rel axis cs:0,0}) --
    ({$(axis cs:1,1)!0.5!(axis cs:5,1)$}   |- {rel axis cs:0,1});
  \draw[dashed, gray!50, thin]
    ({$(axis cs:5,1)!0.5!(axis cs:10,1)$}  |- {rel axis cs:0,0}) --
    ({$(axis cs:5,1)!0.5!(axis cs:10,1)$}  |- {rel axis cs:0,1});
  \draw[dashed, gray!50, thin]
    ({$(axis cs:10,1)!0.5!(axis cs:15,1)$} |- {rel axis cs:0,0}) --
    ({$(axis cs:10,1)!0.5!(axis cs:15,1)$} |- {rel axis cs:0,1});

% ---------------------------------------------------------------
% Panel B: Pendulum
% ---------------------------------------------------------------
\nextgroupplot[
  xticklabels={$k{=}1$,$k{=}5$,$k{=}10$,$k{=}15$},
]

  % 0 – NcAE (ours)
  \addplot[only marks, mark=*, color=barNcAE, xshift=\xshA pt,
           mark options={draw=none, line width=0.8pt},
           error bars/.cd, y dir=both, y explicit]
    coordinates {
      (1,  1.20e-5) +- (0, 2.96e-6)
      (5,  1.18e-5) +- (0, 2.96e-6)
      (10, 1.18e-5) +- (0, 2.96e-6)
      (15, 1.18e-5) +- (0, 2.96e-6)
    };

  % 1 – CAE
  \addplot[only marks, mark=square*, color=barCAE, xshift=\xshB pt,
           error bars/.cd, y dir=both, y explicit]
    coordinates {
      (1,  1.26e-5) +- (0, 2.53e-6)
      (5,  1.25e-5) +- (0, 2.52e-6)
      (10, 1.24e-5) +- (0, 2.52e-6)
      (15, 1.24e-5) +- (0, 2.51e-6)
    };

  % 2 – ContextCAE
  \addplot[only marks, mark=triangle*, mark size=2.5pt, color=barCtxCAE, xshift=\xshC pt,
           error bars/.cd, y dir=both, y explicit]
    coordinates {
      (1,  2.72e-1) +- (0, 3.33e-2)
      (5,  3.09e-1) +- (0, 5.96e-2)
      (10, 33.8)    +- (0, 100.0)
      (15, 1.20e5)  +- (0, 3.60e5)
    };

  % 3 – AE
  \addplot[only marks, mark=*, color=barAE, xshift=\xshD pt,
           error bars/.cd, y dir=both, y explicit]
    coordinates {
      (1,  9.77e-2) +- (0, 1.65e-2)
      (5,  8.80e-2) +- (0, 1.52e-2)
      (10, 8.18e-2) +- (0, 1.57e-2)
      (15, 7.85e-2) +- (0, 1.64e-2)
    };

  % 4 – ContextAE
  \addplot[only marks, mark=diamond*, mark size=2.5pt, color=barCtxAE, xshift=\xshE pt,
           error bars/.cd, y dir=both, y explicit]
    coordinates {
      (1,  1.59e-1) +- (0, 1.04e-2)
      (5,  1.46e-1) +- (0, 1.27e-2)
      (10, 1.37e-1) +- (0, 1.54e-2)
      (15, 1.33e-1) +- (0, 1.92e-2)
    };

  % 5 – FiLM+AE
  \addplot[only marks, mark=otimes*, color=barFilm, xshift=\xshF pt,
           error bars/.cd, y dir=both, y explicit]
    coordinates {
      (1,  7.24e-2) +- (0, 1.02e-1)
      (5,  8.06e-2) +- (0, 1.28e-1)
      (10, 1.01e-1) +- (0, 1.88e-1)
      (15, 1.31e-1) +- (0, 2.73e-1)
    };

  % 6 – SoftNcAE
  \addplot[only marks, mark=oplus*, color=barSoft, xshift=\xshG pt,
           error bars/.cd, y dir=both, y explicit]
    coordinates {
      (1,  8.95e-2) +- (0, 5.34e-2)
      (5,  9.71e-2) +- (0, 7.29e-2)
      (10, 9.96e-2) +- (0, 7.29e-2)
      (15, 1.02e-1) +- (0, 6.50e-2)
    };

  \draw[dashed, gray!50, thin]
    ({$(axis cs:1,1)!0.5!(axis cs:5,1)$}   |- {rel axis cs:0,0}) --
    ({$(axis cs:1,1)!0.5!(axis cs:5,1)$}   |- {rel axis cs:0,1});
  \draw[dashed, gray!50, thin]
    ({$(axis cs:5,1)!0.5!(axis cs:10,1)$}  |- {rel axis cs:0,0}) --
    ({$(axis cs:5,1)!0.5!(axis cs:10,1)$}  |- {rel axis cs:0,1});
  \draw[dashed, gray!50, thin]
    ({$(axis cs:10,1)!0.5!(axis cs:15,1)$} |- {rel axis cs:0,0}) --
    ({$(axis cs:10,1)!0.5!(axis cs:15,1)$} |- {rel axis cs:0,1});

\end{groupplot}
% Add panel labels outside
\node[anchor=north west, font=\large\bfseries] at (0, 4.55cm) {(A) Lorenz96};
\node[anchor=north west, font=\large\bfseries] at (0.45\linewidth, 4.55cm) {(B) Pendulum};
\end{tikzpicture}
% ---------------------------------------------------------------
% Shared legend (drawn once, below both panels)
% ---------------------------------------------------------------
\begin{tikzpicture}
  \pgfplotslegendfromname{idemlegend}
\end{tikzpicture}
  \caption{%
    \ncae{} and  \cae{} preserve idempotency by construction (error $\approx 10^{-5}$ at every iteration); context-aware baselines drift four to six orders of magnitude higher, and \filmae{} diverges by step 10 on Lorenz96.
    Idempotency error $\|P^k(\vec{x}) - P^{k-1}(\vec{x})\|$ on log scale, mean $\pm$ std over 10 seeds; numerical values in \cref{tab:idempotency,tab:pendulum_all_idempotency}. {\footnotesize
  ${}^\dagger$\,FiLM+AE diverged (numerical blow-up) at $k{=}10$ and $k{=}15$; points omitted.}
  }
  \label{fig:idempotency-bars}
\end{figure}
A reasonable question is whether a perfect one-step reconstruction would suffice, since a perfect reconstructor cannot drift. In a bottleneck setting ($m\ll n$), perfect reconstruction is unachievable on all of $\mathbb{R}^n$, and the four-orders-of-magnitude gap in \cref{fig:idempotency-bars} is exactly what happens to the unconstrained baselines that \emph{attempt} perfect reconstruction: their unavoidable one-step errors compound under iteration. Idempotency replaces an unachievable target (identity on $\mathbb{R}^n$) with an achievable one (fixed-point on $\hat{\set{M}}_{\vec{c}}$), and only the projection-by-construction architectures realize it.

\subsection{Latent space analysis}
\label{ssec:latent}

\ncae{} produces the most isotropic latent geometry and the most homogeneous projected dynamics, evidence that a single latent coordinate system serves every context coherently. This is the empirical correlate of the diffeomorphic invariance proven in \cref{prop:topo}.

\cref{tab:latent-metrics} reports the condition number (CN) of the latent covariance matrix and the coefficient of variation (CV) of the latent velocity norm. On the pendulum, CN is the more discriminative metric: the \ncae{} achieves $16.96 \pm 7.82$ against $\geq 39$ for every baseline and $> 150$ for the constrained architectures (\cae, \contextcae), indicating dramatically more isotropic axis scaling in the neuromodulated latent space.
On Lorenz96, CV is the more discriminative metric: 
\ncae{} achieves $0.22 \pm 0.06$ against $\geq 0.41$ than every baseline, indicating homogeneous projected velocity across the bifurcation rather than the regime-dependent scaling exhibited by other architectures.

This pattern is the empirical signature of the \ncae{} topological invariance (\cref{prop:topo}): the learned diffeomorphic manifolds admit a single latent coordinate system whose axes do not collapse and whose projected dynamics do not concentrate as the context varies. 
Architectures lacking this guarantee (\contextcae, \softncae, \filmae, \contextAE) produce regime-specific axis collapses or higher CV. 
SoftNcAE achieves a competitive CV on Lorenz96, but its large CN ($25.93 \pm 49.31$) indicates degenerate axis scaling.

\begin{table}[ht]
    \centering
    \caption{
        NcAE produces the most isotropic latent axes on the pendulum ($\text{CN} \approx 17$, against $\geq 40$ for every baseline) and the most homogeneous projected dynamics on Lorenz96 ($\text{CV} \approx 0.22$, against $\geq 0.41$ for every baseline). 
        CV: coefficient of variation of latent velocity norm (lower = more uniform dynamics). 
        CN: condition number of latent covariance (lower = more isotropic axes).
    }\medskip
    \label{tab:latent-metrics}
    \begin{tabular}{lcccc}
        \toprule
        & \multicolumn{2}{c}{\textbf{Pendulum}} & \multicolumn{2}{c}{\textbf{Lorenz96}} \\
        \cmidrule(lr){2-3} \cmidrule(lr){4-5}
        \textbf{Method} & \textbf{CV} $\downarrow$ & \textbf{CN} $\downarrow$ & \textbf{CV} $\downarrow$ & \textbf{CN} $\downarrow$ \\
        \midrule
        \cae            & $0.50_{\pm 0.12}$ & $248.15_{\pm 172.52}$ & $0.72_{\pm 0.09}$ & $1.87_{\pm 0.51}$ \\
        \contextcae     & $0.46_{\pm 0.07}$ & $157.68_{\pm 122.44}$ & $0.63_{\pm 0.13}$ & $1.93_{\pm 0.87}$ \\
        AE              & $2.51_{\pm 4.02}$ & $49.01_{\pm 36.77}$ & $1.35_{\pm 0.54}$ & $2.97_{\pm 2.57}$ \\
        \contextAE       & $0.70_{\pm 0.31}$ & $148.14_{\pm 143.58}$ & $0.72_{\pm 0.07}$ & $2.41_{\pm 2.62}$ \\
        \filmae{}         & $0.51_{\pm 0.30}$ & $39.85_{\pm 50.07}$ & $0.85_{\pm 0.35}$ & $3.78_{\pm 2.14}$ \\
        SoftNcAE        & $0.47_{\pm 0.08}$ & $108.55_{\pm 122.87}$ & $0.41_{\pm 0.16}$ & $25.93_{\pm 49.31}$ \\
        \midrule
        \ncae{} (ours)  & $\mathbf{0.45_{\pm 0.07}}$ & $\mathbf{16.96_{\pm 7.82}}$ & $\mathbf{0.22_{\pm 0.06}}$ & $\mathbf{1.53_{\pm 0.47}}$ \\
        \bottomrule
    \end{tabular}
\end{table}

\subsection{Ablation: modulation type}
\label{ssec:ablation}
Slope and bias modulation are each sufficient to fit the systems accurately when they converge, but neither alone matches the full \ncae{}: slope-only diverges in 2/10 Lorenz runs and 3/10 pendulum runs, while bias-only is stable but slightly less accurate. \cref{tab:ablation} reports results on both systems using clean data.

The decomposition reveals a role for bias modulation that mean-error alone hides. Each channel adapts the manifold differently (slope modulation deforms the manifold by stretching its internal coordinates, bias modulation translates it in the ambient space), and either suffices to track parametric variation in mean. But slope-only modulation places the entire burden on a slope parameter constrained to a narrow interval and controlling activation curvature directly, which yields a steep optimization landscape; bias modulation absorbs the rigid component of the manifold shift, leaving slope modulation responsible only for the residual deformation. Combining the two channels yields the lowest error on both systems and recovers the stability of bias-only training.
\begin{table}[ht]
    \centering
    \caption{
        Each modulation channel alone enables context adaptation in mean, but slope-only training is unstable (3/10 pendulum and 2/10 Lorenz96 runs diverged and were excluded). Bias modulation stabilizes training, slope modulation captures the nonlinear deformation, and the combination yields the lowest error.
        RMSE reported as mean $\pm$ std over 10 seeds.
        * denotes that 2 runs of 10 diverged for Lorenz and 3 out of 10 for the pendulum, and were removed from the statistics computation.
        % \textbf{Ablation study: modulation type} (mean $\pm$ std RMSE on clean data).
        % Bias-only and $\alpha$-only variants isolate the geometric role of each component.
        % * denotes that 2 runs of 10 diverged for Lorenz and 3 out of 10 for the pendulum, and were removed from the statistics computation.
    }\medskip
    \label{tab:ablation}
    \begin{tabular}{lcccc}
        \toprule
        & \multicolumn{2}{c}{\textbf{Pendulum}} & \multicolumn{2}{c}{\textbf{Lorenz96}} \\
        \cmidrule(lr){2-3} \cmidrule(lr){4-5}
        \textbf{Variant} & \textbf{Pos. RMSE} $\downarrow$ & \textbf{Vel. RMSE} $\downarrow$ & \textbf{Pos. RMSE} $\downarrow$ & \textbf{Vel. RMSE} $\downarrow$ \\
        \midrule
        Bias-only NcAE     & $0.015_{\pm 0.002}$ & $0.067_{\pm 0.005}$ & $0.076_{\pm 0.083}$ & $0.277_{\pm 0.221}$ \\
        $\alpha$-only NcAE* & $0.018_{\pm 0.002}$ & $0.071_{\pm 0.004}$ & $0.089_{\pm 0.106}$ & $0.322_{\pm 0.374}$ \\
        \ncae{} (both)     & $\mathbf{0.012_{\pm 0.002}}$ & $\mathbf{0.059_{\pm 0.003}}$ & $\mathbf{0.046_{\pm 0.032}}$ & $\mathbf{0.209_{\pm 0.148}}$ \\
        \bottomrule
    \end{tabular}
\end{table}

\subsection{Robustness analysis}
\label{ssec:robustness}

\ncae{} degrades gracefully under both unseen contexts and observation noise, while \filmae{} and unconstrained baselines fail catastrophically in the most extreme conditions. We evaluate two stressors: contexts beyond the training range, and normal noise on states, context, or both.

\paragraph{Out-of-distribution contexts.}
We evaluate on context values up to 50\% beyond the training range. 
\ncae{} is the only context-aware architecture whose projection remains geometrically valid at every $\vec{c}$ by construction (\cref{prop:hausdorff}), so the only thing that can degrade beyond the training boundary is reconstruction fidelity.
Accordingly, \ncae{} retains the best velocity RMSE at every OOD level on both systems (Lorenz96: $0.33 \to 0.545$ at $+50\%$; pendulum: $0.152 \to 0.42$), with monotone degradation. FiLM+AE fails catastrophically on Lorenz96 (velocity RMSE $273.9 \pm 473.7$  at $+50\%$) and is bounded but highly unstable on the pendulum, confirming that unconstrained conditioning provides no extrapolation guarantee.
Full results are in \cref{ssec:OOD-robust}.

\paragraph{Noise robustness.} We evaluate three noise conditions (state only, context only, and both) at $\sigma \in \{0.10, 0.25, 0.50\}$, across two regimes (noise at training with clean evaluation, and clean training with noise at evaluation). 
Across all 72 comparable conditions, \ncae{} achieves the best RMSE in 47, making it the most consistent architecture overall (despite not dominating every individual case), and degrades monotonically with $\sigma$. 
Performance is consistent across noise types, levels, and systems, whereas competing architectures sometimes fluctuate by an order of magnitude between conditions. Under context and combined noise, this resilience is consistent with \cref{prop:hausdorff}: a perturbed context induces a continuous deformation of the learned manifold, so reconstruction degrades gradually rather than catastrophically.
Full results are in \cref{ssec:noise-robust-train,ssec:noise-robust-test}.

% ============================================================
%  SECTION 6 — DISCUSSION
% ============================================================
\section{Discussion}
\label{sec:discussion}

\textbf{Position relative to existing conditioning and constrained-AE families.}
No existing conditioning or constrained-AE family produces a context-dependent idempotent projection.
% NcAE occupies a niche that no existing conditioning or constrained-AE family fills: it is simultaneously context-dependent and a guaranteed idempotent projection, for every context. 
Standard conditioning mechanisms (FiLM \citep{perez_film_2017}, AdaIN \citep{karrasStyleBasedGeneratorArchitecture2019}) adapt activations but place no geometric constraint on the reconstruction map.
Normalizing flows impose invertibility but operate at full ambient dimension and cannot reduce dimensionality \cite{dinh2017density}.
The cAE \citep{otto_learning_2023} is the closest precedent (an idempotent projection at a fixed context), which NcAE extends to a continuously parameterized family.

\textbf{What the formal guarantees buy in practice.} 
%Diffeomorphic invariance and Lipschitz stability are inductive biases that match the geometry of physical regimes, not merely abstract guarantees. 
Many physical systems %(link-coupled robots, parametric flows, bifurcating dynamics) 
share a common intrinsic dimension across regimes while exhibiting different ambient embeddings.
% \cref{prop:topo} (diffeomorphic invariance) is the architectural commitment that the model cannot collapse to a lower-dimensional subspace or develop topological singularities as the context shifts; whatever context the NcAE is presented with, the recovered $\hat{\set{M}}_c$ is the same manifold up to smooth deformation. \cref{prop:hausdorff} (Lipschitz stability) then ensures that this family is metrically continuous: a small context perturbation moves the learned manifold by at most $L\|\delta c\|$ in ambient space, so the model cannot jump discontinuously between geometrically unrelated manifolds. 
\cref{prop:topo} forbids the model from collapsing to a lower-dimensional subspace or developing topological singularities as context shifts; \cref{prop:hausdorff} forbids it from jumping discontinuously between unrelated manifolds. 
Together, these two propositions explain the robustness pattern of \cref{ssec:robustness}: \ncae{} degrades gracefully under OOD contexts and noise because the manifold it selects is always nearby and geometrically consistent with the training family.

%Diffeomorphic invariance (\cref{prop:topo}) ensures that the manifold learned at any context $\vec{c}$ has the same topology, dimension, and smooth structure as every other: the \ncae{} is structurally prevented from collapsing to a lower-dimensional subspace or developing topological singularities as context shifts. This is the correct inductive bias for physical systems whose geometry evolves with parameters but whose intrinsic dimensionality is fixed. Hausdorff stability (\cref{prop:hausdorff}) then establishes that this family is metrically continuous: a small context perturbation moves the learned manifold by at most $L\|\delta\vec{c}\|$ in ambient space, so the model cannot jump discontinuously between geometrically unrelated manifolds. Together, the two guarantees explain the empirical robustness pattern: the \ncae{} degrades gracefully under OOD contexts and context noise because the manifold it selects is always nearby and always geometrically consistent with the training family (\cref{ssec:robustness}).

\textbf{When idempotency matters, and when it does not.}
Idempotency matters whenever the autoencoder is applied iteratively or as a sub-component of a longer computational chain, and not otherwise. For a single forward pass of denoising or compression, a non-idempotent bottleneck autoencoder is adequate; small per-step deviations from the manifold do not compound. The setting addressed in this paper is different: latent-space integration of dynamical systems, projection-based reduced-order modeling, and any pipeline in which $P$ is composed with itself or with an external dynamics map operating on $\hat{\set{M}}_{\vec{c}}$. In these regimes, the unconstrained per-step error of a standard autoencoder accumulates over trajectories (\cref{fig:idempotency-bars}), and idempotency replaces an unachievable target (identity on $\mathbb{R}^n$) with an achievable one (fixed-point on $\hat{\set{M}}_{\vec{c}}$). The four-orders-of-magnitude gap in \cref{ssec:idempotency} is the practical cost of failing to make this distinction.

\textbf{Limitations.} 
Three limitations bound the scope of the current framework. 
First, the context $\vec{c}$ is assumed static and directly observable at inference, which excludes time-varying contexts (where $\vec{c}$ changes along a trajectory) and latent contexts (where $\vec{c}$ must be inferred from observations). 
Second, Riemannian optimization on the biorthogonal manifold incurs per-step computational overhead compared with unconstrained training, which may limit scalability to architectures with many layers or large hidden dimensions. 
Third, the Lipschitz constant $L$ in \cref{prop:hausdorff} is not minimized during training, so the OOD certificate is qualitative rather than tight; tightening it would require a differentiable upper bound on $L$ in the training objective.

% ============================================================
%  SECTION 7 — CONCLUSION
% ============================================================
\section{Conclusion}
\label{sec:conclusion}

The Neuromodulated Constrained Autoencoder turns a single idempotent projection into a continuously parameterized family of them, by restricting context conditioning to the activation slopes and biases---the cAE parameters that admit modulation without breaking encoder–decoder idempotency. Idempotency, diffeomorphic invariance, and Lipschitz stability hold for every context, including those unseen at training. On a 16-DoF pendulum and on Lorenz96 across a bifurcation, NcAE matches or exceeds six baselines on every metric while being the only architecture that provides geometric consistency by construction.

The natural next application is dynamics discovery: SINDy~\citep{bruntonDiscoveringGoverningEquations2016a}, Hamiltonian Neural Networks~\citep{greydanusHamiltonianNeuralNetworks2019}, and Neural ODEs~\citep{chenNeuralOrdinaryDifferential2019} identify governing equations at a single operating point, and the NcAE supplies the stable, context-dependent coordinate system on which those equations could remain valid across families of regimes. 
Within NcAE itself, two extensions address the limitations of \cref{sec:discussion}: a bottom-up inference module estimating $\vec{c}$ from observations, and a time-varying neuromodulatory signal driven by its own dynamics, allowing the NcAE to track regime transitions rather than assume a fixed context.

\clearpage

% ============================================================
%  ACKNOWLEDGMENTS
% ============================================================
\section*{Acknowledgements}
This work was supported by the Belgian Government through the Federal Public Service Policy and Support.
Computational resources have been provided by the Consortium des Équipements de Calcul Intensif (CÉCI), funded by the Fonds de la Recherche Scientifique de Belgique (F.R.S.-FNRS) under Grant No. 2.5020.11 and by the Walloon Region.
The present research benefited from computational resources made available on Lucia, the Tier-1 supercomputer of the Walloon Region, infrastructure funded by the Walloon Region under the grant agreement n°1910247.

% ============================================================
%  REFERENCES
% ============================================================
% \section*{References}
% \textbf{[TO UPDATE — add new citations from expanded literature review]}
\bibliographystyle{unsrtnat}
\bibliography{references}
%\bibliographystyle{apalike}

% ============================================================
%  APPENDIX
%  A: Full theoretical proofs (single NcAE + paired NcAE)
%     — proofs to be completed by co-author
%  B: Constrained autoencoder details
%  C: Neuromodulation details
%  D: Experimental details
%     D.1 Coupling functions (pendulum)
%     D.2 Hyperparameters (all architectures including new baselines
%         and paired NcAE)
%     D.3 Lorenz96 forcing regimes
%  E: Additional figures
%  F: NeurIPS checklist
% ============================================================
\clearpage
\appendix
% ============================================================
%  Appendix: Related Work
% ============================================================

\section{Related Work}
\label{sec:related-work}

% Placeholder command for references not yet in references.bib.
% Replace \nref{Key} with \cite{key} once the entry has been added and verified.
\newcommand{\nref}[1]{\textbf{\textcolor{red}{[#1]}}}

The \ncae{} sits at the intersection of several research traditions: manifold learning and dimensionality reduction, context-dependent neural network conditioning, and the biological principle of neuromodulation.
We review each in turn.

\paragraph{Manifold learning and dimensionality reduction.}
Classical nonlinear dimensionality reduction methods, including kernel PCA~\citep{scholkopfKernelPrincipalComponent1997} and Isomap~\citep{tenenbaumGlobalGeometricFramework2000}, learn fixed embeddings of static datasets and do not yield a trainable reconstruction map.
Autoencoders~\citep{wangAutoencoderBasedDimensionality2016} learn encoder-decoder pairs end-to-end but impose no structural constraint on their composition: the encoder need not be a left inverse of the decoder, so repeated application of the reconstruction map accumulates drift.
The \ncae{} belongs to this family but imposes the biorthogonal constraint of the \cae{}~\citep{otto_learning_2023}, which ensures the encoder is a true left inverse of the decoder, and extends it to a continuously parameterized family of projections indexed by the context~$\vec{c}$.

\paragraph{Idempotent and structure-preserving networks.}
\citet{shocherIdempotentGenerativeNetwork2023} train a network to satisfy $f(f(z)) = f(z)$ via a soft loss term for generative modeling; idempotency is encouraged but not guaranteed by construction.
The \cae{}~\citep{otto_learning_2023} is the direct structural predecessor of the \ncae{}, enforcing idempotency via biorthogonal weight matrices; the \ncae{} extends it to a context-parameterized family.
The question of how to condition this guaranteed structure on an external context is addressed next.

\paragraph{Context-dependent conditioning of neural networks.}
Variational autoencoders~\cite{kingmaAutoEncodingVariational2014} provide a probabilistic latent space without projection constraints; conditional VAEs~\cite{sohnLearningStructured2015} extend them by conditioning the prior and decoder on an external variable, paralleling the \ncae{} use of~$\vec{c}$ but without geometric constraints on the encoder-decoder pair.
Feature-wise linear modulation (FiLM)~\citep{perez_film_2017} applies $y = \gamma(\vec{c}) \cdot x + \beta(\vec{c})$ to each hidden activation; adaptive instance normalization (AdaIN)~\citep{karrasStyleBasedGeneratorArchitecture2019} extends this with normalization. Because $\gamma$ and $\beta$ are context-dependent, any fixed inverse relationship between encoder and decoder is broken unless the decoder is conditioned with the exact inverse affine transformation, which is not enforced.
HyperNetworks~\citep{haHyperNetworks2016} generate the parameters of a target network from a secondary network; the \ncae{} hyper-network instantiates this mechanism but restricts the generated quantities to activation slopes and biases, which preserves the biorthogonal structure.
In the meta-learning literature, MAML~\cite{finnModelAgnosticMeta2017} and conditional neural processes~\cite{garneloConditionalNeural2018} also produce context-specific parameters in a forward pass or via gradient adaptation, but without constraining modulation to preserve geometric structure.

\paragraph{Neuromodulation in neural networks.}
The \ncae{} specific conditioning strategy is grounded in the neuroscience of neuromodulation, whereby diffuse chemical signals adjust neuronal gain and threshold without altering synaptic connectivity~\citep{bargmann_connectome_2013}, separating fixed structural parameters from adjustable response parameters.
\citet{vecoven_introducing_2020} and \citet{meiInformingDeepNeural2022} translate this principle to deep networks, showing that modulating gain and bias while fixing weights enables context-sensitive computation.
The \ncae{} follows this principle, treating the biorthogonal weight matrices as fixed structural parameters and the activation slopes $\vec{\alpha}^{(l)}$ and biases $\vec{\beta}^{(l)}$ as the modulated response parameters.

\section{Theoretical guarantees}
\label{sec:appendix-theory}
\subsection{Single neuromodulated AE}

Let the encoder and decoder for a context $\vec{c}$ be defined by the composition of $L$ layer pairs:
$\rho_{\vec{c}} = \rho^{(1)} \circ \dots \circ \rho^{(L)} $ and $ \varphi_{\vec{c}} = \varphi^{(L)} \circ \dots \circ \varphi^{(1)}$.

The neuromodulation defines context-dependent parameters:
$\vec{\alpha}^{(l)} = g(\vec{W}_{l,\alpha}^T \vec{s})$,  $\vec{\beta}^{(l)} = \vec{W}_{l,\beta}^T \vec{s} + \vec{b}_l$.

\begin{proposition}[Idempotency]
Under \ref{ass:biorth}--\ref{ass:bias}, $P_{\vec{c}} = \varphi_{\vec{c}} \circ \rho_{\vec{c}}$ is an idempotent projection for every context $\vec{c}$, i.e., $P_{\vec{c}} \circ P_{\vec{c}} = P_{\vec{c}}$.
\end{proposition}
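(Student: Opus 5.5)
The plan is to reduce idempotency to the left-inverse identity $\rho_{\vec{c}} \circ \varphi_{\vec{c}} = \id_{\mathbb{R}^m}$ at a fixed but arbitrary context $\vec{c}$. Once this holds, the computation from \cref{sec:cae} gives $P_{\vec{c}}\circ P_{\vec{c}} = P_{\vec{c}}$ verbatim. The key observation is that fixing $\vec{c}$ freezes the hyper-network output $(\vec{\alpha}^{(l)}(\vec{c}),\vec{\beta}^{(l)}(\vec{c}))$. The modulated network is then a plain \cae{} with those particular slopes and biases. So the entire argument is layerwise and pointwise in $\vec{c}$; no smoothness \ref{ass:smooth} is needed.

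\emph{Step 1 (single layer pair).} Fix $l$ and write $\vec{\alpha}=\vec{\alpha}^{(l)}(\vec{c})$ and $\vec{\beta}=\vec{\beta}^{(l)}(\vec{c})$. For any $\vec{z}$, compute
\begin{equation*}
\rho^{(l)}\bigl(\varphi^{(l)}(\vec{z})\bigr) = \sigma_-\bigl(\vec{\Psi}_l^T(\vec{\Phi}_l\sigma_+(\vec{z};\vec{\alpha}) + \vec{\beta} - \vec{\beta});\vec{\alpha}\bigr).
\end{equation*}
The shared bias \ref{ass:bias} cancels the offsets, since encoder and decoder read the same $\vec{\beta}^{(l)}(\vec{c})$. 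Biorthogonality \ref{ass:biorth} reduces $\vec{\Psi}_l^T\vec{\Phi}_l$ to $I$. The inverse-pair property \ref{ass:inverse} then gives $\sigma_-(\sigma_+(\vec{z};\vec{\alpha});\vec{\alpha}) = \vec{z}$, applied componentwise since $\vec{\alpha}$ is a vector of per-unit slopes. This last step requires $\vec{\alpha}$ to lie in the admissible range. That holds for every $\vec{c}$ because the saturating sigmoid confines each entry to $[\alpha_{\min},\alpha_{\max}]$, which I take to be contained in the admissible interval. This is exactly what makes the guarantee valid for unseen contexts.

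\emph{Step 2 (composition).} Induct on depth, or telescope. In $\rho_{\vec{c}}\circ\varphi_{\vec{c}} = \rho^{(1)}\circ\cdots\circ\rho^{(L)}\circ\varphi^{(L)}\circ\cdots\circ\varphi^{(1)}$, the innermost pair $\rho^{(L)}\circ\varphi^{(L)}$ collapses to the identity by Step 1. This exposes $\rho^{(L-1)}\circ\varphi^{(L-1)}$, and so on, until $\id_{\mathbb{R}^m}$ remains. Step 3 then concludes $P_{\vec{c}}\circ P_{\vec{c}} = \varphi_{\vec{c}}\circ(\rho_{\vec{c}}\circ\varphi_{\vec{c}})\circ\rho_{\vec{c}} = P_{\vec{c}}$.

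The only delicate point is Step 1's use of \ref{ass:inverse}. The statement requires the inverse relation to hold for every admissible slope value, not just for a single trained $\alpha$. The proof must also use the \emph{same} context-dependent $\vec{\alpha}^{(l)}$ in $\sigma_-$ and $\sigma_+$. I would state explicitly that both halves of each layer pair are driven by the identical modulation signal $\vec{s}=f_{\text{nmd}}(\vec{c};\theta)$, which is evident from the architecture. I would also note that the range condition on $g$ is what turns the fixed-context \cae{} argument into one valid uniformly in $\vec{c}$.
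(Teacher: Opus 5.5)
Your proposal is correct and follows the paper's proof essentially step for step. It uses a layerwise cancellation: the shared bias first, then $\vec{\Psi}_l^T\vec{\Phi}_l = I$, then $\sigma_-\circ\sigma_+ = \id$ with the same context-dependent slope, which the saturating map $g$ keeps admissible. It then telescopes over the $L$ pairs to get $\rho_{\vec{c}}\circ\varphi_{\vec{c}} = \id_{\mathbb{R}^m}$, and concludes $P_{\vec{c}}\circ P_{\vec{c}} = \varphi_{\vec{c}}\circ\id\circ\rho_{\vec{c}} = P_{\vec{c}}$. Your insistence that $[\alpha_{\min},\alpha_{\max}]$ lie inside the open interval $(0,\pi/4)$ is slightly more careful than the paper's appeal to the closed interval $[0,\pi/4]$, where the activation formula degenerates at both endpoints.
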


\begin{proof}
It suffices to show that $\rho_{\vec{c}} \circ \varphi_{\vec{c}} = \text{id}_{\mathbb{R}^m}$. Consider the composition of a single-layer pair $(l)$ for a fixed $\vec{c}$. Let $\vec{z} \in \mathbb{R}^m$ be the input to the $l$-th decoder layer:
\begin{align}
    (\rho^{(l)} \circ \varphi^{(l)})(\vec{z}) &= \sigma_- \left( \vec{\Psi}_l^T \left( [\vec{\Phi}_l \sigma_+(\vec{z}; \vec{\alpha}^{(l)}) + \vec{\beta}^{(l)}] - \vec{\beta}^{(l)} \right); \vec{\alpha}^{(l)} \right) \\
    &= \sigma_- \left( \vec{\Psi}_l^T \vec{\Phi}_l \sigma_+(\vec{z}; \vec{\alpha}^{(l)}); \vec{\alpha}^{(l)} \right) \\
    &= \sigma_- \left( \sigma_+(\vec{z}; \vec{\alpha}^{(l)}); \vec{\alpha}^{(l)} \right) \quad \text{since } \vec{\Psi}_l^T \vec{\Phi}_l = \vec{I} \\
    &= \vec{z},
\end{align}
where the last equality holds because $\sigma_-$ and $\sigma_+$ are mutually inverse for $\vec{\alpha} \in [0, \pi/4]$ \cite{otto_learning_2023}, a range guaranteed by the saturating function $g$ (defined in \cref{sec:appendix-neuromodulation}). Thus, $\rho^{(l)} \circ \varphi^{(l)} = \text{id}$. By induction on the chain of compositions:
\begin{align}
    \rho_{\vec{c}} \circ \varphi_{\vec{c}} &= (\rho^{(1)} \circ \dots \circ \rho^{(L)}) \circ (\varphi^{(L)} \circ \dots \circ \varphi^{(1)}) \\
    &= \rho^{(1)} \circ \dots \circ (\rho^{(L)} \circ \varphi^{(L)}) \circ \dots \circ \varphi^{(1)} \\
    &= \rho^{(1)} \circ \dots \circ \text{id} \circ \dots \circ \varphi^{(1)} \\
    &= \text{id}_{\mathbb{R}^m}.
\end{align}

Finally:
$P_{\vec{c}} \circ P_{\vec{c}} = (\varphi_{\vec{c}} \circ \rho_{\vec{c}}) \circ (\varphi_{\vec{c}} \circ \rho_{\vec{c}}) = \varphi_{\vec{c}} \circ (\rho_{\vec{c}} \circ \varphi_{\vec{c}}) \circ \rho_{\vec{c}} = \varphi_{\vec{c}} \circ \text{id} \circ \rho_{\vec{c}} = P_{\vec{c}}$.
\end{proof}

Let us denote $\mathcal{\hat M}_{\vec{c}}$ the learned manifold defined by $\mathcal{\hat M}_{\vec{c}} \triangleq P_{\vec{c}}(\mathbb{R}^n) = \varphi_{\vec{c}} \circ \rho_{\vec{c}} (\mathbb{R}^n)$. We wish to show that the manifold topology does not fundamentally change with respect to the context $\vec{c}$.

\begin{proposition}[Diffeomorphic invariance of the learned manifold] 
 \label{prop:TopoAcrossContext}
For any two contexts $\vec{c}_1$ and $\vec{c}_2$, the map $\varphi_{\vec{c}_2} \circ \rho_{\vec{c}_1}:\mathcal{\hat M}_{\vec{c}_1} \to \mathcal{\hat M}_{\vec{c}_2}$ defines a diffeomorphism from $\mathcal{\hat M}_{\vec{c}_1}$ to $\mathcal{\hat M}_{\vec{c}_2}$, of inverse $(\varphi_{\vec{c}_2} \circ \rho_{\vec{c}_1})^{-1} = \varphi_{\vec{c}_1} \circ \rho_{\vec{c}_2} : \mathcal{\hat M}_{\vec{c}_2} \to \mathcal{\hat M}_{\vec{c}_1}$.
\end{proposition}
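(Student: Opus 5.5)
The plan is to reduce everything to the left-inverse identity $\rho_{\vec{c}} \circ \varphi_{\vec{c}} = \id_{\mathbb{R}^m}$, established for every context in the proof of the idempotency proposition, and to treat $\varphi_{\vec{c}}$ as a smooth parameterization of $\hat{\mathcal{M}}_{\vec{c}}$. The first step records two facts valid for every $\vec{c}$. First, $\hat{\mathcal{M}}_{\vec{c}} = \varphi_{\vec{c}}(\mathbb{R}^m)$: one inclusion holds because $P_{\vec{c}}(\vec{x}) = \varphi_{\vec{c}}(\rho_{\vec{c}}(\vec{x}))$, and the other because $\varphi_{\vec{c}}(\vec{z}) = \varphi_{\vec{c}}(\rho_{\vec{c}}(\varphi_{\vec{c}}(\vec{z}))) = P_{\vec{c}}(\varphi_{\vec{c}}(\vec{z}))$. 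Second, $\varphi_{\vec{c}}:\mathbb{R}^m \to \hat{\mathcal{M}}_{\vec{c}}$ is a bijection whose inverse is the restriction $\rho_{\vec{c}}|_{\hat{\mathcal{M}}_{\vec{c}}}$. Indeed, $\rho_{\vec{c}} \circ \varphi_{\vec{c}} = \id$ gives injectivity, and for $\vec{y} = \varphi_{\vec{c}}(\vec{z})$ we get $\varphi_{\vec{c}}(\rho_{\vec{c}}(\vec{y})) = \vec{y}$.

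The second step is the algebraic core. For $\vec{y} \in \hat{\mathcal{M}}_{\vec{c}_1}$, write $\vec{y} = \varphi_{\vec{c}_1}(\vec{z})$ with $\vec{z} = \rho_{\vec{c}_1}(\vec{y})$. Then $F \coloneqq \varphi_{\vec{c}_2} \circ \rho_{\vec{c}_1}$ maps $\vec{y}$ to $\varphi_{\vec{c}_2}(\vec{z}) \in \hat{\mathcal{M}}_{\vec{c}_2}$, so $F$ lands in the right set. Define $G \coloneqq \varphi_{\vec{c}_1} \circ \rho_{\vec{c}_2}$. Then
\begin{equation*}
G \circ F\,(\vec{y}) = \varphi_{\vec{c}_1}\bigl(\rho_{\vec{c}_2}(\varphi_{\vec{c}_2}(\vec{z}))\bigr) = \varphi_{\vec{c}_1}(\vec{z}) = \vec{y},
\end{equation*}
and by symmetry $F \circ G = \id$ on $\hat{\mathcal{M}}_{\vec{c}_2}$. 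Hence $F$ is a bijection with the claimed inverse. Equivalently, $F = \varphi_{\vec{c}_2} \circ (\varphi_{\vec{c}_1})^{-1}$, so $F$ is a change of chart through the shared latent space $\mathbb{R}^m$.

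The third step, smoothness, is where I expect the real work. The maps $F$ and $G$ are restrictions of maps $\mathbb{R}^n \to \mathbb{R}^n$ that are smooth by \ref{ass:smooth}: they are compositions of affine maps and of $\sigma_\pm(\cdot;\alpha)$ at fixed $\alpha$. The restriction of a smooth ambient map to an embedded submanifold is smooth into $\mathbb{R}^n$. It then remains smooth as a map into $\hat{\mathcal{M}}_{\vec{c}_2}$, provided that set is embedded. So the obstacle is justifying that each $\hat{\mathcal{M}}_{\vec{c}}$ is a smooth embedded $m$-submanifold. I would argue this directly rather than only citing the retract result of \citet{michor_topics_2008}. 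Differentiating $\rho_{\vec{c}} \circ \varphi_{\vec{c}} = \id$ gives $D\rho_{\vec{c}} \, D\varphi_{\vec{c}} = I_m$, so $\varphi_{\vec{c}}$ is an immersion. It is also injective, and its inverse on the image, $\rho_{\vec{c}}|_{\hat{\mathcal{M}}_{\vec{c}}}$, is continuous. Together these make $\varphi_{\vec{c}}$ a homeomorphism onto its image, hence a smooth embedding. Its image $\hat{\mathcal{M}}_{\vec{c}}$ is therefore an embedded submanifold diffeomorphic to $\mathbb{R}^m$, with global chart $\rho_{\vec{c}}$.

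With this in hand, $F = \varphi_{\vec{c}_2} \circ (\rho_{\vec{c}_1}|_{\hat{\mathcal{M}}_{\vec{c}_1}})$ is a composition of diffeomorphisms $\hat{\mathcal{M}}_{\vec{c}_1} \to \mathbb{R}^m \to \hat{\mathcal{M}}_{\vec{c}_2}$, and so is $G$. Hence $F$ is a diffeomorphism with inverse $G$, and topological equivalence follows immediately.

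One caveat to handle along the way: $\sigma_-$ must be defined and smooth on the whole image of the affine encoder map, for instance $\sigma_-:\mathbb{R}\to\mathbb{R}$ as in the activation pair of \citet{otto_learning_2023}. Smoothness in $\vec{c}$ itself is not needed here; it only enters the Lipschitz stability result.
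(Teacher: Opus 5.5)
Your proposal is correct and follows essentially the paper's route: the paper also shows that $\varphi_{\vec{c}_1}\circ\rho_{\vec{c}_2}$ and $\varphi_{\vec{c}_2}\circ\rho_{\vec{c}_1}$ are mutually inverse by inserting $\rho_{\vec{c}_2}\circ\varphi_{\vec{c}_2} = \id_{\mathbb{R}^m}$, which collapses the composite to $P_{\vec{c}_1}$, the identity on its range. Your additional steps make explicit what the paper only asserts (``by definition'' continuously differentiable, plus the Michor citation): that $F$ actually lands in $\hat{\mathcal{M}}_{\vec{c}_2} = \varphi_{\vec{c}_2}(\mathbb{R}^m)$, and that each $\hat{\mathcal{M}}_{\vec{c}}$ is an embedded submanifold because $\varphi_{\vec{c}}$ is an injective immersion with continuous inverse $\rho_{\vec{c}}|_{\hat{\mathcal{M}}_{\vec{c}}}$.
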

\begin{proof} 
By definition, for all contexts $\vec{c}_1$ and $\vec{c}_2$, both $\varphi_{\vec{c}_2} \circ \rho_{\vec{c}_1}:\mathcal{\hat M}_{\vec{c}_1} \to \mathcal{\hat M}_{\vec{c}_2}$ and $\varphi_{\vec{c}_1} \circ \rho_{\vec{c}_2}:\mathcal{\hat M}_{\vec{c}_2} \to \mathcal{\hat M}_{\vec{c}_1}$ are continuously differentiable. It therefore suffices to show they are mutually inverse. Succinctly:
\begin{align}
    (\varphi_{\vec{c}_1} \circ \rho_{\vec{c}_2}) \circ (\varphi_{\vec{c}_2} \circ \rho_{\vec{c}_1}) &=\varphi_{\vec{c}_1} \circ (\rho_{\vec{c}_2} \circ \varphi_{\vec{c}_2}) \circ \rho_{\vec{c}_1} \\
    &= \varphi_{\vec{c}_1} \circ \text{id}_{\mathbb{R}^m} \circ \rho_{\vec{c}_1} \\
    &=  \varphi_{\vec{c}_1} \circ \rho_{\vec{c}_1}\\
    &= P_{\vec{c}_1},
\end{align}
which restricted to $\mathcal{\hat M}_{\vec{c}_1}$ equals the identity. Similarly, one can show that $(\varphi_{\vec{c}_2} \circ \rho_{\vec{c}_1}) \circ (\varphi_{\vec{c}_1} \circ \rho_{\vec{c}_2})  = P_{\vec{c}_2}$, which restricted to $\mathcal{\hat M}_{\vec{c}_2}$ equals the identity.
\end{proof}

We can guarantee that a small change in context $\delta \vec{c}$ will not induce a large change in the learned manifold. This stability property of the learned manifold will be characterized through an upper bound on the Hausdorff distance between $\mathcal{\hat M}_{\vec{c}}$ and the $\mathcal{\hat M}_{\vec{c}+\delta \vec{c}}$ on every compact set $K\subset \mathbb{R}^n$.

\begin{proposition}[Lipschitz stability under context perturbation]
 \label{prop:hausdorff-stability}
Fix a reference context $\vec{c}$, a compact set $K \subset \mathbb{R}^n$ and a maximal perturbation size $p>0$. Let $\mathcal{P} \triangleq \{\delta \vec{c} : \lVert \delta \vec{c} \rVert \leq p\}$ denote the set of context perturbations. Under \ref{ass:biorth}--\ref{ass:smooth}, there exists a Lipschitz constant $L = L(K, \vec{c}, p) > 0$ such that, for all $\delta \vec{c} \in \mathcal{P}$:
\begin{equation}
d_H\left(\mathcal{\hat M}_{\vec{c}} \cap K,\ \mathcal{\hat M}_{\vec{c}+\delta \vec{c}} \cap K\right)  \leq  L  \|\delta \vec{c}\|, \label{eq:perturbation}
\end{equation}
where $d_H$ denotes the Hausdorff distance in $\mathbb{R}^n$.
\end{proposition}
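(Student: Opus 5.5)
The plan is to build explicit point-to-point correspondences between the two manifolds using the cross-context maps of \cref{prop:TopoAcrossContext}, and to bound how far each point moves by a mean-value argument in the context variable. Write $\vec{c}' = \vec{c}+\delta\vec{c}$, and let $B_p \triangleq \{\vec{c}+\delta\vec{c} : \delta\vec{c}\in\mathcal{P}\}$, which is compact. Consider the map $T_{\delta}\triangleq\varphi_{\vec{c}'}\circ\rho_{\vec{c}}$. By \cref{prop:TopoAcrossContext} it sends $\hat{\mathcal{M}}_{\vec{c}}$ diffeomorphically onto $\hat{\mathcal{M}}_{\vec{c}'}$, with inverse $\varphi_{\vec{c}}\circ\rho_{\vec{c}'}$.

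\textbf{Step 1 (forward displacement bound).} Take $\vec{x}\in\hat{\mathcal{M}}_{\vec{c}}\cap K$. Since $\vec{x}$ lies on the manifold, $\vec{x}=P_{\vec{c}}(\vec{x})=\varphi_{\vec{c}}(\vec{z})$ with $\vec{z}=\rho_{\vec{c}}(\vec{x})$. The latent point $\vec{z}$ lies in $Z\triangleq\rho_{\vec{c}}(K)$, which is compact because $\rho_{\vec{c}}$ is continuous. Then
\[
\|T_\delta(\vec{x})-\vec{x}\|=\|\varphi_{\vec{c}'}(\vec{z})-\varphi_{\vec{c}}(\vec{z})\|\le \Big(\sup_{(\vec{z},\vec{e})\in Z\times B_p}\|\partial_{\vec{c}}\varphi_{\vec{e}}(\vec{z})\|\Big)\|\delta\vec{c}\| \eqqcolon L_1\|\delta\vec{c}\|.
\]
The supremum $L_1$ is finite by the following chain:
\begin{itemize}
\item by \ref{ass:smooth}, $(\vec{z},\vec{e})\mapsto\varphi_{\vec{e}}(\vec{z})$ is a composition of smooth maps: the layers $\varphi^{(l)}$ are smooth in $(\vec{z},\vec{\alpha}^{(l)},\vec{\beta}^{(l)})$, and $\vec{e}\mapsto(\vec{\alpha}^{(l)},\vec{\beta}^{(l)})$ is smooth;
\item a continuous derivative attains a finite supremum on the compact set $Z\times B_p$;
\item $B_p$ is convex, so the mean value inequality applies along the segment from $\vec{c}$ to $\vec{c}'$.
\end{itemize}

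\textbf{Step 2 (backward displacement bound).} Take $\vec{y}\in\hat{\mathcal{M}}_{\vec{c}'}\cap K$ and write $\vec{y}=\varphi_{\vec{c}'}(\vec{w})$ with $\vec{w}=\rho_{\vec{c}'}(\vec{y})$. Here $\vec{w}$ lies in $W\triangleq\{\rho_{\vec{e}}(\vec{x}) : (\vec{x},\vec{e})\in K\times B_p\}$. This set is compact and, crucially, independent of $\delta\vec{c}$, by joint continuity of $(\vec{x},\vec{e})\mapsto\rho_{\vec{e}}(\vec{x})$. The same argument gives $\|\varphi_{\vec{c}}(\vec{w})-\vec{y}\|\le L_2\|\delta\vec{c}\|$, where $L_2$ is the supremum of $\|\partial_{\vec{c}}\varphi\|$ over $W\times B_p$.

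\textbf{Step 3 (returning the partner points to $K$).} The Hausdorff distance is between the intersections with $K$, so the partners $T_\delta(\vec{x})$ and $\varphi_{\vec{c}}(\vec{w})$ must themselves be matched inside $K$. I split on the size of $\|\delta\vec{c}\|$.
\begin{itemize}
\item \emph{Large perturbations.} If both intersections are nonempty, then $d_H\le\operatorname{diam}K$. So for $\|\delta\vec{c}\|\ge\varepsilon$ the bound holds with constant $\operatorname{diam}K/\varepsilon$.
\item \emph{Small perturbations.} For $\|\delta\vec{c}\|<\varepsilon$ I need a partner inside $K$. The first attempt is to follow the curve $t\mapsto\varphi_{\vec{c}+t\delta\vec{c}}(\vec{z})$, $t\in[0,1]$, which has length at most $L_1\|\delta\vec{c}\|$. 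If this curve exits $K$, I would replace the partner by the nearest point of $\hat{\mathcal{M}}_{\vec{c}'}\cap K$. To bound that distance, I would use that $\hat{\mathcal{M}}_{\vec{c}'}$ is an embedded submanifold (\cref{prop:idempotency}) whose chart $\varphi_{\vec{c}'}$ has differential uniformly bounded below on $W$. The aim is a local-graph argument near $\partial K$ that shows the nearest in-$K$ point is within a constant multiple of $L_1\|\delta\vec{c}\|$.
\end{itemize}
The final constant is then $L=\max(L_1,L_2,\operatorname{diam}K/\varepsilon)$, times the constant from the boundary step.

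\textbf{Main obstacle.} Step 3 is the hard part. If $\hat{\mathcal{M}}_{\vec{c}}$ touches $\partial K$ tangentially, or one intersection is empty while the other is not, the bound may fail without a transversality or regularity assumption on $K$. In that case I would try first to show that such configurations are excluded for generic $K$, and otherwise state precisely the extra condition the local-graph argument requires.
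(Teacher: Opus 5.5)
Your Steps 1 and 2 are, up to the choice of compact latent sets, the paper's entire proof. The paper bounds $\|\partial_{\vec{c}}\varphi\|$ over $\mathcal{P}$ and a compact set $U\cup V\subset\mathbb{R}^m$. Here $U=\rho_{\vec{c}}(\hat{\mathcal{M}}_{\vec{c}}\cap K)$, and $V\supseteq\varphi_{\vec{c}+\delta\vec{c}}^{-1}(K)$ is obtained from a joint-properness argument. It then applies the mean value inequality in the context variable, exactly as you do. Your set $W$ is a cleaner route to $V$: the left-inverse identity gives $\varphi_{\vec{c}'}^{-1}(K)=\rho_{\vec{c}'}(K\cap\hat{\mathcal{M}}_{\vec{c}'})\subseteq W$, so no properness is needed.

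The paper then stops. What it actually establishes is
$\sup_{\vec{x}\in\hat{\mathcal{M}}_{\vec{c}}\cap K}\operatorname{dist}(\vec{x},\hat{\mathcal{M}}_{\vec{c}'})\le L\|\delta\vec{c}\|$ and $\sup_{\vec{y}\in\hat{\mathcal{M}}_{\vec{c}'}\cap K}\operatorname{dist}(\vec{y},\hat{\mathcal{M}}_{\vec{c}})\le L\|\delta\vec{c}\|$. These are distances to the \emph{whole} manifolds. It then passes to $d_H(\hat{\mathcal{M}}_{\vec{c}}\cap K,\hat{\mathcal{M}}_{\vec{c}'}\cap K)$ ``by definition of the Hausdorff distance''. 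That passage is exactly your Step 3, and the paper does not justify it.

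Your Step 3 is a genuine gap, but not one you can close: the proposition is false for general compact $K$ and $p$. Take $n=2$, $m=1$, scalar context $c$, and one layer pair with $\vec{\Phi}_1=\vec{\Psi}_1=(1,0)^T$, constant admissible $\alpha$, and $\vec{\beta}^{(1)}(c)=(0,c)$. This satisfies (A1)--(A4). Since $\sigma_+(\cdot;\alpha)$ is a bijection of $\mathbb{R}$, $\hat{\mathcal{M}}_c=\mathbb{R}\times\{c\}$. Let $K$ be the closed unit disc and $c=1$, so $\hat{\mathcal{M}}_1\cap K=\{(0,1)\}$.
\begin{itemize}
\item For $\delta c=-\epsilon$, the intersection is the chord $\{(t,1-\epsilon):t^2\le 2\epsilon-\epsilon^2\}$. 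Its Hausdorff distance to $\{(0,1)\}$ is $\sqrt{2\epsilon}$, so no finite $L$ works.
\item For $\delta c=+\epsilon$, the intersection is empty and $d_H=+\infty$.
\end{itemize}
Any smooth non-constant height $h(c)$ in place of $c$ behaves the same way at a tangency where $h'\neq 0$. Genericity of $K$ does not rescue the statement either: at $c=0$ the line is transverse to the circle, yet every $p>1$ contains a $\delta c$ with empty intersection.

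You should therefore do one of two things.
\begin{itemize}
\item Prove only what Steps 1--2 give. That is either the two one-sided bounds above, or the genuine Hausdorff bound $d_H(\varphi_{\vec{c}}(Z),\varphi_{\vec{c}'}(Z))\le L_Z\|\delta\vec{c}\|$ for a fixed compact latent patch $Z$ (pair each point with the image of the same $\vec{z}$).
\item Add hypotheses. A sufficient set is: $K=\{g\le 0\}$ with $g\in C^1$ and $\nabla g\neq 0$ on $\partial K$; $\hat{\mathcal{M}}_{\vec{c}}$ transverse to $\partial K$; and $\hat{\mathcal{M}}_{\vec{c}+\delta\vec{c}}\cap K\neq\emptyset$ for all $\delta\vec{c}\in\mathcal{P}$.
\end{itemize}
Under these hypotheses your split goes through. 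For small $\|\delta\vec{c}\|$, use a descent argument on $h_{\vec{e}}=g\circ\varphi_{\vec{e}}$, whose gradient is bounded below near $\{h_{\vec{c}}=0\}$ by transversality. It gives $d_H(\varphi_{\vec{c}}^{-1}(K),\varphi_{\vec{c}'}^{-1}(K))=O(\|\delta\vec{c}\|)$. This transfers to the images via the Lipschitz continuity of $\varphi_{\vec{c}}$ on compacts plus your Step 2 bound. Your $\operatorname{diam}K/\varepsilon$ bound covers the remaining perturbations.
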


\begin{proof}
First, let us show that there exists a compact set $V \subset \mathbb{R}^m$ such that for all $ \delta \vec{c} \in \mathcal{P}$
\begin{equation}
\label{eq:uniform-properness}
\varphi_{\vec{c}+\delta \vec{c}}^{-1}(K) \subset V.
\end{equation}
For all $\delta\vec{c} \in \mathcal{P}$, each $\vec{\Phi}_l$ has full column rank and the activation $\sigma_+$ is proper, thus each layer $\varphi^{(l)}_{\vec{c}+\delta\vec{c}}$ is proper, and $\varphi_{\vec{c}+\delta\vec{c}}$ is proper as a composition of proper maps. Moreover, the map $(\delta \vec{c},\vec{x}) \mapsto \varphi_{\vec{c}+\delta \vec{c}}(\vec{x})$ is $C^1$ jointly in $(\delta \vec{c}, \vec{x}) \in \mathcal{P}\times \mathbb{R}^m$. Since $\mathcal{P}$ is compact and each slice $\varphi_{\vec{c}+\delta\vec{c}}$ is proper, $(\delta \vec{c},\vec{x}) \mapsto \varphi_{\vec{c}+\delta \vec{c}}(\vec{x})$ is itself proper on $\mathbb{R}^m\times\mathcal{P}$. Hence $V \triangleq \bigcup_{\delta\vec{c}\in\mathcal{P}}\varphi_{\vec{c}+\delta\vec{c}}^{-1}(K)$ is a compact set satisfying \eqref{eq:uniform-properness} 

Now, let us find the Lipschitz constant $L$ of \eqref{eq:perturbation}. Let $U \triangleq \rho_{\vec{c}}(\mathcal{\hat M}_{\vec{c}} \cap K) \subset \mathbb{R}^m$. As the continuous image of the compact set $\mathcal{\hat M}_{\vec{c}} \cap K$, $U$ is compact. Define
\begin{equation}
L \triangleq \sup_{\delta\vec{c} \in \mathcal{P}} \ \sup_{u \in U \cup V}  \left\| \partial_{\vec{c}} \varphi_{\vec{c}+\delta\vec{c} }(u) \right\|.
\end{equation}
By the $C^1$ regularity assumption and compactness of $\mathcal{P}$ and $U \cup V$, $L$ is finite.

We now leverage the compactness of $V$ and the Lipschitz constant $L$ to show the bound \eqref{eq:perturbation}.

\textbf{Bound from $\mathcal{\hat M}_{\vec{c}} \cap K$ to $\mathcal{\hat M}_{\vec{c}+\delta\vec{c}}$.}
Let $\vec{x} \in \mathcal{\hat M}_{\vec{c}} \cap K$ and set $u \triangleq \rho_{\vec{c}}(\vec{x}) \in U$, so that $\vec{x} = \varphi_{\vec{c}}(u)$. The point $\vec{y} \triangleq \varphi_{\vec{c}+\delta\vec{c}}(u) \in \mathcal{\hat M}_{\vec{c}+\delta\vec{c}}$ satisfies, by the mean value inequality applied to $t \mapsto \varphi_{\vec{c} + t\,\delta\vec{c}}(u)$ on $[0,1]$,
\begin{align}
\|\vec{x} - \vec{y}\| =  \|\varphi_{\vec{c}}(u) - \varphi_{\vec{c}+\delta\vec{c}}(u)\| 
 &\leq  \sup_{t \in [0,1]} \left\| \partial_{\vec{c}} \varphi_{\vec{c} + t\,\delta\vec{c}}(u) \right\| \cdot \|\delta\vec{c}\| \\
 &\leq  L  \|\delta\vec{c}\|.
\end{align}
Taking the infimum over $\mathcal{\hat M}_{\vec{c}+\delta\vec{c}}$ and then the supremum over $\vec{x} \in \mathcal{\hat M}_{\vec{c}} \cap K$ yields:
\begin{equation}
\label{eq:direction-1}
\sup_{\vec{x} \in \mathcal{\hat M}_{\vec{c}} \cap K} \ \inf_{\vec{y} \in \mathcal{\hat M}_{\vec{c}+\delta\vec{c}}} \|\vec{x} - \vec{y}\| \leq  L \|\delta\vec{c}\|.
\end{equation}
\textbf{Bound from $\mathcal{\hat M}_{\vec{c}+\delta\vec{c}} \cap K$ to $\mathcal{\hat M}_{\vec{c}}$.}
Let $\vec{y} \in \mathcal{\hat M}_{\vec{c}+\delta\vec{c}} \cap K$ and write $\vec{y} = \varphi_{\vec{c}+\delta\vec{c}}(v)$ for some $v \in V$. The point $\vec{x} \triangleq \varphi_{\vec{c}}(v) \in \mathcal{\hat M}_{\vec{c}}$ satisfies, by the same mean value estimate,
\begin{equation}
\|\vec{y} - \vec{x}\|  =  \|\varphi_{\vec{c}+\delta\vec{c}}(v) - \varphi_{\vec{c}}(v)\| \ \leq \ L \, \|\delta\vec{c}\|.
\end{equation}
Hence
\begin{equation}
\label{eq:direction-2}
\sup_{\vec{y} \in \mathcal{\hat M}_{\vec{c}+\delta\vec{c}} \cap K} \ \inf_{\vec{x} \in \mathcal{\hat M}_{\vec{c}}} \|\vec{x} - \vec{y}\| \ \leq \ L \, \|\delta\vec{c}\|.
\end{equation}
By definition of the Hausdorff distance, combining \eqref{eq:direction-1} and \eqref{eq:direction-2} concludes the proof.
\end{proof}

\section{Constrained autoencoder details}
\label{sec:appendix-cae}
The constrained autoencoder was initially proposed by \citet{otto_learning_2023}.
They assert that their autoencoder is a projection to be leveraged for projection-based reduced-order modeling.
The autoencoder is composed of an encoder $\rho$ and a decoder $\varphi$ such that the encoder maps the data space $\set{X}$ to a latent space $\set{Z}$,
and the decoder reconstructs the data from the latent space.
By imposing $\rho \circ \varphi = \text{id}_{\set{Z}}$, we ensure that $P = \varphi \circ \rho$ is a projection. In this case, $P$ is smooth and idempotent,
so $\set{\hat M} = \text{Range}(P)$ is a smooth embedded submanifold of $\set{X}$.

The encoder~$\rho$ and decoder~$\varphi$ are defined as a composition of layers such that
$\rho = \rho^{(1)} \circ \cdots \circ \rho^{(L)}$ and $\varphi = \varphi^{(L)} \circ \cdots \circ \varphi^{(1)}$, where
$\rho^{(l)} : \mathbb{R}^{n_l} \to \mathbb{R}^{n_{l-1}}$ and $\varphi^{(l)} : \mathbb{R}^{n_{l-1}} \to \mathbb{R}^{n_l}$.

\subsection{Activation functions}\label{sec:act-func}

The activation functions used in the \cae{} are a crucial component, designed to be smooth and invertible, with the encoder and decoder utilizing inverse pairs $(\sigma_-, \sigma_+)$. These specific functions were first presented by \citet{otto_learning_2023} as part of the \cae{} framework. They are derived from a particular form of hyperbola where the upper and lower branches are reflections of one another with respect to the axis $y=x$.

More formally, the expression for this inverse pair is:
\begin{align*}
    \sigma_{\pm}(\vec{x}) &= \frac{b\vec{x}}{a} \mp \frac{\sqrt{2}}{a \sin(\alpha)}
    \pm \frac{1}{a}\sqrt{\left(\frac{2\vec{x}}{\sin(\alpha)\cos(\alpha)} \mp
    \frac{\sqrt{2}}{\cos(\alpha)} \right)^2 + 2a},
\end{align*}
where
\begin{align*}
        a &= \csc^2(\alpha) - \sec^2(\alpha), &
        b &= \csc^2(\alpha) + \sec^2(\alpha), &
        \text{with} \; 0 &< \alpha < \pi/4.
\end{align*}
Notably, the parameter $\alpha$ in these equations is one of the targets for our neuromodulation, allowing the activation functions to adapt based on context. A visualization of these activation functions for different values of $\alpha$ is provided in \cref{fig:activation-functions}.

\begin{figure}
    \centering
    \begin{tikzpicture}
    \begin{groupplot}[
        group style={
            group size=3 by 1,
            horizontal sep=1cm,
        },
        width=0.35\linewidth,
        height=6cm,
        xlabel={$x$},
        xmin=-2, xmax=2,
        ymin=-3, ymax=3,
        grid=major,
        legend style={at={(-1.4,-0.3)}, anchor=west, legend columns=3},
        axis equal=false,
    ]

    % Loop through different alpha values for different subplots
    \pgfplotsforeachungrouped \alphanopi/\plotcolor in {20/cyan,8/cyan,4.5/cyan} {
        \pgfmathsetmacro{\alphaval}{pi/\alphanopi}
        \pgfmathsetmacro{\a}{1/(sin(\alphaval r)*sin(\alphaval r)) - 1/(cos(\alphaval r)*cos(\alphaval r))}
        \pgfmathsetmacro{\b}{1/(sin(\alphaval r)*sin(\alphaval r)) + 1/(cos(\alphaval r)*cos(\alphaval r))}
        \edef\tmp{
        \noexpand\nextgroupplot[
            title={$\alpha= \pi / \alphanopi$},
        ] % Name each plot for coordinate reference
        \noexpand\addplot[\plotcolor, thick, domain=-2:2, samples=150, label={$\sigma_+$}] {
        (\b*x)/\a - sqrt(2)/(\a*sin(\alphaval r)) + 
        (1/\a)*sqrt(((2*x)/(sin(\alphaval r)*cos(\alphaval r)) - sqrt(2)/cos(\alphaval r))^2 + 2*\a)
        };
        \noexpand\addplot[\plotcolor, dashed, thick, domain=-2:2, samples=150, label={$\sigma_-$}] {
            (\b*x)/\a + sqrt(2)/(\a*sin(\alphaval r)) - 
            (1/\a)*sqrt(((2*x)/(sin(\alphaval r)*cos(\alphaval r)) + sqrt(2)/cos(\alphaval r))^2 + 2*\a)
        };
        \noexpand\addplot[black, dotted, thick, domain=-2:2] {x};
        }
        \tmp
        }
        \legend{$\sigma_+$, $\sigma_-$, $y=x$}
    \end{groupplot}
\end{tikzpicture}
    \label{fig:activation-functions}
    \caption{Activation functions $\sigma_{\pm}$ for different values of $\alpha$.}
\end{figure}
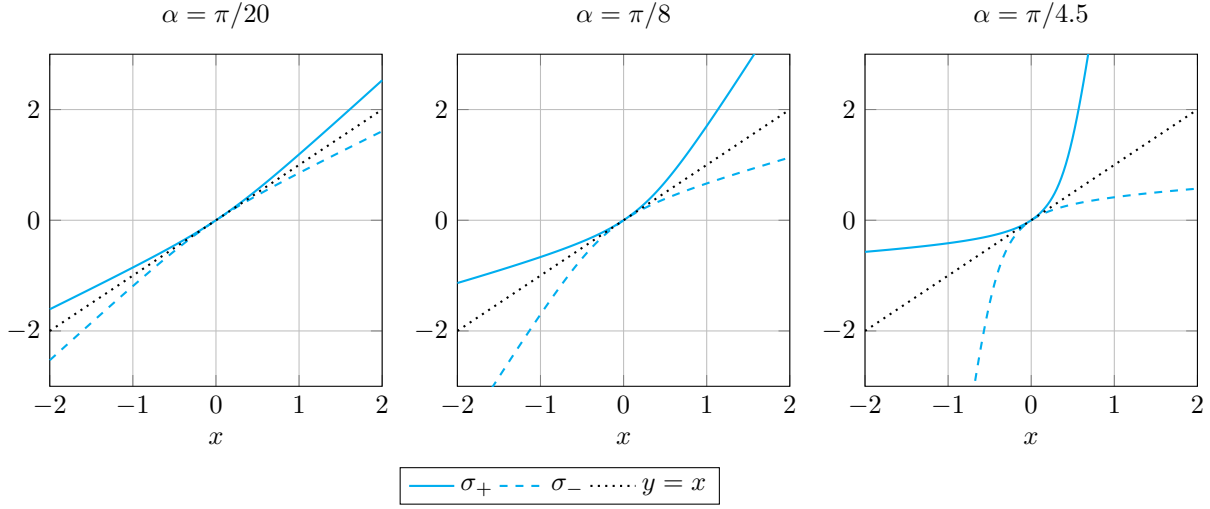
\begin{equation}
    \rho^{(l)}(\vec{x}^{(l)}) = \sigma_- \left( \vec{\Psi}_l^T (\vec{x}^{(l)} - \vec{\beta}_l) \right)\quad \text{and} \quad
    \varphi^{(l)}(\vec{z}^{(l-1)}) = \vec{\Phi}_l \sigma_+ \left(\vec{z}^{(l-1)}\right) + \vec{\beta}_l ,
\end{equation}
where $\vec{\Psi}_l \in \mathbb{R}^{n_{l-1} \times n_l}$ and $\vec{\Phi}_l \in \mathbb{R}^{n_l \times n_{l-1}}$ are the weight matrices for the encoder and decoder layers, respectively. The pair $(\sigma_-, \sigma_+)$ consists of smooth activation functions that are inverses of one another, and $\vec{\beta}_l$ are the bias vectors.

For each layer to satisfy the idempotent property, a crucial condition must be met by the weight matrices:
\begin{equation}
    \vec{\Psi}_l^T \vec{\Phi}_l = \vec{I}_{\mathbb{R}^{n_{l-1}}} .
\end{equation}
This defines $\vec{\Phi}_l$ and $\vec{\Psi}_l$ as biorthogonal matrices. We enforce this biorthogonality constraint during training via Riemannian optimization on the biorthogonal manifold using the \texttt{geoopt} library \citep{kochurovGeooptRiemannianOptimization2020,friedl_riemannian_2025}.

\section{Neuromodulation details}
\label{sec:appendix-neuromodulation}
Neuromodulation in biological neural systems refers to the process by which
neuromodulators modulate neuronal activity and synaptic properties,
enabling adaptive responses. In artificial neural networks, this concept translates into dynamically adjusting network parameters based on contextual information, allowing the same network architecture to exhibit different behaviors depending on the input context.

In our neuromodulated constrained autoencoder, we implement context-dependent activation functions where the activation parameters $\vec{\alpha}^{(l)}$ and bias vectors $\vec{\beta}^{(l)}$ for a pair of layers $l$ are modulated by a context vector $\vec{c}$.

\paragraph{Modulated activation functions}

The encoder and decoder layers in our method use context-dependent activation functions:
\begin{align}
    \rho^{(l)}(\vec{x}^{(l)}) &= \sigma_- \left( \vec{\Psi}_l^T (\vec{x}^{(l)} - \vec{\beta}^{(l)}); \vec{\alpha}^{(l)} \right), \\
    \varphi^{(l)}(\vec{z}^{(l-1)}) &= \vec{\Phi}_l \sigma_+ \left(\vec{z}^{(l-1)}; \vec{\alpha}^{(l)}\right) + \vec{\beta}^{(l)},
\end{align}
where the parameter $\alpha$ becomes variable and is specific to each input dimension, such that it writes now $\vec{\alpha}^{(l)} \in \mathbb{R}^{n_{l-1}}$.
In addition, the values $\vec{\alpha}^{(l)}$ are constrained to lie within a specific range $[\frac{\pi}{30}, \frac{\pi}{6}]$ which is more restrictive than the initial setting with the range $]0,\frac{\pi}{4}[$~.
The reason behind this can be shown in \cref{fig:act-func-issue} where we can see that for $\alpha$ values close to $\pi/4$, we have very steep slopes which can cause issues with training.
Alternatively, the biases $\vec{\beta}^{(l)}$ are left unconstrained.

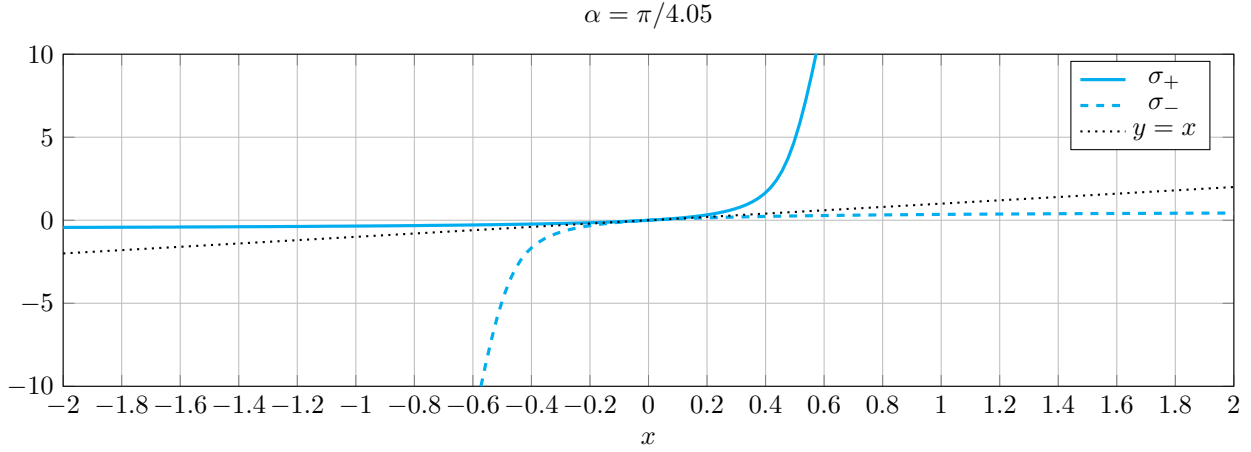
\begin{figure}
        \centering
        \begin{tikzpicture}
    \begin{groupplot}[
        group style={
            group size=1 by 1,
            horizontal sep=1cm,
        },
        width=1.0\linewidth,
        height=6cm,
        xlabel={$x$},
        xmin=-2, xmax=2,
        ymin=-10, ymax=10,
        grid=major,
        axis equal=false,
    ]
    
    % Loop through different alpha values for different subplots
    \pgfplotsforeachungrouped \alphanopi/\plotcolor in {4.05/cyan} {
        \pgfmathsetmacro{\alphaval}{pi/\alphanopi}
        \pgfmathsetmacro{\a}{1/(sin(\alphaval r)*sin(\alphaval r)) - 1/(cos(\alphaval r)*cos(\alphaval r))}
        \pgfmathsetmacro{\b}{1/(sin(\alphaval r)*sin(\alphaval r)) + 1/(cos(\alphaval r)*cos(\alphaval r))}
        \edef\tmp{
        \noexpand\nextgroupplot[
            title={$\alpha= \pi / \alphanopi$},
        ] % Name each plot for coordinate reference
        \noexpand\addplot[\plotcolor, very thick, domain=-2:1, samples=150, label={$\sigma_+$}] {
        (\b*x)/\a - sqrt(2)/(\a*sin(\alphaval r)) + 
        (1/\a)*sqrt(((2*x)/(sin(\alphaval r)*cos(\alphaval r)) - sqrt(2)/cos(\alphaval r))^2 + 2*\a)
        };
        \noexpand\addplot[\plotcolor, dashed, very thick, domain=-1:2, samples=150, label={$\sigma_-$}] {
            (\b*x)/\a + sqrt(2)/(\a*sin(\alphaval r)) - 
            (1/\a)*sqrt(((2*x)/(sin(\alphaval r)*cos(\alphaval r)) + sqrt(2)/cos(\alphaval r))^2 + 2*\a)
        };
        \noexpand\addplot[black, dotted, thick, domain=-2:2] {x};
        }
        \tmp
        }
        \legend{$\sigma_+$, $\sigma_-$, $y=x$}
    \end{groupplot}
\end{tikzpicture}
        \caption{Activation functions $\sigma_{\pm}$ for $\alpha$ near $\frac{\pi}{4}$.}
        \label{fig:act-func-issue}
\end{figure}

\paragraph{Two-stage neuromodulation process}

Stage 1: The context vector $\vec{c} \in \mathbb{R}^{d_c}$ is processed through a fully-connected neural network to generate a neuromodulation signal $\vec{s} = f_{\text{nmd}}(\vec{c}; \vec{\theta})$.

Stage 2: For each pair of layers $l$, the neuromodulation signal is transformed into layer-pair-specific activation parameters:
\begin{equation}
    \vec{\alpha}^{(l)} = (\alpha_{\max}-\alpha_{\min})\cdot\textsc{sigmoid}(\vec{W}_{l, \alpha}^T \vec{s}) + \alpha_{\min},
\end{equation}
and
\begin{equation}
    \vec{\beta}^{(l)} = \vec{W}^T_{l, \beta} \vec{s} + \vec{b}_l,
\end{equation}
where $\vec{W}_{l,\alpha}$, $\vec{W}_{l,\beta}$, and $\vec{b}_l$ are learnable layer-specific parameters.
 
\section{Experimental details}
\label{sec:exp-details}
\subsection{Training objective}
\label{sec:appendix-loss}
All architectures are trained with the same loss, combining position reconstruction and a Jacobian-weighted velocity term:
\begin{equation*}
\mathcal{L}_{ae} = \frac{1}{N} \sum_{i=1}^N \left\|\vec{x}_i - P(\vec{x}_i)\right\|^2 + \left\|\dot{\vec{x}}_i - \nabla_{\vec{x}}P(\vec{x}_i)\cdot \dot{\vec{x}}_i\right\|^2,
\end{equation*}
where $N$ is the batch size and $P = \varphi \circ \rho$ is the reconstruction map.

% -------------------------------------------------------
\subsection{16-DoF pendulum}
\label{sec:appendix-pendulum}

The first four pendulum links are modeled in \emph{MuJoCo}~\citep{todorov2012mujoco} as capsules with radius $0.05\,\si{\metre}$ and mass $1.0\,\si{\kilogram}$ each.
The remaining 12 DoF are derived from the first four joint angles through nonlinear coupling functions.
The context vector $\vec{c} = (l_1, l_2, l_3, l_4)$ contains the four link lengths.

\subsubsection{Coupling functions}
The context-dependent coupling functions used in the main experiment are shown in \cref{fig:coupling-tables}.

\begin{table}[t]
    \begin{center}
        \caption{Coupling functions.}
        \label{fig:coupling-tables}
        \begin{subtable}{.49\linewidth}
            \caption{Original coupling from \cite{friedl_riemannian_2025}}
            \label{fig:coupling-tables-a}
            \centerline{\begin{tabular}{c|c}
                DoF & $f(q_1, q_2, q_3, q_4)$ \\
                \midrule
                $q_5$ & $q_3 - \cos(q_2)$ \\
                $q_6$ & $q_1 + 0.1\sin(q_2)$ \\
                $q_7$ & $q_4\cos(q_2)$ \\
                $q_8$ & $q_1 + q_3^2$ \\
                $q_9$ & $1.5\sin(q_2)$ \\
                $q_{10}$ & $-q_4q_1$ \\
                $q_{11}$ & $\sin(q_1)$ \\
                $q_{12}$ & $0.4q_3q_4$ \\
                $q_{13}$ & $-0.9q_1-q_2+q_3-2q_4^2$ \\
                $q_{14}$ & $-3\sin(q_3)$ \\
                $q_{15}$ & $-2q_3^2$ \\
                $q_{16}$ & $-0.9q_1^2$ \\
            \end{tabular}}
        \end{subtable}
        \begin{subtable}{.49\linewidth}
            \caption{Context-dependent coupling}
            \label{fig:coupling-tables-b}
            \centerline{\begin{tabular}{c|c}
                DoF & $f(q_1, q_2, q_3, q_4, {\color{blue}l_1, l_2, l_3, l_4})$ \\
                \midrule
                $q_5$ & $q_3 - \cos({\color{blue}2l_2}q_2)$ \\
                $q_6$ & $q_1 + {\color{blue}2l_1}0.1\sin({\color{blue}2l_2}q_2)$ \\
                $q_7$ & $q_4\cos({\color{blue}2l_4}q_2)$ \\
                $q_8$ & $q_1 + q_3^{2\cdot{\color{blue}2l_3}}$ \\
                $q_9$ & ${\color{blue}2l_2}1.5\sin(q_2)$ \\
                $q_{10}$ & $-{\color{blue}(l_4+l_1)}q_4q_1$ \\
                $q_{11}$ & $\sin({\color{blue}2l_1}q_1)$ \\
                $q_{12}$ & ${\color{blue}2l_3}0.4q_3q_4$ \\
                $q_{13}$ & $-{\color{blue}2l_1}0.9q_1-q_2+q_3-2q_4^{2\cdot{\color{blue}2l_4}}$ \\
                $q_{14}$ & $-{\color{blue}2l_3}3\sin(q_3)$ \\
                $q_{15}$ & $-2q_3^{2\cdot{\color{blue}2l_3}}$ \\
                $q_{16}$ & $-0.9q_1^{2\cdot{\color{blue}2l_1}}$ \\
            \end{tabular}}
        \end{subtable}
    \end{center}
\end{table}

\subsubsection{Data collection}
\label{sec:data-pendulum}
For each trajectory, link lengths $l_1,\ldots,l_4$ are independently and uniformly sampled from $[0.35, 0.65]\,\si{\metre}$.
\paragraph{Training set.} 100 trajectories are simulated in MuJoCo for $3\,\si{\second}$ at a timestep of $1\,\si{\ms}$.
For each configuration, the initial angles of the first four joints are drawn uniformly from $[0\si{\degree}, 30\si{\degree}]$.
\paragraph{Test set.} 256 configurations are obtained by taking all combinations from the grid $\{0.35, 0.45, 0.55, 0.65\}^4\,\si{\metre}$.
All joints are initialized at $15\si{\degree}$, providing a systematic evaluation across the full context range.

\subsubsection{Hyperparameters}
\label{sec:hyper-pendulum}
The layer sizes $n_l$ refer to the biorthogonal layer pairs within the encoder $\rho^{(l)}:\mathbb{R}^{n_{l}}\to\mathbb{R}^{n_{l-1}}$ and the decoder $\varphi^{(l)}:\mathbb{R}^{n_{l-1}}\to\mathbb{R}^{n_{l}}$.
Hyperparameters for the constrained architectures are given in \cref{tab:pendulum-params-constrained} and for the unconstrained baselines in \cref{tab:pendulum-params-unconstrained}.

\begin{table}[ht]
\centering
\caption{Hyperparameters for the 16-DoF pendulum --- constrained architectures (\cae, \contextcae, \ncae).}
\label{tab:pendulum-params-constrained}
\centering\medskip
\begin{tabular}{lccc}
\toprule
\textbf{Hyperparameter} & \textbf{\cae} & \textbf{\contextcae} & \textbf{\ncae} \\
\midrule
\textit{Main architecture} & & & \\
Layer sizes ($n_l$) & $[8, 16, 16, 16]$ & \multicolumn{2}{c}{$[8, 12, 14, 16]$} \\
Latent dimension ($d$) & \multicolumn{3}{c}{4} \\
Activation parameter $\alpha$ & $\pi/8$ & $\pi/10$ & N/A \\
\midrule
\textit{Contextual mechanism} & & & \\
Mechanism & None & Concatenation & Neuromodulation \\
MLP topology & \multicolumn{2}{c}{N/A} & $[4,4,4]$ \\
MLP activation & \multicolumn{2}{c}{N/A} & SiLU \\
\midrule
\textit{Optimization} & & & \\
Optimizer & \multicolumn{3}{c}{Riemannian Adam (\texttt{geoopt})} \\
Learning rate & \multicolumn{3}{c}{$5\cdot10^{-2}$} \\
Weight decay & \multicolumn{3}{c}{$10^{-5}$} \\
Epochs & \multicolumn{3}{c}{10000} \\
Batch size & $2048$ & \multicolumn{2}{c}{$4096$} \\
Scheduler & \multicolumn{3}{c}{ReduceLROnPlateau (patience 250, factor 0.9)} \\
\bottomrule
\end{tabular}
\end{table}

\begin{table}[ht]
\centering
\caption{Hyperparameters for the 16-DoF pendulum --- unconstrained baselines (AE, \contextAE{}, FiLM+AE, SoftNcAE). $\dagger$~\contextAE{} input dimension is $20 = 16 + 4$ due to context concatenation.}
\label{tab:pendulum-params-unconstrained}
\centering\medskip
\begin{tabular}{lcccc}
\toprule
\textbf{Hyperparameter} & \textbf{AE} & \textbf{\contextAE{}} & \textbf{FiLM+AE} & \textbf{SoftNcAE} \\
\midrule
\textit{Main architecture} & & & & \\
Encoder layers & $[16,16,16,8,4]$ & $[20,16,16,8,4]^\dagger$ & $[16,16,16,8,4]$ & $[16,14,12,8,4]$ \\
Decoder layers & $[4,8,16,16,16]$ & $[4,8,16,16,16]$ & $[4,8,16,16,16]$ & $[4,8,12,14,16]$ \\
Latent dimension ($d$) & \multicolumn{4}{c}{4} \\
Activation function & ReLU & ReLU & SiLU & $\sigma_\pm$ \\
\midrule
\textit{Contextual mechanism} & & & & \\
Mechanism & None & Concatenation & FiLM & Neuromodulation \\
Soft penalty coefficient & \multicolumn{3}{c}{N/A} & $1.0$ \\
MLP topology & \multicolumn{2}{c}{N/A} & per-layer linear & $[4,4]$ \\
MLP activation & \multicolumn{3}{c}{N/A} & SiLU \\
\midrule
\textit{Optimization} & & & & \\
Optimizer & \multicolumn{4}{c}{Adam} \\
Learning rate & $10^{-2}$ & \multicolumn{2}{c}{$10^{-3}$} & $5\cdot10^{-2}$ \\
Weight decay & \multicolumn{3}{c}{$10^{-6}$} & $10^{-5}$ \\
Epochs & \multicolumn{4}{c}{10000} \\
Batch size & \multicolumn{3}{c}{$2048$} & $4096$ \\
Scheduler & \multicolumn{4}{c}{ReduceLROnPlateau (patience 250, factor 0.9)} \\
\bottomrule
\end{tabular}
\end{table}

\subsection{Lorenz96}
\label{sec:appendix-lorenz}

The Lorenz96 system~\citep{75462} consists of $N=36$ variables governed by
\begin{equation*}
    \dot{x}_k = (x_{k+1} - x_{k-2})\,x_{k-1} - x_k + F, \qquad k = 1,\ldots,N,
\end{equation*}
with periodic boundary conditions $x_k = x_{k+N}$.
The context is the scalar forcing constant $\vec{c} = F$.

\subsubsection{Data collection}
\label{sec:data-lorenz}
Trajectories are integrated with a 4th-order Runge-Kutta method at $\Delta t = 0.01$.
The initial condition $x_k(0) = F + \sin\!\left(8.05\cdot\tfrac{2\pi k}{N}\right)$ places the state near the attractor associated with each value of $F$.
A transient period is discarded before recording; each trajectory then consists of 500 time points.
\paragraph{Training set.} 18 trajectories with $F$ sampled uniformly from $[3.133, 3.193]$.
\paragraph{Test set.} 10 values of $F$ linearly spaced across $[3.133, 3.193]$.

\subsubsection{Forcing regimes}
Hovmöller diagrams in~\cref{fig:app-F-regimes} illustrate the two dynamical regimes on either side of the bifurcation.
The system transitions from a traveling wave with spatial frequency $8$ at $F=3.133$ to frequency $7$ at $F=3.193$.
\begin{figure*}[!h]
\centering
    \begin{subfigure}[b]{0.49\linewidth}
        \includegraphics[width=\linewidth]{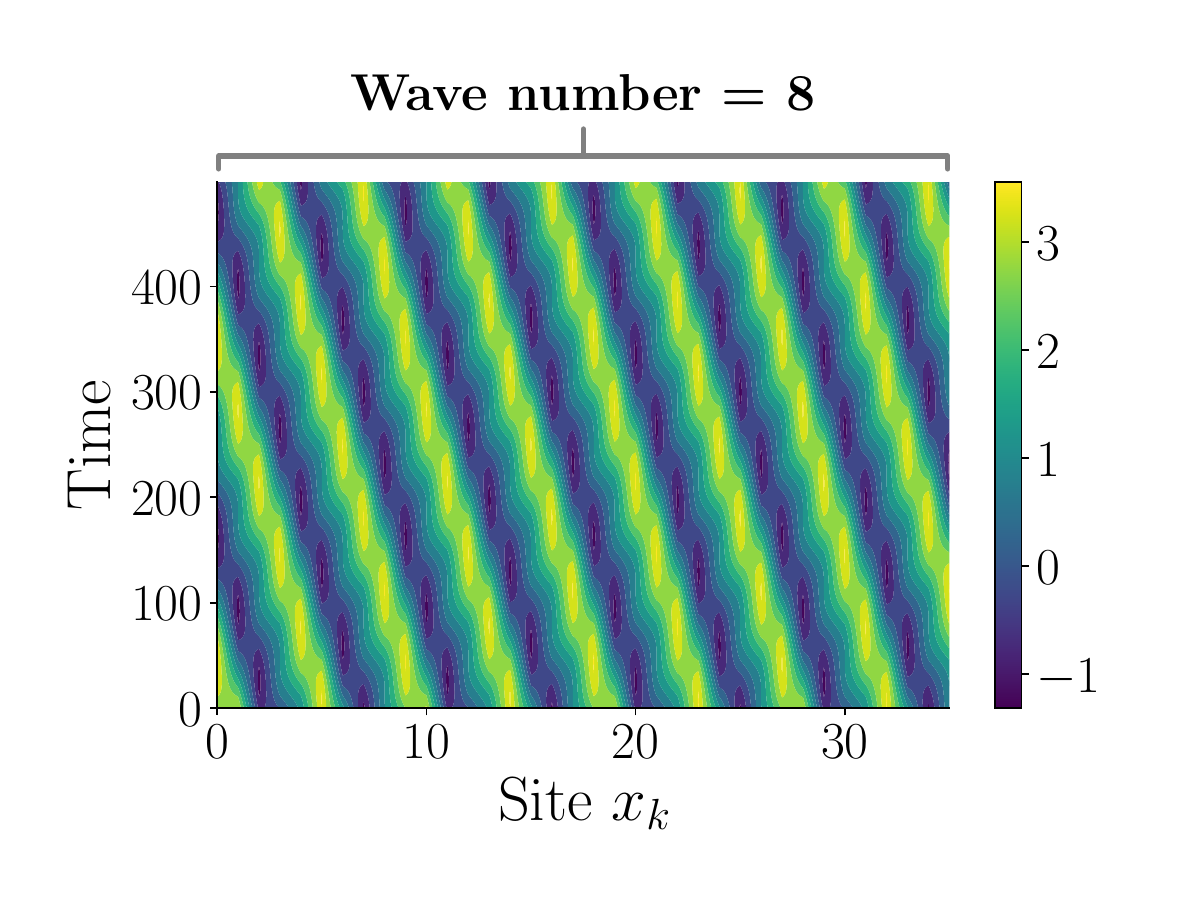}
        \caption{$F=3.133$}
    \end{subfigure}
    \begin{subfigure}[b]{0.49\linewidth}
        \includegraphics[width=\linewidth]{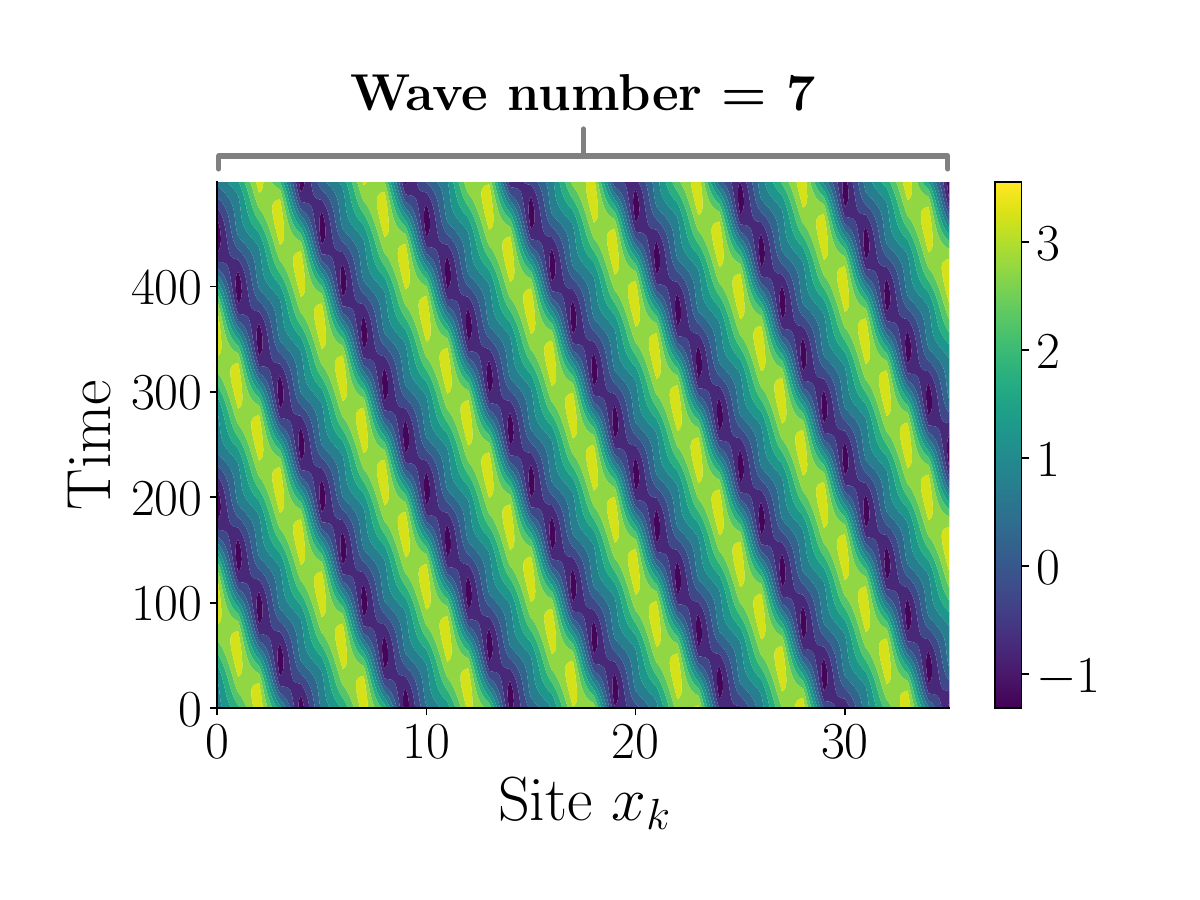}
        \caption{$F=3.193$}
    \end{subfigure}
    \caption{
        Hovmöller diagrams of the Lorenz96 system for values of $F$ before and after the bifurcation.
        }
    \label{fig:app-F-regimes}
\end{figure*}

\subsubsection{Hyperparameters}
\label{sec:hyper-lorenz}
Hyperparameters for the constrained architectures are given in \cref{tab:lorenz-params-constrained} and for the unconstrained baselines in \cref{tab:lorenz-params-unconstrained}.

\begin{table}[!h]
\centering
\caption{Hyperparameters for the Lorenz96 experiment --- constrained architectures (\cae, \contextcae, \ncae).}
\label{tab:lorenz-params-constrained}
\centering\medskip
\begin{tabular}{lccc}
\toprule
\textbf{Hyperparameter} & \textbf{\cae} & \textbf{\contextcae} & \textbf{\ncae} \\
\midrule
\textit{Main architecture} & & & \\
Layer sizes ($n_l$) & $[21, 36]$ & $[20, 36]$ & $[18, 36]$ \\
Latent dimension ($d$) & \multicolumn{3}{c}{2} \\
Activation parameter $\alpha$ & $\pi/8$ & $\pi/10$ & N/A \\
\midrule
\textit{Contextual mechanism} & & & \\
Mechanism & None & Concatenation & Neuromodulation \\
MLP topology & \multicolumn{2}{c}{N/A} & $[1,2,2,2]$ \\
MLP activation & \multicolumn{2}{c}{N/A} & SiLU \\
\midrule
\textit{Optimization} & & & \\
Optimizer & \multicolumn{3}{c}{Riemannian Adam (\texttt{geoopt})} \\
Learning rate & \multicolumn{3}{c}{$5\cdot10^{-2}$} \\
Weight decay & \multicolumn{3}{c}{$10^{-6}$} \\
Epochs & \multicolumn{3}{c}{10000} \\
Batch size & $512$ & $256$ & $128$ \\
Scheduler & \multicolumn{3}{c}{ReduceLROnPlateau (patience 250, factor 0.9)} \\
\bottomrule
\end{tabular}
\end{table}

\begin{table}[!h]
\centering
\caption{Hyperparameters for the Lorenz96 experiment --- unconstrained baselines (AE, \contextAE{}, FiLM+AE, SoftNcAE). $\dagger$~\contextAE{} input dimension is $37 = 36 + 1$ due to context concatenation.}
\label{tab:lorenz-params-unconstrained}
\centering\medskip
\begin{tabular}{lcccc}
\toprule
\textbf{Hyperparameter} & \textbf{AE} & \textbf{\contextAE{}} & \textbf{FiLM+AE} & \textbf{SoftNcAE} \\
\midrule
\textit{Main architecture} & & & & \\
Encoder layers & $[36,24,12,2]$ & $[37,20,2]^\dagger$ & $[36,24,12,2]$ & $[36,18,2]$ \\
Decoder layers & $[2,12,24,36]$ & $[2,12,24,36]$ & $[2,12,24,36]$ & $[2,18,36]$ \\
Latent dimension ($d$) & \multicolumn{4}{c}{2} \\
Activation function & ReLU & ReLU & SiLU & $\sigma_\pm$ \\
\midrule
\textit{Contextual mechanism} & & & & \\
Mechanism & None & Concatenation & FiLM & Neuromodulation \\
Soft penalty coefficient & \multicolumn{3}{c}{N/A} & $1.0$ \\
MLP topology & \multicolumn{2}{c}{N/A} & per-layer linear & $[1,2,2,2]$ \\
MLP activation & \multicolumn{3}{c}{N/A} & SiLU \\
\midrule
\textit{Optimization} & & & & \\
Optimizer & \multicolumn{4}{c}{Adam} \\
Learning rate & $10^{-2}$ & \multicolumn{2}{c}{$10^{-3}$} & $5\cdot10^{-2}$ \\
Weight decay & \multicolumn{4}{c}{$10^{-6}$} \\
Epochs & \multicolumn{4}{c}{10000} \\
Batch size & $256$ & \multicolumn{3}{c}{$128$}\\
Scheduler & \multicolumn{4}{c}{ReduceLROnPlateau (patience 250, factor 0.9)} \\
\bottomrule
\end{tabular}
\end{table}

\subsection{Compute ressources}
\label{sec:compute-ressource}
All training experiments were performed on an academic HPC cluster, using a single NVIDIA Tesla A100 GPU (40 GB VRAM) per run on a node with AMD EPYC Milan CPUs and up to 512 GB of system RAM. The total compute footprint of the project, including preliminary experiments not reported in this paper, remains modest by modern deep learning standards.

% -------------------------------------------------------
\clearpage
\section{Robustness results}
\label{sec:appendix-robustness}

\subsection{Out-of-distribution contexts}
\label{ssec:OOD-robust}
\begin{table}[!h]
    \centering
    \caption{
        \textbf{Out-of-distribution context performance on Lorenz96} (mean $\pm$ std RMSE).
        OOD distance is expressed as a percentage beyond the training boundary.
        The projection remains geometrically valid by construction (\cref{prop:hausdorff});
        Only reconstruction fidelity degrades with distance.
    }\medskip
    \label{tab:lorenz_ood}
    \resizebox{\linewidth}{!}{\begin{tabular}{lcccccc}
        \toprule
        & \multicolumn{2}{c}{\textbf{OOD +10\%}} & \multicolumn{2}{c}{\textbf{OOD +20\%}} & \multicolumn{2}{c}{\textbf{OOD +50\%}} \\
        \cmidrule(lr){2-3} \cmidrule(lr){4-5} \cmidrule(lr){6-7}
        \textbf{Model} & \textbf{Pos. RMSE} $\downarrow$ & \textbf{Vel. RMSE} $\downarrow$ & \textbf{Pos. RMSE} $\downarrow$ & \textbf{Vel. RMSE} $\downarrow$ & \textbf{Pos. RMSE} $\downarrow$ & \textbf{Vel. RMSE} $\downarrow$ \\
        \midrule
        \cae        & $0.467_{\pm 0.250}$ & $0.649_{\pm 0.099}$ & $0.468_{\pm 0.250}$ & $0.650_{\pm 0.097}$ & $0.472_{\pm 0.250}$ & $0.659_{\pm 0.096}$ \\
        \contextcae & $0.315_{\pm 0.251}$ & $0.646_{\pm 0.197}$ & $0.318_{\pm 0.252}$ & $0.654_{\pm 0.197}$ & $0.326_{\pm 0.253}$ & $0.689_{\pm 0.195}$ \\
        AE          & $0.176_{\pm 0.091}$ & $16.65_{\pm 6.327}$ & $0.157_{\pm 0.087}$ & $12.38_{\pm 5.653}$ & $\mathbf{0.177_{\pm 0.092}}$ & $18.09_{\pm 7.681}$ \\
        \contextAE{}   & $0.237_{\pm 0.051}$ & $10.81_{\pm 3.046}$ & $0.234_{\pm 0.049}$ & $10.32_{\pm 2.658}$ & $0.251_{\pm 0.048}$ & $10.72_{\pm 2.887}$ \\
        FiLM+AE     & $0.928_{\pm 0.420}$ & $5.989_{\pm 2.940}$ & $2.594_{\pm 2.212}$ & $14.87_{\pm 12.06}$ & $43.66_{\pm 52.39}$ & $273.9_{\pm 473.7}$ \\
        SoftNcAE    & $0.734_{\pm 0.224}$ & $2.221_{\pm 0.544}$ & $0.736_{\pm 0.223}$ & $2.223_{\pm 0.543}$ & $0.747_{\pm 0.217}$ & $2.235_{\pm 0.540}$ \\
        \midrule
        \ncae{} (ours) & $\mathbf{0.083_{\pm 0.069}}$ & $\mathbf{0.330_{\pm 0.204}}$ & $\mathbf{0.105_{\pm 0.098}}$ & $\mathbf{0.360_{\pm 0.252}}$ & $0.240_{\pm 0.350}$ & $\mathbf{0.545_{\pm 0.638}}$ \\
        \bottomrule
    \end{tabular}}
\end{table}

\begin{table}[!h]
    \centering
    \caption{
        \textbf{Out-of-distribution context performance on the pendulum} (mean $\pm$ std RMSE).
        OOD distance is expressed as a percentage beyond the training boundary.
        The projection remains geometrically valid by construction (\cref{prop:hausdorff});
        Only reconstruction fidelity degrades with distance.
    }\medskip
    \label{tab:pendulum_ood}
    \resizebox{\linewidth}{!}{\begin{tabular}{lcccccc}
        \toprule
        & \multicolumn{2}{c}{\textbf{OOD +10\%}} & \multicolumn{2}{c}{\textbf{OOD +20\%}} & \multicolumn{2}{c}{\textbf{OOD +50\%}} \\
        \cmidrule(lr){2-3} \cmidrule(lr){4-5} \cmidrule(lr){6-7}
        \textbf{Model} & \textbf{Pos. RMSE} $\downarrow$ & \textbf{Vel. RMSE} $\downarrow$ & \textbf{Pos. RMSE} $\downarrow$ & \textbf{Vel. RMSE} $\downarrow$ & \textbf{Pos. RMSE} $\downarrow$ & \textbf{Vel. RMSE} $\downarrow$ \\
        \midrule
        \cae        & $0.079_{\pm 0.002}$ & $0.499_{\pm 0.021}$ & $0.099_{\pm 0.003}$ & $0.634_{\pm 0.041}$ & $0.117_{\pm 0.003}$ & $0.794_{\pm 0.055}$ \\
        \contextcae & $0.087_{\pm 0.001}$ & $0.495_{\pm 0.006}$ & $0.108_{\pm 0.002}$ & $0.625_{\pm 0.013}$ & $0.131_{\pm 0.005}$ & $0.803_{\pm 0.029}$ \\
        AE          & $0.082_{\pm 0.004}$ & $2.919_{\pm 3.836}$ & $0.103_{\pm 0.004}$ & $3.708_{\pm 4.885}$ & $0.121_{\pm 0.006}$ & $4.252_{\pm 5.724}$ \\
        \contextAE{}   & $0.084_{\pm 0.004}$ & $1.226_{\pm 0.811}$ & $0.105_{\pm 0.005}$ & $1.458_{\pm 0.947}$ & $0.124_{\pm 0.007}$ & $1.587_{\pm 0.843}$ \\
        FiLM+AE     & $\mathbf{0.021_{\pm 0.029}}$ & $0.518_{\pm 0.935}$ & $\mathbf{0.033_{\pm 0.053}}$ & $0.656_{\pm 1.136}$ & $0.079_{\pm 0.156}$ & $0.777_{\pm 0.958}$ \\
        SoftNcAE    & $0.045_{\pm 0.008}$ & $0.260_{\pm 0.036}$ & $0.057_{\pm 0.011}$ & $0.353_{\pm 0.050}$ & $0.086_{\pm 0.021}$ & $0.545_{\pm 0.069}$ \\
        \midrule
        \ncae{} (ours) & $0.027_{\pm 0.005}$ & $\mathbf{0.152_{\pm 0.016}}$ & $0.034_{\pm 0.007}$ & $\mathbf{0.218_{\pm 0.025}}$ & $\mathbf{0.060_{\pm 0.013}}$ & $\mathbf{0.420_{\pm 0.041}}$ \\
        \bottomrule
    \end{tabular}}
\end{table}

\subsection{Noise robustness: models trained on noisy data and evaluated on clean data}
\label{ssec:noise-robust-train}
% ---- LORENZ96 NOISE TABLES ----

\begin{table}[!h]
    \caption{
        \textbf{Lorenz96 --- state noise only} (mean$_{\pm\text{std}}$ RMSE).
        Gaussian noise $\mathcal{N}(0, \sigma^2)$ is added to states at training time.
    }
    \label{tab:lorenz-noise-state-test}
    \medskip
    \centering
    \resizebox{\linewidth}{!}{\begin{tabular}{l cc cc cc}
        \toprule
        & \multicolumn{2}{c}{$\sigma=0.10$} & \multicolumn{2}{c}{$\sigma=0.25$} & \multicolumn{2}{c}{$\sigma=0.50$} \\
        \cmidrule(lr){2-3} \cmidrule(lr){4-5} \cmidrule(lr){6-7}
        \textbf{Model} & \textbf{Pos.} & \textbf{Vel.} & \textbf{Pos.} & \textbf{Vel.} & \textbf{Pos.} & \textbf{Vel.} \\
        \midrule
        \cae        & $0.336_{\pm 0.154}$ & $0.613_{\pm 0.075}$ & $0.534_{\pm 0.251}$ & $0.908_{\pm 0.228}$ & $0.526_{\pm 0.288}$ & $1.179_{\pm 0.288}$ \\
        \contextcae & $0.453_{\pm 0.252}$ & $0.745_{\pm 0.128}$ & $0.499_{\pm 0.267}$ & $0.951_{\pm 0.253}$ & $0.409_{\pm 0.238}$ & $1.082_{\pm 0.191}$ \\
        AE          & $0.165_{\pm 0.061}$ & $21.077_{\pm 7.938}$ & $0.179_{\pm 0.045}$ & $17.901_{\pm 8.541}$ & $0.260_{\pm 0.048}$ & $16.748_{\pm 4.014}$ \\
        \contextAE{}   & $0.260_{\pm 0.046}$ & $10.125_{\pm 2.014}$ & $0.300_{\pm 0.053}$ & $9.014_{\pm 1.829}$ & $0.346_{\pm 0.054}$ & $8.221_{\pm 1.912}$ \\
        FiLM+AE     & $0.327_{\pm 0.072}$ & $2.792_{\pm 2.610}$ & $0.335_{\pm 0.060}$ & $2.633_{\pm 2.187}$ & $0.379_{\pm 0.054}$ & $2.674_{\pm 1.850}$ \\
        SoftNcAE    & $0.711_{\pm 0.215}$ & $2.299_{\pm 0.399}$ & $0.775_{\pm 0.231}$ & $2.294_{\pm 0.551}$ & $0.783_{\pm 0.165}$ & $2.428_{\pm 0.206}$ \\
        \midrule
        \ncae{} (ours) & $\mathbf{0.052_{\pm 0.030}}$ & $\mathbf{0.231_{\pm 0.139}}$ & $\mathbf{0.118_{\pm 0.140}}$ & $\mathbf{0.431_{\pm 0.413}}$ & $\mathbf{0.117_{\pm 0.062}}$ & $\mathbf{0.380_{\pm 0.120}}$ \\
        \bottomrule
    \end{tabular}}
\end{table}

\begin{table}[!h]
    \caption{
        \textbf{Lorenz96 --- context noise only} (mean$_{\pm\text{std}}$ RMSE).
        Gaussian noise $\mathcal{N}(0, \sigma^2)$ added to context at training time.
        Context-free architectures (\cae, AE) are omitted as they are insensitive to context noise.
    }
    \label{tab:lorenz-noise-ctx-test}
    \medskip
    \centering
    \resizebox{\linewidth}{!}{\begin{tabular}{l cc cc cc}
        \toprule
        & \multicolumn{2}{c}{$\sigma=0.10$} & \multicolumn{2}{c}{$\sigma=0.25$} & \multicolumn{2}{c}{$\sigma=0.50$} \\
        \cmidrule(lr){2-3} \cmidrule(lr){4-5} \cmidrule(lr){6-7}
        \textbf{Model} & \textbf{Pos.} & \textbf{Vel.} & \textbf{Pos.} & \textbf{Vel.} & \textbf{Pos.} & \textbf{Vel.} \\
        \midrule
        \contextcae & $0.136_{\pm 0.063}$ & $0.463_{\pm 0.064}$ & $\mathbf{0.166_{\pm 0.126}}$ & $0.498_{\pm 0.115}$ & $0.230_{\pm 0.200}$ & $0.603_{\pm 0.267}$ \\
        \contextAE{}   & $0.240_{\pm 0.039}$ & $11.363_{\pm 2.372}$ & $0.239_{\pm 0.038}$ & $11.714_{\pm 2.510}$ & $0.240_{\pm 0.039}$ & $11.663_{\pm 2.381}$ \\
        FiLM+AE     & $0.329_{\pm 0.037}$ & $2.897_{\pm 2.065}$ & $0.338_{\pm 0.032}$ & $3.207_{\pm 2.004}$ & $0.395_{\pm 0.033}$ & $3.893_{\pm 2.980}$ \\
        SoftNcAE    & $0.844_{\pm 0.111}$ & $2.265_{\pm 0.299}$ & $0.986_{\pm 0.113}$ & $2.575_{\pm 0.379}$ & $1.003_{\pm 0.121}$ & $2.609_{\pm 0.448}$ \\
        \midrule
        \ncae{} (ours) & $\mathbf{0.083_{\pm 0.013}}$ & $\mathbf{0.319_{\pm 0.031}}$ & $0.170_{\pm 0.035}$ & $\mathbf{0.400_{\pm 0.041}}$ & $\mathbf{0.194_{\pm 0.093}}$ & $\mathbf{0.465_{\pm 0.094}}$ \\
        \bottomrule
    \end{tabular}}
\end{table}

\begin{table}[!h]
    \caption{
        \textbf{Lorenz96 --- state and context noise} (mean$_{\pm\text{std}}$ RMSE).
        Gaussian noise $\mathcal{N}(0, \sigma^2)$ added to both states and context at training time.
        Context-free architectures (\cae, AE) are omitted as they are insensitive to context noise.
    }
    \label{tab:lorenz-noise-both-test}
    \medskip
    \centering
    \resizebox{\linewidth}{!}{\begin{tabular}{l cc cc cc}
        \toprule
        & \multicolumn{2}{c}{$\sigma=0.10$} & \multicolumn{2}{c}{$\sigma=0.25$} & \multicolumn{2}{c}{$\sigma=0.50$} \\
        \cmidrule(lr){2-3} \cmidrule(lr){4-5} \cmidrule(lr){6-7}
        \textbf{Model} & \textbf{Pos.} & \textbf{Vel.} & \textbf{Pos.} & \textbf{Vel.} & \textbf{Pos.} & \textbf{Vel.} \\
        \midrule
        \contextcae & $0.222_{\pm 0.087}$ & $0.516_{\pm 0.074}$ & $0.403_{\pm 0.228}$ & $0.842_{\pm 0.293}$ & $0.449_{\pm 0.235}$ & $0.982_{\pm 0.180}$ \\
        \contextAE{}   & $0.263_{\pm 0.048}$ & $9.966_{\pm 1.854}$ & $0.298_{\pm 0.056}$ & $8.990_{\pm 1.793}$ & $0.346_{\pm 0.057}$ & $8.289_{\pm 2.000}$ \\
        FiLM+AE     & $0.334_{\pm 0.036}$ & $2.363_{\pm 0.853}$ & $0.368_{\pm 0.034}$ & $3.417_{\pm 2.000}$ & $0.436_{\pm 0.038}$ & $4.144_{\pm 3.268}$ \\
        SoftNcAE    & $0.876_{\pm 0.143}$ & $2.342_{\pm 0.414}$ & $0.972_{\pm 0.104}$ & $2.605_{\pm 0.307}$ & $1.000_{\pm 0.078}$ & $2.684_{\pm 0.296}$ \\
        \midrule
        \ncae{} (ours) & $\mathbf{0.088_{\pm 0.021}}$ & $\mathbf{0.362_{\pm 0.043}}$ & $\mathbf{0.193_{\pm 0.047}}$ & $\mathbf{0.606_{\pm 0.064}}$ & $\mathbf{0.209_{\pm 0.051}}$ & $\mathbf{0.766_{\pm 0.051}}$ \\
        \bottomrule
    \end{tabular}}
\end{table}

% ---- PENDULUM NOISE TABLES (data pending) ----

\begin{table}[!h]
    \caption{
        \textbf{Pendulum --- state noise only} (mean$_{\pm\text{std}}$ RMSE).
        Gaussian noise $\mathcal{N}(0, \sigma^2)$ added to states at training time.
    }
    \label{tab:pend-noise-state-test}
    \medskip
    \centering
    \resizebox{\linewidth}{!}{\begin{tabular}{l cc cc cc}
        \toprule
        & \multicolumn{2}{c}{$\sigma=0.10$} & \multicolumn{2}{c}{$\sigma=0.25$} & \multicolumn{2}{c}{$\sigma=0.50$} \\
        \cmidrule(lr){2-3} \cmidrule(lr){4-5} \cmidrule(lr){6-7}
        \textbf{Model} & \textbf{Pos.} & \textbf{Vel.} & \textbf{Pos.} & \textbf{Vel.} & \textbf{Pos.} & \textbf{Vel.} \\
        \midrule
        \cae        & $0.047_{\pm 0.001}$ & $0.210_{\pm 0.002}$ & $0.045_{\pm 0.001}$ & $0.203_{\pm 0.004}$ & $0.045_{\pm 0.001}$ & $0.201_{\pm 0.003}$ \\
        \contextcae & $0.054_{\pm 0.001}$ & $0.224_{\pm 0.002}$ & $0.054_{\pm 0.001}$ & $0.224_{\pm 0.002}$ & $0.055_{\pm 0.002}$ & $0.226_{\pm 0.004}$ \\
        AE          & $0.050_{\pm 0.001}$ & $0.635_{\pm 0.223}$ & $0.050_{\pm 0.001}$ & $2.201_{\pm 4.181}$ & $0.051_{\pm 0.002}$ & $5.692_{\pm 5.969}$ \\
        \contextAE{}   & $0.051_{\pm 0.001}$ & $0.586_{\pm 0.303}$ & $0.050_{\pm 0.001}$ & $0.432_{\pm 0.045}$ & $0.053_{\pm 0.001}$ & $1.060_{\pm 1.018}$ \\
        FiLM+AE     & $\mathbf{0.010_{\pm 0.012}}$ & $0.220_{\pm 0.366}$ & $\mathbf{0.008_{\pm 0.007}}$ & $0.083_{\pm 0.032}$ & $\mathbf{0.013_{\pm 0.008}}$ & $0.230_{\pm 0.381}$ \\
        SoftNcAE    & $0.038_{\pm 0.027}$ & $0.142_{\pm 0.067}$ & $0.043_{\pm 0.026}$ & $0.159_{\pm 0.063}$ & $0.201_{\pm 0.440}$ & $0.292_{\pm 0.281}$ \\
        \midrule
        \ncae{} (ours) & $0.012_{\pm 0.001}$ & $\mathbf{0.058_{\pm 0.005}}$ & $0.013_{\pm 0.002}$ & $\mathbf{0.070_{\pm 0.006}}$ & $0.018_{\pm 0.002}$ & $\mathbf{0.091_{\pm 0.006}}$ \\
        \bottomrule
    \end{tabular}}
\end{table}

\begin{table}[!h]
    \caption{
        \textbf{Pendulum --- context noise only} (mean$_{\pm\text{std}}$ RMSE).
        Gaussian noise $\mathcal{N}(0, \sigma^2)$ added to context at training time.
        Context-free architectures (\cae, AE) are omitted as they are insensitive to context noise.
    }
    \label{tab:pend-noise-ctx-test}
    \medskip
    \centering
    \resizebox{\linewidth}{!}{\begin{tabular}{l cc cc cc}
        \toprule
        & \multicolumn{2}{c}{$\sigma=0.10$} & \multicolumn{2}{c}{$\sigma=0.25$} & \multicolumn{2}{c}{$\sigma=0.50$} \\
        \cmidrule(lr){2-3} \cmidrule(lr){4-5} \cmidrule(lr){6-7}
        \textbf{Model} & \textbf{Pos.} & \textbf{Vel.} & \textbf{Pos.} & \textbf{Vel.} & \textbf{Pos.} & \textbf{Vel.} \\
        \midrule
        \contextcae & $0.054_{\pm 0.001}$ & $0.225_{\pm 0.002}$ & $0.054_{\pm 0.001}$ & $0.225_{\pm 0.002}$ & $0.052_{\pm 0.001}$ & $0.222_{\pm 0.002}$ \\
        \contextAE{}   & $0.051_{\pm 0.001}$ & $0.492_{\pm 0.188}$ & $0.050_{\pm 0.001}$ & $0.859_{\pm 0.940}$ & $0.050_{\pm 0.001}$ & $0.582_{\pm 0.403}$ \\
        FiLM+AE     & $\mathbf{0.007_{\pm 0.008}}$ & $0.288_{\pm 0.706}$ & $\mathbf{0.006_{\pm 0.001}}$ & $\mathbf{0.055_{\pm 0.005}}$ & $\mathbf{0.013_{\pm 0.001}}$ & $0.167_{\pm 0.234}$ \\
        SoftNcAE    & $0.037_{\pm 0.016}$ & $0.146_{\pm 0.052}$ & $0.032_{\pm 0.014}$ & $0.131_{\pm 0.046}$ & $0.047_{\pm 0.046}$ & $0.174_{\pm 0.103}$\\
        \midrule
        \ncae{} (ours) & $0.013_{\pm 0.002}$ & $\mathbf{0.058_{\pm 0.009}}$ & $0.015_{\pm 0.003}$ & $0.069_{\pm 0.006}$ & $0.017_{\pm 0.001}$ & $\mathbf{0.088_{\pm 0.004}}$  \\
        \bottomrule
    \end{tabular}}
\end{table}

\begin{table}[!h]
    \caption{
        \textbf{Pendulum --- state and context noise} (mean$_{\pm\text{std}}$ RMSE).
        Gaussian noise $\mathcal{N}(0, \sigma^2)$ added to both states and context at training time.
        Context-free architectures (\cae, AE) are omitted as they are insensitive to context noise.
    }
    \label{tab:pend-noise-both-test}
    \medskip
    \centering
    \resizebox{\linewidth}{!}{\begin{tabular}{l cc cc cc}
        \toprule
        & \multicolumn{2}{c}{$\sigma=0.10$} & \multicolumn{2}{c}{$\sigma=0.25$} & \multicolumn{2}{c}{$\sigma=0.50$} \\
        \cmidrule(lr){2-3} \cmidrule(lr){4-5} \cmidrule(lr){6-7}
        \textbf{Model} & \textbf{Pos.} & \textbf{Vel.} & \textbf{Pos.} & \textbf{Vel.} & \textbf{Pos.} & \textbf{Vel.} \\
        \midrule
        \contextcae & $0.054_{\pm 0.001}$ & $0.223_{\pm 0.002}$ & $0.053_{\pm 0.001}$ & $0.221_{\pm 0.002}$ & $0.051_{\pm 0.001}$ & $0.220_{\pm 0.002}$ \\
        \contextAE{}   & $0.050_{\pm 0.001}$ & $0.627_{\pm 0.381}$ & $0.051_{\pm 0.001}$ & $0.522_{\pm 0.223}$ & $0.052_{\pm 0.001}$ & $0.587_{\pm 0.166}$ \\
        FiLM+AE     & $\mathbf{0.007_{\pm 0.007}}$ & $0.064_{\pm 0.022}$ & $\mathbf{0.008_{\pm 0.0004}}$ & $0.077_{\pm 0.004}$ & $\mathbf{0.017_{\pm 0.002}}$ & $0.365_{\pm 0.659}$ \\
        SoftNcAE    & $0.045_{\pm 0.043}$ & $0.159_{\pm 0.104}$ & $0.042_{\pm 0.010}$ & $0.160_{\pm 0.038}$ & $0.045_{\pm 0.015}$ & $0.175_{\pm 0.033}$ \\
        \midrule
        \ncae{} (ours) & $0.013_{\pm 0.002}$ & $\mathbf{0.064_{\pm 0.006}}$ & $0.014_{\pm 0.001}$ & $\mathbf{0.073_{\pm 0.005}}$ & $0.021_{\pm 0.002}$ & $\mathbf{0.107_{\pm 0.005}}$ \\
        \bottomrule
    \end{tabular}}
\end{table}
\clearpage
\subsection{Noise robustness: models trained on clean data and evaluated on noisy data}
\label{ssec:noise-robust-test}
% auto-generated by noise_comparison_table_from_metrics — do not edit by hand
\begin{table}[!h]
    \caption{
        \textbf{Lorenz96 --- state and context noise} (mean$_{\pm\text{std}}$ RMSE).
        Gaussian noise $\mathcal{N}(0, \sigma^2)$ added to both states and context at test time; models trained on clean data.
        Context-free architectures (\cae, AE) are omitted as they are insensitive to context noise.
    }
    \label{tab:lorenz_noise_both}
    \medskip
    \centering
    \resizebox{\linewidth}{!}{\begin{tabular}{l cc cc cc}
        \toprule
        & \multicolumn{2}{c}{$\sigma=0.10$} & \multicolumn{2}{c}{$\sigma=0.25$} & \multicolumn{2}{c}{$\sigma=0.50$} \\
        \cmidrule(lr){2-3} \cmidrule(lr){4-5} \cmidrule(lr){6-7}
        \textbf{Model} & \textbf{Pos.} & \textbf{Vel.} & \textbf{Pos.} & \textbf{Vel.} & \textbf{Pos.} & \textbf{Vel.} \\
        \midrule
        \contextcae & $0.680_{\pm 0.297}$ & $1.928_{\pm 0.592}$ & $1.413_{\pm 0.579}$ & $4.015_{\pm 1.261}$ & $2.776_{\pm 1.232}$ & $7.099_{\pm 2.254}$ \\
        \contextAE{}   & $0.378_{\pm 0.093}$ & $11.611_{\pm 2.839}$ & $0.725_{\pm 0.202}$ & $11.963_{\pm 2.820}$ & $1.272_{\pm 0.357}$ & $12.988_{\pm 2.848}$ \\
        FiLM+AE     & $0.462_{\pm 0.037}$ & $3.406_{\pm 2.773}$ & $1.348_{\pm 0.733}$ & $7.264_{\pm 4.455}$ & $8.677_{\pm 10.330}$ & $34.120_{\pm 40.071}$ \\
        SoftNcAE    & $0.820_{\pm 0.192}$ & $2.319_{\pm 0.507}$ & $0.962_{\pm 0.149}$ & $2.589_{\pm 0.410}$ & $1.210_{\pm 0.152}$ & $3.063_{\pm 0.351}$ \\
        \midrule
        \ncae{} (ours) & $\mathbf{0.236_{\pm 0.024}}$ & $\mathbf{0.657_{\pm 0.089}}$ & $\mathbf{0.536_{\pm 0.047}}$ & $\mathbf{1.373_{\pm 0.108}}$ & $\mathbf{1.001_{\pm 0.099}}$ & $\mathbf{2.469_{\pm 0.235}}$ \\
        \bottomrule
    \end{tabular}}
\end{table}

% auto-generated by noise_comparison_table_from_metrics — do not edit by hand
\begin{table}[!h]
    \caption{
        \textbf{Lorenz96 --- context noise only} (mean$_{\pm\text{std}}$ RMSE).
        Gaussian noise $\mathcal{N}(0, \sigma^2)$ added to context at test time; models trained on clean data.
        Context-free architectures (\cae, AE) are omitted as they are insensitive to context noise.
    }
    \label{tab:lorenz_noise_context}
    \medskip
    \centering
    \resizebox{\linewidth}{!}{\begin{tabular}{l cc cc cc}
        \toprule
        & \multicolumn{2}{c}{$\sigma=0.10$} & \multicolumn{2}{c}{$\sigma=0.25$} & \multicolumn{2}{c}{$\sigma=0.50$} \\
        \cmidrule(lr){2-3} \cmidrule(lr){4-5} \cmidrule(lr){6-7}
        \textbf{Model} & \textbf{Pos.} & \textbf{Vel.} & \textbf{Pos.} & \textbf{Vel.} & \textbf{Pos.} & \textbf{Vel.} \\
        \midrule
        \contextcae & $0.319_{\pm 0.254}$ & $0.644_{\pm 0.204}$ & $0.319_{\pm 0.253}$ & $\mathbf{0.645_{\pm 0.204}}$ & $0.320_{\pm 0.253}$ & $\mathbf{0.650_{\pm 0.203}}$ \\
        \contextAE{}   & $0.237_{\pm 0.046}$ & $11.142_{\pm 2.794}$ & $\mathbf{0.237_{\pm 0.046}}$ & $11.108_{\pm 2.817}$ & $\mathbf{0.239_{\pm 0.046}}$ & $11.084_{\pm 2.761}$ \\
        FiLM+AE     & $0.370_{\pm 0.066}$ & $3.004_{\pm 2.735}$ & $0.848_{\pm 0.443}$ & $4.747_{\pm 2.597}$ & $7.147_{\pm 9.760}$ & $31.344_{\pm 48.653}$ \\
        SoftNcAE    & $0.810_{\pm 0.195}$ & $2.306_{\pm 0.512}$ & $0.920_{\pm 0.157}$ & $2.548_{\pm 0.422}$ & $1.085_{\pm 0.137}$ & $2.909_{\pm 0.353}$ \\
        \midrule
        \ncae{} (ours) & $\mathbf{0.148_{\pm 0.039}}$ & $\mathbf{0.503_{\pm 0.158}}$ & $0.343_{\pm 0.038}$ & $0.993_{\pm 0.145}$ & $0.637_{\pm 0.066}$ & $1.712_{\pm 0.178}$ \\
        \bottomrule
    \end{tabular}}
\end{table}

% auto-generated by noise_comparison_table_from_metrics — do not edit by hand
\begin{table}[!h]
    \caption{
        \textbf{Lorenz96 --- state noise only} (mean$_{\pm\text{std}}$ RMSE).
        Gaussian noise $\mathcal{N}(0, \sigma^2)$ added to states at test time; models trained on clean data.
    }
    \label{tab:lorenz_noise_state}
    \medskip
    \centering
    \resizebox{\linewidth}{!}{\begin{tabular}{l cc cc cc}
        \toprule
        & \multicolumn{2}{c}{$\sigma=0.10$} & \multicolumn{2}{c}{$\sigma=0.25$} & \multicolumn{2}{c}{$\sigma=0.50$} \\
        \cmidrule(lr){2-3} \cmidrule(lr){4-5} \cmidrule(lr){6-7}
        \textbf{Model} & \textbf{Pos.} & \textbf{Vel.} & \textbf{Pos.} & \textbf{Vel.} & \textbf{Pos.} & \textbf{Vel.} \\
        \midrule
        \cae        & $0.725_{\pm 0.277}$ & $1.822_{\pm 0.366}$ & $1.406_{\pm 0.456}$ & $3.914_{\pm 0.796}$ & $2.690_{\pm 0.882}$ & $6.983_{\pm 1.416}$ \\
        \contextcae & $0.680_{\pm 0.297}$ & $1.928_{\pm 0.592}$ & $1.413_{\pm 0.579}$ & $4.014_{\pm 1.261}$ & $2.776_{\pm 1.232}$ & $7.098_{\pm 2.254}$ \\
        AE          & $0.255_{\pm 0.066}$ & $19.152_{\pm 6.605}$ & $0.468_{\pm 0.098}$ & $18.846_{\pm 6.660}$ & $0.836_{\pm 0.151}$ & $18.608_{\pm 5.445}$ \\
        \contextAE{}   & $0.378_{\pm 0.093}$ & $11.595_{\pm 2.828}$ & $0.724_{\pm 0.202}$ & $11.984_{\pm 2.810}$ & $1.271_{\pm 0.356}$ & $13.071_{\pm 2.810}$ \\
        FiLM+AE     & $0.416_{\pm 0.043}$ & $3.281_{\pm 2.815}$ & $0.801_{\pm 0.292}$ & $4.857_{\pm 2.840}$ & $2.015_{\pm 1.711}$ & $9.491_{\pm 7.808}$ \\
        SoftNcAE    & $0.757_{\pm 0.222}$ & $2.247_{\pm 0.538}$ & $0.801_{\pm 0.203}$ & $2.294_{\pm 0.517}$ & $0.929_{\pm 0.163}$ & $2.443_{\pm 0.460}$ \\
        \midrule
        \ncae{} (ours) & $\mathbf{0.181_{\pm 0.032}}$ & $\mathbf{0.515_{\pm 0.111}}$ & $\mathbf{0.362_{\pm 0.053}}$ & $\mathbf{0.916_{\pm 0.125}}$ & $\mathbf{0.705_{\pm 0.121}}$ & $\mathbf{1.680_{\pm 0.283}}$ \\
        \bottomrule
    \end{tabular}}
\end{table}

% auto-generated by noise_comparison_table_from_metrics — do not edit by hand
\begin{table}[!h]
    \caption{
        \textbf{Pendulum --- state and context noise} (mean$_{\pm\text{std}}$ RMSE).
        Gaussian noise $\mathcal{N}(0, \sigma^2)$ added to both states and context at test time; models trained on clean data.
        Context-free architectures (\cae, AE) are omitted as they are insensitive to context noise.
    }
    \label{tab:pendulum_noise_both}
    \medskip
    \centering
    \resizebox{\linewidth}{!}{\begin{tabular}{l cc cc cc}
        \toprule
        & \multicolumn{2}{c}{$\sigma=0.10$} & \multicolumn{2}{c}{$\sigma=0.25$} & \multicolumn{2}{c}{$\sigma=0.50$} \\
        \cmidrule(lr){2-3} \cmidrule(lr){4-5} \cmidrule(lr){6-7}
        \textbf{Model} & \textbf{Pos.} & \textbf{Vel.} & \textbf{Pos.} & \textbf{Vel.} & \textbf{Pos.} & \textbf{Vel.} \\
        \midrule
        \contextcae & $0.063_{\pm 0.002}$ & $0.237_{\pm 0.002}$ & $0.098_{\pm 0.006}$ & $0.291_{\pm 0.010}$ & $0.192_{\pm 0.024}$ & $0.496_{\pm 0.086}$ \\
        \contextAE{}   & $0.055_{\pm 0.002}$ & $0.908_{\pm 0.809}$ & $0.070_{\pm 0.002}$ & $0.975_{\pm 0.790}$ & $\mathbf{0.108_{\pm 0.007}}$ & $1.137_{\pm 0.853}$ \\
        FiLM+AE     & $0.027_{\pm 0.010}$ & $0.376_{\pm 0.755}$ & $0.063_{\pm 0.019}$ & $0.461_{\pm 0.775}$ & $0.126_{\pm 0.040}$ & $0.639_{\pm 0.883}$ \\
        SoftNcAE    & $0.035_{\pm 0.004}$ & $0.129_{\pm 0.016}$ & $\mathbf{0.063_{\pm 0.004}}$ & $\mathbf{0.190_{\pm 0.011}}$ & $0.128_{\pm 0.010}$ & $\mathbf{0.357_{\pm 0.032}}$ \\
        \midrule
        \ncae{} (ours) & $\mathbf{0.027_{\pm 0.002}}$ & $\mathbf{0.091_{\pm 0.007}}$ & $0.068_{\pm 0.008}$ & $0.203_{\pm 0.022}$ & $0.180_{\pm 0.036}$ & $0.520_{\pm 0.137}$ \\
        \bottomrule
    \end{tabular}}
\end{table}

% auto-generated by noise_comparison_table_from_metrics — do not edit by hand
\begin{table}[!h]
    \caption{
        \textbf{Pendulum --- context noise only} (mean$_{\pm\text{std}}$ RMSE).
        Gaussian noise $\mathcal{N}(0, \sigma^2)$ added to context at test time; models trained on clean data.
        Context-free architectures (\cae, AE) are omitted as they are insensitive to context noise.
    }
    \label{tab:pendulum_noise_context}
    \medskip
    \centering
    \resizebox{\linewidth}{!}{\begin{tabular}{l cc cc cc}
        \toprule
        & \multicolumn{2}{c}{$\sigma=0.10$} & \multicolumn{2}{c}{$\sigma=0.25$} & \multicolumn{2}{c}{$\sigma=0.50$} \\
        \cmidrule(lr){2-3} \cmidrule(lr){4-5} \cmidrule(lr){6-7}
        \textbf{Model} & \textbf{Pos.} & \textbf{Vel.} & \textbf{Pos.} & \textbf{Vel.} & \textbf{Pos.} & \textbf{Vel.} \\
        \midrule
        \contextcae & $0.054_{\pm 0.001}$ & $0.226_{\pm 0.001}$ & $0.055_{\pm 0.001}$ & $0.227_{\pm 0.001}$ & $0.055_{\pm 0.001}$ & $0.229_{\pm 0.001}$ \\
        \contextAE{}   & $0.051_{\pm 0.002}$ & $0.881_{\pm 0.802}$ & $0.051_{\pm 0.002}$ & $0.869_{\pm 0.774}$ & $0.051_{\pm 0.002}$ & $0.882_{\pm 0.811}$ \\
        FiLM+AE     & $\mathbf{0.010_{\pm 0.011}}$ & $0.351_{\pm 0.750}$ & $0.018_{\pm 0.020}$ & $0.365_{\pm 0.745}$ & $0.034_{\pm 0.041}$ & $0.399_{\pm 0.740}$ \\
        SoftNcAE    & $0.026_{\pm 0.005}$ & $0.115_{\pm 0.019}$ & $0.028_{\pm 0.005}$ & $0.120_{\pm 0.018}$ & $0.035_{\pm 0.005}$ & $0.138_{\pm 0.016}$ \\
        \midrule
        \ncae{} (ours) & $0.014_{\pm 0.003}$ & $\mathbf{0.061_{\pm 0.007}}$ & $\mathbf{0.017_{\pm 0.003}}$ & $\mathbf{0.071_{\pm 0.006}}$ & $\mathbf{0.025_{\pm 0.003}}$ & $\mathbf{0.100_{\pm 0.011}}$ \\
        \bottomrule
    \end{tabular}}
\end{table}

% auto-generated by noise_comparison_table_from_metrics — do not edit by hand
\begin{table}[!h]
    \caption{
        \textbf{Pendulum --- state noise only} (mean$_{\pm\text{std}}$ RMSE).
        Gaussian noise $\mathcal{N}(0, \sigma^2)$ added to states at test time; models trained on clean data.
    }
    \label{tab:pendulum_noise_state}
    \medskip
    \centering
    \resizebox{\linewidth}{!}{\begin{tabular}{l cc cc cc}
        \toprule
        & \multicolumn{2}{c}{$\sigma=0.10$} & \multicolumn{2}{c}{$\sigma=0.25$} & \multicolumn{2}{c}{$\sigma=0.50$} \\
        \cmidrule(lr){2-3} \cmidrule(lr){4-5} \cmidrule(lr){6-7}
        \textbf{Model} & \textbf{Pos.} & \textbf{Vel.} & \textbf{Pos.} & \textbf{Vel.} & \textbf{Pos.} & \textbf{Vel.} \\
        \midrule
        \cae        & $0.063_{\pm 0.004}$ & $0.245_{\pm 0.011}$ & $0.139_{\pm 0.041}$ & $0.423_{\pm 0.129}$ & $0.435_{\pm 0.248}$ & $1.150_{\pm 0.660}$ \\
        \contextcae & $0.063_{\pm 0.002}$ & $0.237_{\pm 0.002}$ & $0.097_{\pm 0.006}$ & $0.290_{\pm 0.009}$ & $0.190_{\pm 0.023}$ & $0.491_{\pm 0.082}$ \\
        AE          & $0.053_{\pm 0.001}$ & $2.040_{\pm 2.974}$ & $0.066_{\pm 0.002}$ & $2.220_{\pm 3.065}$ & $\mathbf{0.097_{\pm 0.003}}$ & $2.464_{\pm 3.355}$ \\
        \contextAE{}   & $0.055_{\pm 0.002}$ & $0.905_{\pm 0.805}$ & $0.070_{\pm 0.002}$ & $0.988_{\pm 0.824}$ & $0.108_{\pm 0.007}$ & $1.136_{\pm 0.845}$ \\
        FiLM+AE     & $\mathbf{0.026_{\pm 0.009}}$ & $0.374_{\pm 0.755}$ & $\mathbf{0.060_{\pm 0.011}}$ & $0.455_{\pm 0.786}$ & $0.116_{\pm 0.016}$ & $0.619_{\pm 0.886}$ \\
        SoftNcAE    & $0.034_{\pm 0.004}$ & $0.128_{\pm 0.016}$ & $0.062_{\pm 0.004}$ & $\mathbf{0.186_{\pm 0.012}}$ & $0.124_{\pm 0.010}$ & $\mathbf{0.344_{\pm 0.034}}$ \\
        \midrule
        \ncae{} (ours) & $0.027_{\pm 0.002}$ & $\mathbf{0.090_{\pm 0.007}}$ & $0.067_{\pm 0.008}$ & $0.197_{\pm 0.021}$ & $0.175_{\pm 0.035}$ & $0.498_{\pm 0.123}$ \\
        \bottomrule
    \end{tabular}}
\end{table}

\clearpage
\section{Idempotency tables}
\begin{table}[!h]
    \centering
    \caption{
        \textbf{Idempotency error} $\|P^k(\vec{x}) - P^{k-1}(\vec{x})\|$ (mean $\pm$ std) over successive projection steps $k$, reported on the Lorenz96 system.
        Constrained architectures (\cae, \ncae) maintain error $\approx 0$ by construction; $\infty$ indicates numerical blow-up.
    }\medskip
    \label{tab:idempotency}
    \resizebox{\linewidth}{!}{\begin{tabular}{lcccc}
        \toprule
        \textbf{Method} & \textbf{Step 1} & \textbf{Step 5} & \textbf{Step 10} & \textbf{Step 15} \\
        \midrule
        \cae            & $1.46 \times 10^{-5}_{\pm 4.99 \times 10^{-6}}$ & $1.38 \times 10^{-5}_{\pm 4.87 \times 10^{-6}}$ & $1.33 \times 10^{-5}_{\pm 4.85 \times 10^{-6}}$ & $1.28 \times 10^{-5}_{\pm 4.99 \times 10^{-6}}$ \\
        \contextcae     & $1.68 \times 10^{0}_{\pm 1.36 \times 10^{0}}$ & $8.45 \times 10^{0}_{\pm 2.19 \times 10^{1}}$ & $1.71 \times 10^{2}_{\pm 5.84 \times 10^{2}}$ & $4.22 \times 10^{3}_{\pm 1.58 \times 10^{4}}$ \\
        AE              & $4.99 \times 10^{-1}_{\pm 2.82 \times 10^{-1}}$ & $5.26 \times 10^{-1}_{\pm 3.02 \times 10^{-1}}$ & $2.58 \times 10^{0}_{\pm 4.22 \times 10^{0}}$ & $6.73 \times 10^{1}_{\pm 1.83 \times 10^{2}}$ \\
        \contextAE{}       & $8.10 \times 10^{-1}_{\pm 3.79 \times 10^{-1}}$ & $1.24 \times 10^{0}_{\pm 9.71 \times 10^{-1}}$ & $2.25 \times 10^{0}_{\pm 1.66 \times 10^{0}}$ & $8.65 \times 10^{0}_{\pm 1.45 \times 10^{1}}$ \\
        FiLM+AE         & $1.13 \times 10^{0}_{\pm 2.91 \times 10^{-1}}$ & $1.29 \times 10^{6}_{\pm 3.86 \times 10^{6}}$ & $\infty$ & $\infty$ \\
        SoftNcAE        & $9.98 \times 10^{-1}_{\pm 6.87 \times 10^{-1}}$ & $2.27 \times 10^{0}_{\pm 2.72 \times 10^{0}}$ & $1.01 \times 10^{1}_{\pm 2.62 \times 10^{1}}$ & $8.92 \times 10^{1}_{\pm 3.82 \times 10^{2}}$ \\
        \midrule
        \ncae{} (ours)  & $5.28 \times 10^{-6}_{\pm 2.82 \times 10^{-6}}$ & $5.01 \times 10^{-6}_{\pm 2.77 \times 10^{-6}}$ & $4.85 \times 10^{-6}_{\pm 2.75 \times 10^{-6}}$ & $4.73 \times 10^{-6}_{\pm 2.76 \times 10^{-6}}$ \\
        \bottomrule
    \end{tabular}}
\end{table}

\begin{table}[!h]
    \centering
    \caption{
        \textbf{Idempotency error} $\|P^k(\vec{x}) - P^{k-1}(\vec{x})\|$ (mean $\pm$ std) over successive projection steps $k$, reported on the pendulum system.
        Constrained architectures (\cae, \ncae) maintain error $\approx 0$ by construction.
    }\medskip
    \label{tab:pendulum_all_idempotency}
    \resizebox{\linewidth}{!}{\begin{tabular}{lcccc}
        \toprule
        \textbf{Method} & \textbf{Step 1} & \textbf{Step 5} & \textbf{Step 10} & \textbf{Step 15} \\
        \midrule
        \cae            & $1.26 \times 10^{-5}_{\pm 2.53 \times 10^{-6}}$ & $1.25 \times 10^{-5}_{\pm 2.52 \times 10^{-6}}$ & $1.24 \times 10^{-5}_{\pm 2.52 \times 10^{-6}}$ & $1.24 \times 10^{-5}_{\pm 2.51 \times 10^{-6}}$ \\
        \contextcae     & $2.72 \times 10^{-1}_{\pm 3.33 \times 10^{-2}}$ & $3.09 \times 10^{-1}_{\pm 5.96 \times 10^{-2}}$ & $3.38 \times 10^{1}_{\pm 1.00 \times 10^{2}}$ & $1.20 \times 10^{5}_{\pm 3.60 \times 10^{5}}$ \\
        AE              & $9.77 \times 10^{-2}_{\pm 1.65 \times 10^{-2}}$ & $8.80 \times 10^{-2}_{\pm 1.52 \times 10^{-2}}$ & $8.18 \times 10^{-2}_{\pm 1.57 \times 10^{-2}}$ & $7.85 \times 10^{-2}_{\pm 1.64 \times 10^{-2}}$ \\
        \contextAE{}       & $1.59 \times 10^{-1}_{\pm 1.04 \times 10^{-2}}$ & $1.46 \times 10^{-1}_{\pm 1.27 \times 10^{-2}}$ & $1.37 \times 10^{-1}_{\pm 1.54 \times 10^{-2}}$ & $1.33 \times 10^{-1}_{\pm 1.92 \times 10^{-2}}$ \\
        FiLM+AE         & $7.24 \times 10^{-2}_{\pm 1.02 \times 10^{-1}}$ & $8.06 \times 10^{-2}_{\pm 1.28 \times 10^{-1}}$ & $1.01 \times 10^{-1}_{\pm 1.88 \times 10^{-1}}$ & $1.31 \times 10^{-1}_{\pm 2.73 \times 10^{-1}}$ \\
        SoftNcAE        & $8.95 \times 10^{-2}_{\pm 5.34 \times 10^{-2}}$ & $9.71 \times 10^{-2}_{\pm 7.29 \times 10^{-2}}$ & $9.96 \times 10^{-2}_{\pm 7.29 \times 10^{-2}}$ & $1.02 \times 10^{-1}_{\pm 6.50 \times 10^{-2}}$ \\
        \midrule
        \ncae{} (ours)  & $1.20 \times 10^{-5}_{\pm 2.96 \times 10^{-6}}$ & $1.18 \times 10^{-5}_{\pm 2.96 \times 10^{-6}}$ & $1.18 \times 10^{-5}_{\pm 2.96 \times 10^{-6}}$ & $1.18 \times 10^{-5}_{\pm 2.96 \times 10^{-6}}$ \\
        \bottomrule
    \end{tabular}}
\end{table}
 
% NeurIPS checklist — must be included, does not count toward page limit
% \clearpage
% \input{checklist.tex}

\end{document}